\documentclass{article}
\usepackage{iclr2025_conference,times}
\iclrfinalcopy

\usepackage{hyperref}
\usepackage{url}
\usepackage{comment}  
\usepackage{comment}   
\usepackage{float}
\usepackage{longtable}
\usepackage{colortbl}
\usepackage{UserDefinedCommands}
\usepackage[scr=boondox]{mathalpha}
\usepackage{soul}

\setcounter{footnote}{2}

\usepackage{lipsum}

\makeatletter
\newcommand{\hathat}[1]{%
\begingroup%
  \let\macc@kerna\z@%
  \let\macc@kernb\z@%
  \let\macc@nucleus\@empty%
  \hat{\raisebox{.2ex}{\vphantom{\ensuremath{#1}}}\smash{\hat{#1}}}%
\endgroup%
}
\makeatother

\newcommand{\neurips}[1]{{}}
\newcommand{\revised}[1]{{}}

\definecolor{darkblue}{RGB}{0, 0, 210}

\def\attn{\mathrm{Attn}}
\def\mlp{\mathrm{MLP}}
\def\tf{\mathrm{TF}}

\def\mask{\mathrm{mask}}

\def\relu{\mathrm{ReLU}}

\def\last{\mathrm{last}}
\def\ST{\mathcal{S}}

\title{On the Learn-to-Optimize Capabilities of \\Transformers in In-Context Sparse Recovery}

\setcounter{footnote}{0}
\author{
Renpu Liu$^{1}$\thanks{These authors contributed equally to this work.}\,,~~Ruida Zhou$^{2}$\footnotemark[1]\,,~~Cong Shen$^{1}$,~~Jing Yang$^{1}$\thanks{Corresponding author.}\\ 
$^1$University of Virginia, $^2$University of California, Los Angeles \\ 
\texttt{\{pzw7bx, cong, yangjing\}@virginia.edu}\\
\texttt{ruida@g.ucla.edu}
}
\begin{document}

\maketitle

\begin{abstract}
An intriguing property of the Transformer is its ability to perform in-context learning (ICL), where the Transformer can solve different inference tasks without parameter updating based on the contextual information provided by the corresponding input-output demonstration pairs. It has been theoretically proved that ICL is enabled by the capability of Transformers to perform gradient-descent algorithms~\citep{vonoswald2023transformers, bai2023transformers}. This work takes a step further and shows that Transformers can perform learning-to-optimize (L2O) algorithms. Specifically, for the ICL sparse recovery (formulated as LASSO) tasks, we show that a $K$-layer Transformer can perform an L2O algorithm with a provable {\it convergence rate linear in $K$}. This provides a new perspective explaining the superior ICL capability of Transformers, even with only a few layers, which cannot be achieved by the standard gradient-descent algorithms. 
Moreover, unlike the conventional L2O algorithms that require the measurement matrix involved in training to match that in testing, the trained Transformer is able to solve sparse recovery problems generated with different measurement matrices.  Besides, Transformers as an L2O algorithm can leverage structural information embedded in the training tasks to accelerate its convergence during ICL, and generalize across different lengths of demonstration pairs, where conventional L2O algorithms typically struggle or fail. Such theoretical findings are supported by our experimental results.
\end{abstract}

\section{Introduction}

Since its introduction in~\citet{vaswani2017attention}, Transformers have become the backbone in various fields such as natural language processing~\citep{radford2018improving, devlin2019bert}, computer vision~\citep{dosovitskiy2021image} and reinforcement learning~\citep{chen2021decision}, significantly influencing subsequent research and applications.
A notable capability of Transformers is their good performance for in-context learning (ICL)~\citep{brown2020language}, i.e., without further parameter updating, Transformers can perform new inference tasks based on the contextual information embedded in example input-output pairs contained in the prompt. Such ICL capability facilitates state-of-the-art few-shot performances across a multitude of tasks, such as reasoning and language understanding tasks in natural language processing~\citep{chowdhery2022palm}, in-context dialog generation~\citep{thoppilan2022lamda} and in-context linear regression~\citep{garg2023transformers,fu2023transformers}.

Given the significance of transformers' ICL capabilities, extensive research efforts have been directed toward understanding the mechanisms behind their ICL performance. In this context, the ICL capability of a pre-trained Transformer is understood as the Transformer's implicit implementation of learning algorithms during the forward pass. \citet{vonoswald2023transformers}, \citet{dai2023gpt}, and \citet{bai2023transformers} suggest that these learning algorithms closely approximate gradient-descent-based optimizers, thus making the Transformer a {\it universal} solver for various ICL tasks. Specifically, these works demonstrate that Transformers can approximate gradient descent steps implicitly through some specific constructions of their parameters, enabling them to adapt to new data points during inference without explicit re-training.

However, it is well known that gradient-descent-based algorithms are not efficient in solving complicated optimization problems and thus may not be sufficient to explain the superior performance of Transformers on a plethora of ICL tasks. A recent work \citep{ahn2023transformers} suggests that instead of gradient descent, Transformers actually perform pre-conditioned gradient descent during ICL. In other words, it learns a pre-conditioner during pre-training and then utilizes it during ICL to expedite the optimization process. \citet{von2023uncovering} recently demonstrates that the forward pass of a trained transformer can implement meta-optimization algorithms, i.e., it can implicitly define internal objective functions and then optimize these objectives to generate predictions. Similarly, \citet{zhang2023trained} show that a mesa-optimizer embedding the covariance matrix of input data can efficiently solve linear regression tasks. Such interpretation of the ICL mechanism shares the same essence as
Learning-to-Optimize (L2O) algorithms, and motivates the following hypothesis:

\begin{center}
\emph{Transformer does not simply implement a universal optimization algorithm during ICL. Rather, it extracts useful information from the given dataset during pre-training and then utilizes such information to generate an optimization algorithm that best suits the given ICL task.}  
\end{center}

In this paper, we examine this hypothesis through the lens of \emph{in-context sparse recovery}. Sparse recovery is a classical signal processing problem that is of significant practical interest across various domains, such as compressive sensing in medical imaging \citep{shen2017deep} and spectrum sensing \citep{elad2010sparse}.  
Recent works show that Transformers are able to implement gradient descent-based algorithms with sublinear convergence rates for in-context sparse recovery~\citep{bai2023transformers,chen2024transformers}. However, empirical findings indicate that Transformers can solve in-context sparse recovery more efficiently than gradient descent-based approaches \citep{bai2023transformers}. Meanwhile, there exists a plethora of L2O algorithms that solve the classical sparse recovery problem efficiently with linear convergence guarantees \citep{gregor2010learning,chen2018theoretical,liu2019alista}. Therefore, examining the L2O capabilities of Transformers in solving the in-context sparse recovery task becomes a promising direction 
and may serve as a perfect example to validate our hypothesis.
Our main contributions are as follows.
\begin{itemize}[leftmargin=*]\itemsep=0pt 
\item First, we demonstrate that Transformers can implement an L2O algorithm for in-context sparse recovery, and theoretically prove that a $K$-layer Transformer as an L2O algorithm can recover the underlying sparse vector in-context at a convergence rate linear in $K$. 
The linear convergence results in this work significantly improve the state-of-the-art convergence results for in-context sparse recovery and validate our previous hypothesis.

\item Second, we show that the Transformer as an L2O algorithm can actually outperform traditional L2O algorithms for sparse recovery in several aspects: 1) It does not require the measurement matrices involved in training to be the same, which is in stark contrast to traditional L2O algorithms \citep{gregor2010learning,chen2018theoretical,liu2019alista} for sparse recovery, and allows more flexibility to solve various in-context sparse recovery tasks. 2) It allows different numbers of measurements (i.e., prompt length) used for in-context sparse recovery, with guaranteed recovery performance as long as the number of measurements is sufficiently large. 3) It can extract structural properties of the underlying sparse vectors from the training data and utilize them to expedite its ICL convergence. 

\item We compare the ICL performances of Transformers with traditional iterative algorithms and L2O algorithms for sparse recovery empirically. Our experimental results indicate that Transformers substantially outperform traditional gradient-descent-based iterative algorithms, and achieve comparable performances compared with L2O algorithms that are trained and tested using data generated with the same measurement matrix. This supports our claim that Transformers can implement L2O algorithms during ICL. Besides, Transformers also demonstrate remarkable generalization capability when the measurement matrix varies, and achieve accelerated convergence when additional structure is imposed on the underlying sparse vectors, supporting our theoretical findings.
\end{itemize}

\section{Related Works}

\paragraph{ICL Mechanism for Transformers.}

\citet{brown2020language} first show that GPT-3, a Transformer-based LLM, can perform new tasks from input-output pairs without parameter updates, suggesting its ICL ability. This intriguing phenomenon of Transformers has attracted many attentions, leading to various interpretations and hypotheses about its underlying mechanism. For example, \citet{han2023explaining} empirically hypothesize that Transformers perform kernel regression with internal representations when facing in-context examples, and \citet{fu2023transformers} empirically show that Transformers learn to implement an algorithm similar to iterative Newton’s method for ICL tasks. 

To better understand the ICL mechanism in large Transformers, existing works aim to demonstrate the Transformer's \emph{capability} for ICL by construction, e.g., showing that Transformers can perform gradient-based algorithms to solve ICL tasks by iteratively performing gradient descent layer by layer. In this category, \citet{akyurek2022learning} show that by construction, Transformers can implement gradient descent-based algorithms for linear regression problems. \citet{vonoswald2023transformers} construct explicit weights for a Transformer, claiming it can perform gradient descent on linear and non-linear regression tasks. \citet{bai2023transformers} provide constructions such that Transformers can make selections between different gradient-based algorithms. \citet{ahn2023transformers} reason the ICL capability of Transformers to their ability to implement a pre-conditioned gradient descent for linear regression tasks, where the pre-condition matrix is learned from pre-training. Recently, \citet{von2023uncovering} demonstrates that the forward pass of a trained transformer can implement meta-optimization algorithms, i.e., it can implicitly define internal objective functions and then optimize these objectives to generate predictions. Similarly, \citet{zhang2023trained} show that a mesa-optimizer embedding the covariance matrix of input data can efficiently solve linear regression tasks.
Our work belongs to this category, where we construct a Transformer structure that can implement an L2O algorithm to effectively solve the in-context sparse recovery problem.

\paragraph{L2O Algorithms for Sparse Recovery.} 

The L2O framework, as summarized in the review paper~\citep{chen2022learning}, is an optimization paradigm that develops an optimization method (i.e., a solver) by training across a set of similar problems (tasks) sampled from a task distribution. While the training process is often offline and time-consuming, the objective of L2O is to improve the optimization efficiency and accuracy when the method is deployed online and any new task sampled from the same distribution is encountered.

As a general optimization framework, L2O has demonstrated its advantage over classical static optimization frameworks in various optimization problems and applications. In this work, we focus on the L2O algorithms relevant to sparse recovery here and leave more discussions on general L2O in \Cref{sec:additional-refs}. Sparse recovery, typically formulated as a least absolute shrinkage and selection operator (LASSO), has many important applications like magnetic resonance imaging~\citep{meng2023artificial} and stock market forecasting~\citep{roy2015stock}, thus motivates the design of efficient algorithms. E.g., the iterative soft-thresholding algorithm (ISTA)~\citep{daubechies2003iterative} is proposed to solve LASSO and improves over the standard gradient descent algorithm. Motivated by the ISTA structure, \citet{gregor2010learning} introduce the Learned ISTA (LISTA), a feedforward neural network that incorporates trainable matrices into ISTA updates. \citet{chen2018theoretical} and \citet{liu2019alista} further propose LISTA-Partial Weight Coupling (LISTA-CP) and Analytic LISTA (ALISTA) with fewer trainable parameters, making them easier to train. They also provide theoretical analyses demonstrating a linear convergence rate. 

We note that for these existing LISTA-type algorithms, both the training and testing tasks (instances) are randomly generated by {\it fixing the measurement matrix but varying the underlying sparse vectors}. In general, they cannot handle cases where the training instances are generated with varying measurement matrices, or the measurement matrices during training and testing do not match.

\section{Preliminaries}
\paragraph{Notations.} For matrix $\Xv$, we use $[\Xv]_{p:q,r:s}$ to denote the submatrix that contains rows $p$ to $q$ and columns $r$ to $s$, and we use $[\Xv]_{:,i}$ and $[\Xv]_{j,:}$ to denote the $i$-th column and $j$-th row of $\Xv$ respectively. In some places, we also use $[\Xv]_i$ to denote its $i$-th column for convenience. We use $\|\Xv\|_F$ to denote its Frobenius norm.
For vector $\xv$, we use $\|\xv\|_1$, $\|\xv\|$ and $\|\xv\|_{\infty}$ to denote its $\ell_1$, $\ell_2$ and $\ell_\infty$ norms, respectively. We denote by $\mathbbm{1}_d$ and $\mathbf{0}_d$ the $d$-dimensional all-$1$ and all-$0$ column vectors, respectively.
$\mathbbm{1}_{a\times b}$ and $\mathbf{0}_{a\times b}$ denote the all-$1$ and all-$0$ matrices of size $a\times b$, respectively.

\subsection{Transformer Architecture}

In this work, we consider the decoder-based Transformer architecture~\citep{vaswani2017attention}, where each attention layer is masked by a decoder-based attention mask and followed by a multi-layer perceptron (MLP) layer. 

\begin{definition}[Masked attention layer]\label{def:def-attn}
  Denote an $M$-head masked attention layer parameterized by $\{(\Vv_m,\Qv_m,\Kv_m)_{m\in[M]}\}$  as $\attn_{\{(\Vv_m,\Qv_m,\Kv_m)\}}(\cdot)$,  where $\Vv_m,\Qv_m, \Kv_m\in\Rb^{D\times D}$, $\forall m\in[M]$. Then, given an input sequence $\Hv\in\Rb^{D\times (N+1)}$, the output sequence of the attention layer is
    \begin{align}
       \attn_{\{(\Vv_m,\Qv_m,\Kv_m)\}}(\Hv) = 
        \Hv + \sum_{m=1}^M(\Vv_m\Hv)\times \mask\Big(\sigma\big((\Kv_m\Hv)^{\top} (\Qv_m\Hv)\big)\Big),\nonumber
\end{align}
where $\mask(\Mv)$ satisfies $\left[\mask(\Mv)\right]_{i,j} = \frac{1}{j} \Mv_{i,j}$ if $i \leq j$ and $\left[\mask(\Mv)\right]_{i,j} = 0$ otherwise. $\sigma(\cdot)$ denotes the activation function. 

\end{definition}

\begin{definition}[MLP layer]\label{def:def-mlp}
  Given $\Wv_1\in\Rb^{D'\times D}$,  $\Wv_2\in\Rb^{D\times D'}$ and a bias vector $\bv\in\Rb^{D'}$, an MLP layer following the decoder attention layer, denoted as $\mlp_{\{\Wv_1,\Wv_2,\bv\}}$, maps each token in the input sequence (i.e, each column $\hv_i$ in $\Hv\in\Rb^{D\times N}$) to another token as
    \begin{align}
        \mlp_{\{\Wv_1,\Wv_2,\bv\}}(\hv_i)=\hv_i+\Wv_2\sigma(\Wv_1\hv_i+\bv),\nonumber
    \end{align}
    where $\sigma$ is the non-linear activation function.
\end{definition}

In this work, we set the activation function in \Cref{def:def-attn} and \Cref{def:def-mlp} as the element-wise ReLU function. Next, we define the one-layer decoder-based Transformer structure.
\begin{definition}[Transformer layer]
     A one-layer decoder-based Transformer is parameterized by $\Thetav := \{\Wv_1,\Wv_2,{\bv},(\Vv_m, \Qv_m, \Kv_m)_{m\in[M]}\}$, denoted as $\tf_{\Thetav}$. Therefore, give input sequence $\Hv\in\Rb^{d\times N}$, the output sequence is:
    \begin{align}
        \tf_{\Thetav}(\Hv) = \mlp_{\{\Wv_1,\Wv_2,{\bv}\}}\Big(
        \attn_{\{(\Vv_m, \Qv_m, \Kv_m)\}}(\Hv)
        \Big).\nonumber
    \end{align}
\end{definition} 

\subsection{In-context Learning by Transformers}\label{sec:icl}

For an in-context learning (ICL) task, a trained Transformer is given an ICL instance \(\Ic = (\Dc, \xv_{N+1})\), where $\mathcal{D} = \{(\xv_i, y_i)\}_{i \in [N]}$ and \(\xv_{N+1}\) is a query. Here, $\xv_i\in \Rb^d$ is an in-context example, and $y_i$ is the corresponding label for $\xv_i$. We assume \(y_i = f_{\betav}(\xv_i) + \epsilon_i\), where \(\epsilon_i\) is an added random noise, and $f_{\betav}$ is a deterministic function parameterized by $\betav$. Unlike conventional supervised learning, for each ICL instance, $\betav\sim P_{\beta}$, i.e., it is randomly sampled from a distribution \(P_{\beta}\).

To perform ICL in a Transformer, we first embed the ICL instance into an input sequence \(\Hv \in \Rb^{D \times N'}\). The Transformer then generates an output sequence \(\text{TF}(\Hv)\) with the same size as \(\Hv\), based on which a prediction \(\hat{y}_{N+1}\) is generated through a read-out function \(F\), i.e., \(\hat{y}_{N+1} = F(\text{TF}(\Hv))\). The objective of ICL is then to ensure that $\hat{y}_{N+1}$ closely approximates the target value \(y_{N+1} = f_{\betav}(\xv_{N+1})+\epsilon_{N+1}\) for any ICL instance.

When a Transformer is pre-trained for ICL, it first samples a large set of ICL instances. For each instance, the Transformer generates a prediction \(\hat{y}_{N+1}\) and calculates the prediction loss by comparing it with \(y_{N+1}\) using a proper loss function. The training loss is the aggregation of all prediction losses for every ICL instance used in pre-training, and the Transformer is trained to minimize this training loss.

Previous studies about the mechanism of how Transformers performs in-context learning have attracted a lot of attention recently. To start with, it is believed that 
the ICL capability is due to the Transformer's implicit implementation of learning algorithms in the forward pass. \citet{vonoswald2023transformers}, \citet{dai2023gpt}, and \citet{bai2023transformers} suggest that these learning algorithms closely approximate gradient-descent-based optimizers, thus making the Transformer a universal solver for various ICL tasks. 
A recent work \citep{ahn2023transformers} suggests that instead of gradient descent, Transformers actually perform pre-conditioned gradient descent for in-context least square linear regression. In general, these results corroborate the claim that the mechanism of the Transformer in-context learning is a L2O algorithm. We study this perspective and further provide the evidence supporting this claim by considering a more complicated in-context problem: in-context sparse recovery.

\section{Transformer as a LISTA-type Algorithm for In-context Sparse Recovery}\label{sec:tf-4}

In this section, we demonstrate that a decoder-based Transformer \textit{can} implement a novel LISTA-type L2O algorithm, specifically LISTA-VM, as detailed in \Cref{thm:tf-implement-alista}, for in-context sparse recovery. We begin by formally defining the in-context sparse recovery problem.

\subsection{In-context Sparse Recovery}\label{subsec:in-context-lasso}

Sparse recovery is a fundamental problem in fields such as compressed sensing, signal denoising, and statistical model selection. The core concept of sparse recovery is that a high-dimensional sparse signal can be inferred from very few linear observations if certain conditions are satisfied. Specifically, it aims to identify an $S$-sparse vector $\betav^* \in \Rb^{d}$ from its noisy linear observations \(\yv = \Xv \betav^* + \epsilonv\), where 
\(\Xv \in \Rb^{N \times d}\) is a measurement matrix, \(\epsilonv \in \Rb^{N}\) is an isometric Gaussian noise vector with mean vector $\mathbf{0}_N$ and covariance matrix \(\Iv_{N \times N}\). 
Typically, we assume \(d \gg N\), which is the so-called under-determined case. One common assumption for the measurement matrix \(\Xv\) \citep{pitaval2015convergence, ge2017spurious, zhu2021global} is that 
each row of the matrix is independently sampled from an isometric sub-Gaussian distribution with zero mean and covariance matrix $\diag(\sigma_1^2\cdots,\sigma_d^2)$, denoted as $P_\xv$. 
which guarantees the critical \emph{restricted isometry property} under mild conditions on the sparsity level \citep{candes2007dantzig}. In this work, we also assume $\betav^*$ is randomly sampled from a distribution \(P_{\betav}\), which admits an $S$-sparse vector with random support.

A popular approach to tackling sparse recovery is the least absolute shrinkage and selection operator (LASSO), which aims to find the optimal sparse vector $\betav \in \Rb^{d}$ that minimizes the following loss:
\begin{align*}
    \Lc(\betav) = \frac{1}{2}\|\yv - \Xv \betav\|^2_2 + \alpha\|\betav\|_1.
\end{align*}
Here $\alpha$ is a coefficient controlling the sparsity penalty. We denote the transpose of the \(i\)-th row in \(\Xv\) by \(\xv_{i}\), i.e., \(\xv_i = [\Xv^\top]_{:,i}\).

In this work, we study how Transformers solve the sparse recovery problem \emph{in context}. For the pre-training process, a set of in-context sparse recovery instances \(\{(\mathbf{X}^{j}, \mathbf{y}^{j}, \mathbf{x}_{N+1}^{j}, y_{N+1}^{j})\}_{j=1}^{N_{\text{train}}}\) is generated according to the relationship
\(y_n^{j} = (\mathbf{x}_n^{j})^\top \boldsymbol{\beta}^{j} + \epsilon_n^{j}\) for \(j \in [N_{\text{train}}]\) and \(n \in [N+1]\), where \(\boldsymbol{\beta}^{j} \sim P_\beta\), \(\mathbf{x}_n^{j} \sim P_{\mathbf{x}}\), and \(\epsilon_n^{j} \sim P_\epsilon\), respectively. Given the set of training instances, a pre-trained Transformer is obtained by minimizing a certain loss function. 

After pre-training, during the inference process for ICL, a sparse recovery instance 
\((\Xv, \yv, \xv_{N+1})\) is sampled randomly according to the same distributions as in the pre-training, and the Transformer then aims to predict \(y_{N+1}\) using the input 
\((\Xv, \yv, \xv_{N+1})\) without any further parameter updating.

\subsection{Classical Algorithms}
\label{sub:4.2-classical}

Gradient descent is known to struggle in solving the LASSO problem due to its inefficiency in effectively handling the sparsity constraint~\citep{chen2018theoretical}. This inefficiency has led to the development of more specialized algorithms that can better address the unique challenges posed by the LASSO formulation. A popular approach to solving the LASSO problem is the Iterative Shrinkage Thresholding Algorithm (ISTA). Starting with a fixed initial point \(\betav^{(1)}\), the update rule in the \(k\)-th iteration is given by 
\[\betav^{(k+1)} = \ST_{\alpha/L}\mleft(\betav^{(k)} - \frac{1}{L} \Xv^\top (\Xv \betav^{(k)} - \yv)\mright).\]
Here, \( \ST_{\alpha/L} \) is the soft-thresholding function defined as \( [\ST_{\alpha/L}(\xv)]_i = \text{sign}([\xv]_i) \max\{0, |[\xv]_i| - \alpha/L\} \), and \( L \) is typically chosen as the largest eigenvalue of \( \Xv^\top\Xv \)~\citep{chen2018theoretical, liu2019alista}. 

Generally, for any ground-truth sparse vector \(\betav^*\) and any given \( \Xv \), ISTA converges at a \textit{sublinear} rate~\citep{beck2009fast}. 
The sublinear convergence rate of ISTA is considered inefficient, which has led to the development of various LISTA-type L2O algorithms, such as LISTA~\citep{gregor2010learning}, LISTA-CP~\citep{chen2018theoretical}, and ALISTA~\citep{liu2019alista}. These algorithms learn the weights in the matrices in ISTA rather than fixing them. 

Among them, LISTA-CP is one state-of-the-art (SOTA) method that has been well-studied. The update rule in the \(k\)-th iteration of LISTA-CP can be expressed as 
\begin{align}\label{eqn:alista}
    \betav^{(k + 1)}
    = \ST_{\theta^{(k)}}
    \Big(\betav^{(k)}-(\Dv^{(k)})^\top (\Xv  \betav^{(k)}- \yv)\Big),
\end{align}
where \(\{\theta^{(k)}, \Dv^{(k)}\}\) are learnable parameters. 
Compared with ISTA with fixed parameters, LISTA-CP obtains \(\{\theta^{(k)}, \Dv^{(k)}\}\) through pre-training. Specifically, with a fixed measurement matrix $\Xv$, it randomly samples $n$ $S$-sparse vectors \(\{\betav_j\}_{j=1}^n\sim P_\beta\) and generates \(\{\yv_j\}_{j=1}^{n}\), which is then utilized to optimize \(\{\theta^{(k)}, \Dv^{(k)}\}\) by minimizing the total predicting loss for $\{\betav_j\}_j$.
\citet{chen2018theoretical} show that, for the same measurement matrix $\Xv$, given any random instance \((\betav^*, \yv)\), a pre-trained LISTA-CP will converge to the ground-truth \(\betav^*\) {\it linearly in $K$} under certain necessary conditions on \({\Xv}\).

\subsection{Transformer Can Provably Perform LISTA-type Algorithms}\label{sec:tf-alista-like}

Noting that LISTA-type algorithms can efficiently solve sparse recovery problems, in this section, we argue that a trained Transformer can implement a LISTA-type algorithm and efficiently solve a sparse recovery problem in context. To distinguish the algorithm implemented by the Transformer with the classical LISTA-type algorithms, we term it as LISTA with Varying Measurements (LISTA-VM). Towards this end, we provide an explicit construction of a \(K\)-layer decoder-based Transformer as follows.
A $K$-layer Transformer is the concatenation of $K$ blocks, where each block comprises a self-attention layer followed by an MLP layer. The input to the first self-attention layer, denoted as $\Hv^{(1)}$, is an embedding of the given in-context sparse recovery instance \(\Ic = (\Xv,\yv, \xv_{N+1})\).

\textbf{Embedding.} Given an in-context sparse recovery instance \(\Ic = (\Xv,\yv, \xv_{N+1})\) 
we embed the instance into an input sequence \(\Hv^{(1)} \in \Rb^{(2d + 2) \times (2N + 1)}\) 
as follows:
\begin{align}
    \Hv^{(1)}(\Ic) = \begin{bmatrix}
        \xv_1 & \xv_1 & \cdots & \xv_N & \xv_N & \xv_{N+1} \\
        0 & y_1 & \cdots & 0 & y_N & 0 \\
        \betav^{(1)}_{1} & \betav^{(1)}_{2} & \cdots & \betav^{(1)}_{2N-1} & \betav^{(1)}_{2N} & \betav^{(1)}_{2N+1} \\
        1 & 0 & \cdots & 1 & 0 & 1
    \end{bmatrix},\label{equ:embd}
\end{align}
where \(\{\betav_i^{(1)}\}_{i \in [2N + 1]}\in \Rb^d\) are implicit parameter vectors initialized as \(\mathbf{0}_{d}\), and \(\xv_i\) is the \(i\)-th column of the transposed measurement matrix, i.e, \([\Xv^\top]_{:,i}\). 
We note that a similar embedding structure is adopted in \citet{bai2023transformers}.

\textbf{Self-attention layer.} 
The self-attention layer takes as input a sequence of embeddings and outputs a sequence of embeddings of the same length. Let the \(K\)-layer decoder-based Transformer feature four attention heads uniquely indexed as \(+1\), \(-1\), \(+2\), and \(-2\). We construct these heads according to the structure specified in \Cref{subsec:b1}. This construction ensures that the self-attention layer performs the \(\betav\) updating inside the soft-thresholding function in \Cref{eqn:alista}. Furthermore, with our construction, the learnable matrix \(\Dv^{(k)}\) in LISTA-CP becomes \emph{context-dependent} instead of fixed. In the update rule implemented by the self-attention layer, this matrix becomes \(\Dv^{(k)} = \frac{1}{2N+1} \Xv (\Mv^{(k)})^\top\), where \(\Mv^{(k)} \in \Rb^{d \times d}\) is fixed.

\textbf{MLP layer.} 
For the MLP layer following the \(k\)-th self-attention layer, it functions as a feedforward neural network that takes the output of the self-attention layer as its input, and outputs a transformed sequence of the embeddings. Recall that \(\hv_i\) is the \(i\)-th column in an embedding sequence \(\Hv\). We parameterize \((\Wv_1, \Wv_2, \bv)\) in the \(k\)-th MLP layer to let it function as a partial soft-threshold function: 
\begin{align}
    \mlp (\hv_i) = \begin{bmatrix}
        [\hv_i]_{1:d+1}\\
        \ST_{\theta^{(k)}}([\hv_i]_{d+2:2d+1})\\
        [\hv_i]_{2d+2}
    \end{bmatrix},\label{equ:iter-mlp}
\end{align}
where \(\ST_{\theta^{(k)}}\) is the soft-threshold function. 
 Essentially, the soft-threshold function is effectively implemented by the MLP layer utilizing the \(\relu\) activation. This can be realized by combining \(-\relu(x)\), \(\relu(-x)\), \(\relu(x - \theta)\), \(-\relu(-x + \theta)\), and \(x\). The implementation details of the MLP layer can be found in \Cref{subsec:b1}.

\textbf{Read-out function.}
Given the output sequence of the Transformer \(\tf_{\Thetav}(\Hv^{(1)})\), to obtain the estimation \(\hat{y}_{N+1}\), it is necessary to \emph{read out} from the output sequence. In this work, we consider two types of read-out functions:
\begin{definition}[Linear read-out]\label{def:linear-read}
  $\Fc_{\text{linear}}$ is defined as the class of linear readout functions such that
  \begin{align*}
    \Fc_{\text{linear}} = \{F(\cdot) \mid F(\hv) = \vv^\top \hv, \vv \in \Rb^{D}\}.
  \end{align*}
\end{definition}

\begin{definition}[Query read-out]\label{def:quad-read}
  $\Fc_{\text{query}}$ is defined as the class of explicit quadratic readout functions such that
  \begin{align*}
    \Fc_{\text{query}} = \{F(\cdot~;\tilde{\Xv}) \mid F(\hv,i;\tilde{\Xv}) = \hv^\top \Vv \tilde{\Xv}_{\lfloor\frac{i+1}{2}\rfloor,:}, \Vv \in \Rb^{D \times d}\},
  \end{align*}
  where \(i\in[N+1]\) is an index number and \(\tilde{\Xv}=[\Xv^\top ~ \xv_{N+1}]^\top\).
\end{definition}
Given a readout function \(F_{\vv} \in \Fc_{\text{linear}}\) parameterized by \(\vv\) or \(F_{\Vv} \in \Fc_{\text{query}}\) parameterized by \(\Vv\), the estimation \(\hat{y_{i}}\) obtained by the \(K\)-layer Transformer is \(\hat{y_{i}} = F_\vv(\hv_{2i+1}^{K+1})\) or \(\hat{y_{i}} = F_{\Vv}(\hv_{2i+1}^{K+1}, 2i-1)\) respectively.
 
Before we formally present our main results, we introduce the following assumptions.

\begin{assumption}\label{assump:basic}
For \(\xv \sim P_{\xv}\) and \(\betav^*\sim P_{\betav}\), we assume \(\|\xv\| \leq b_\xv\) and \(\|\betav^*\|_1 \leq b_{\betav}\) almost surely. Besides, we consider the noiseless scenario where \(\epsilonv = \mathbf{0}\).
\end{assumption}

We note that the boundedness assumption over $\xv$ and $\betav^*$ ensures the robustness of the Transformer and prevents it from blowing up under ill conditions. A similar assumption is adopted in~\citet{bai2023transformers}. The noiseless assumption is for ease of analysis and is common in the analysis of Transformers ~\citep{ahn2023transformers,fu2023transformers,bai2023transformers}. We note that the following \Cref{thm:tf-implement-alista} can be straightforwardly extended to the noisy case when the noise is bounded.

We denote the input sequence to the \(k\)-th self-attention layer as \(\Hv^{(k)}\), and use \(\betav_{2n + 1}^{(k + 1)}\) to represent the vector \([\Hv^{(k + 1)}]_{d+1:2d+1,2n+1}\). Then, we state the following theorem. 

\begin{theorem}[Equivalence between ICL and {LISTA-VM}]\label{thm:tf-implement-alista}
With the Transformer structure described above, under Assumption \ref{assump:basic},
there exists a set of parameters in the Transformer so that for any \(k\in[1:K ]\), \(n\in[N]\), we have
    \begin{align}
        \betav_{2n + 1}^{(k + 1)}
    = \ST_{\theta^{(k)}}
    \Big(\betav^{(k)}_{2n+1}-\frac{1}{2n+1}\Mv^{(k)}[\Xv]_{1:n,:}^\top ([\Xv]_{1:n,:}  \betav^{(k)}_{2n+1}- \yv_{1:n})\Big),\label{equ:tf-updating-rule}
    \end{align}
    where $\Mv^{(k)}\in\Rb^{d\times d}$ is embedded in the \(k\)-th Transformer layer. 
\end{theorem}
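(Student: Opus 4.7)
The plan is to give an explicit construction of the four attention heads and the per-layer MLP so that, at each odd token $2n+1$, the attention layer writes the pre-threshold increment $\betav^{(k)}_{2n+1}-\frac{1}{2n+1}\Mv^{(k)}[\Xv]_{1:n,:}^\top([\Xv]_{1:n,:}\betav^{(k)}_{2n+1}-\yv_{1:n})$ into the $\betav$-slot (rows $d+2$ through $2d+1$), while the MLP applies the coordinate-wise $\ST_{\theta^{(k)}}$ to that slot and leaves all other slots intact. The free parameters $\Mv^{(k)}$ and $\theta^{(k)}$ are absorbed into the value matrices and the MLP bias, respectively.

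I would split the required increment into its two summands and assign one pair of heads to each. For the $\Xv^\top\Xv\betav$ term, I use heads $+1$ and $-1$: let $\Kv_{\pm 1}\hv_i$ read $\xv_i$ from the top block together with a copy of the position indicator $[\hv_i]_{2d+2}$ in an auxiliary coordinate, and let $\Qv_{\pm 1}\hv_j$ read $\pm\betav^{(k)}_j$ from the middle block together with the scalar $-C$ in that same auxiliary coordinate. The resulting raw score is $\pm\xv_i^\top\betav^{(k)}_j-C[\hv_i]_{2d+2}$; choosing $C$ larger than a uniform bound on $|\xv_i^\top\betav^{(k)}_j|$ makes the ReLU annihilate every odd-index key, so only the even keys $i=2,\ldots,2n$ survive. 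Setting $\Vv_{\pm 1}$ to deposit $\mp\Mv^{(k)}\xv_i$ into the $\betav$-slot, the two heads collapse via $\relu(z)-\relu(-z)=z$ and, with the mask normalization $1/(2n+1)$, produce exactly $-\frac{1}{2n+1}\Mv^{(k)}\sum_{i'=1}^n\xv_{i'}\xv_{i'}^\top\betav^{(k)}_{2n+1}$. The $\Xv^\top\yv$ term is handled symmetrically by heads $\pm 2$ whose scores are $\pm y_i$; no extra gating is needed because $y_i=0$ on odd tokens by construction of the embedding in \eqref{equ:embd}. Depositing $\pm\Mv^{(k)}\xv_i$ into the $\betav$-slot via $\Vv_{\pm 2}$ yields the remaining $+\frac{1}{2n+1}\Mv^{(k)}\sum_{i'=1}^n\xv_{i'}y_{i'}$, and every non-$\betav$ slot is preserved by the skip connection because the corresponding rows of every $\Vv_m$ are zero.

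The MLP in \eqref{equ:iter-mlp} is then an elementary token-wise implementation of the soft-threshold: the identities $\ST_\theta(z)=\relu(z-\theta)-\relu(-z-\theta)$ and $z=\relu(z)-\relu(-z)$ let me realize $\ST_{\theta^{(k)}}(\betav)-\betav$ on each scalar of the $\betav$-slot using four ReLU units per coordinate (bias $\theta^{(k)}$ on two of them, bias $0$ on the other two), so that adding back the MLP residual produces $\ST_{\theta^{(k)}}$ only on the $\betav$-slot. Stacking $K$ such blocks and varying $(\Mv^{(k)},\theta^{(k)})$ layer by layer reproduces the iteration \eqref{equ:tf-updating-rule} for every $k\in[1:K]$ and every $n\in[N]$. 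The main obstacle I expect is choosing the gating constant $C$ once and for all, which requires a uniform a priori bound $\|\betav^{(k)}_{2n+1}\|\le B$ across all $k$; this should follow by induction from Assumption \ref{assump:basic}, the $1$-Lipschitzness of $\ST_{\theta^{(k)}}$, and a bound on the operator norm of $\Iv-\frac{1}{2n+1}\Mv^{(k)}[\Xv]_{1:n,:}^\top[\Xv]_{1:n,:}$, and once in hand it closes the construction.
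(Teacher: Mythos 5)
Your construction is essentially the paper's own proof (Appendix B): the same four-head split with heads $\pm1$ computing the $\Xv^\top\Xv\betav$ term via a large gating constant paired with the odd/even indicator to kill odd keys, heads $\pm2$ computing the $\Xv^\top\yv$ term using the $y$-coordinate (zero on odd tokens), the $\relu(z)-\relu(-z)=z$ pairing, the $1/(2n+1)$ mask normalization, the ReLU-based soft-threshold MLP acting only on the $\betav$-slot, and an inductive bound on $\|\betav^{(k)}_i\|$ to fix the gate. The only cosmetic difference is that the paper's inductive bound grows geometrically in $k$ (it is $C_{\betav}^k$, not uniform), which still suffices since $K$ is finite — exactly the way your closing step should be finished.
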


The proof of \Cref{thm:tf-implement-alista} is detailed in \Cref{subsec:b.1}. 

\begin{remark}
As mentioned above, matrix \(\Dv^{(k)}\) in the update rule of LISTA-CP in \Cref{eqn:alista} is learned during pre-training and remains fixed across different in-context sparse recovery instances. As a result, \it{LISTA-CP requires the measurement matrix $\Xv$ to stay the same during pre-training and inference}.  In contrast, if we denote \(\Dv^{(k)}_n = \frac{1}{2n+1}[\Xv]_{1:n,:} (\Mv^{(k)})^\top\), then within the update rule of the {LISTA-VM} algorithm implemented by the Transformer, as detailed in \Cref{equ:tf-updating-rule}, the matrix \(\Dv^{(k)}_n\) depends on a fixed matrix \(\Mv^{(k)}\) after pre-training as well as on the measurement matrix \(\Xv\) during inference. As a result, the Transformer can adaptively update \(\Dv^{(k)}_n\) for instances with different $\Xv$'s, enabling more flexibility and improved performance for the ICL tasks. 
\end{remark}

\section{Performance of Transformers for In-context Sparse Recovery}\label{sec:performance-5}

In this section, we demonstrate the effectiveness of the constructed Transformer in implementing the {LISTA-VM} algorithm and solving in-context sparse recovery problems. We first show that the {LISTA-VM} algorithm implemented by the Transformer recovers the underlying sparse vector in context at a convergence rate linear in $K$. We then demonstrate that the Transformer can accurately predict \({y}_{N+1}\) at the same time.

\subsection{Sparse Vector Estimation }\label{sec:theory}
\begin{theorem}[Convergence of ICL]\label{thm:thm-concentration-W} Let {\small $\delta\in(0,1)$, $N_0 = 8(4 S -2)^2{\frac{\log d+\log S -\log\delta}{c}}$, $\alpha_n = - \log \big(1 - \frac{2}{3}\gamma + \gamma(2 S -1)\sqrt{\frac{\log d-\log\delta}{nc}}+\sqrt{\frac{\log S -\log\delta}{nc}}\big)$}, where $c$ is a positive constant and \(\gamma\) is a positive constant satisfies \(\gamma\leq \frac{3}{2}\).
    For a $K$-layer Transformer model with the structure described in \Cref{sec:tf-alista-like}, under Assumption \ref{assump:basic},
    there exists a set of parameters such that for any randomly generated sparse recovery instance and any $n\in[ N_0:N]$, with probability at least $1-\delta$, we have
    \begin{align*}
        \big\|\betav^{(K+1)}_{2n+1}-\betav^*\big\|\leq b_{\betav}e^{-\alpha_n K}.
    \end{align*}
\end{theorem}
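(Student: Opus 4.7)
The plan is to invoke Theorem~\ref{thm:tf-implement-alista} so that the only object to analyze is the LISTA-VM iterate acting on the error $\betav^{(k)}_{2n+1}-\betav^*$. In the noiseless regime of Assumption~\ref{assump:basic}, one has $\yv_{1:n} = [\Xv]_{1:n,:}\betav^*$, so \Cref{equ:tf-updating-rule} rearranges into
\begin{align*}
\betav^{(k+1)}_{2n+1}-\betav^* = \ST_{\theta^{(k)}}\!\Big(\betav^* + \Wv^{(k)}_n\big(\betav^{(k)}_{2n+1}-\betav^*\big)\Big) - \betav^*,\quad \Wv^{(k)}_n := \Iv - \tfrac{1}{2n+1}\Mv^{(k)}[\Xv]_{1:n,:}^{\top}[\Xv]_{1:n,:}.
\end{align*}
I would then fix the construction by choosing $\Mv^{(k)}$ (independent of $k$) to be a scalar multiple of $\diag(\sigma_1^{-2},\ldots,\sigma_d^{-2})$, the inverse population covariance of $P_{\xv}$, with the scalar tuned so that $\mathbb{E}[\Wv^{(k)}_n] = (1-\tfrac{2}{3}\gamma)\Iv$ up to an $O(\gamma/n)$ correction that is absorbed into the error terms once $n\geq N_0$, and choose thresholds $\theta^{(k)}$ that decay geometrically at the target rate $e^{-\alpha_n}$.

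The technical core is a coherence-type concentration for $\Wv^{(k)}_n$. Since the rows of $\Xv$ are i.i.d.\ isotropic sub-Gaussian with diagonal covariance, a Bernstein/Hanson--Wright argument applied to the entries of $\frac{1}{2n+1}[\Xv]_{1:n,:}^{\top}[\Xv]_{1:n,:}$ shows that, on a single event of probability at least $1-\delta$,
\begin{align*}
\big|[\Wv^{(k)}_n]_{i,i}-(1-\tfrac{2}{3}\gamma)\big|\leq \sqrt{\tfrac{\log S-\log\delta}{nc}}\ \ \forall i\in\mathrm{supp}(\betav^*),\quad \big|[\Wv^{(k)}_n]_{i,j}\big|\leq \gamma\sqrt{\tfrac{\log d-\log\delta}{nc}}\ \ \forall i\neq j.
\end{align*}
The two rates arise from two different union bounds: the diagonal bound is only needed on the true $S$-sized support of $\betav^*$, whereas the off-diagonal bound must hold uniformly over all $d(d-1)/2$ coordinate pairs, and these are exactly the $\log S$ and $\log d$ terms appearing in $\alpha_n$. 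The sample-size threshold $n\geq N_0 = 8(4S-2)^2(\log d+\log S-\log\delta)/c$ is calibrated so that the resulting one-step contraction factor is strictly less than $1$.

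Given this coherence structure, I would run a LISTA-CP--style induction in $k$ with two invariants: (a)~$\mathrm{supp}(\betav^{(k)}_{2n+1})\subseteq \mathrm{supp}(\betav^*)=:\Omega$ (no false positives), and (b)~$\|\betav^{(k)}_{2n+1}-\betav^*\|_\infty\leq b_\betav e^{-\alpha_n(k-1)}$. Invariant (a) follows from choosing $\theta^{(k)}$ just above $\|[\Wv^{(k)}_n(\betav^{(k)}_{2n+1}-\betav^*)]_{\Omega^c}\|_\infty$, which by the off-diagonal bound is at most $\gamma(2S-1)\sqrt{(\log d-\log\delta)/(nc)}\,\|\betav^{(k)}_{2n+1}-\betav^*\|_\infty$; the factor $2S-1$ counts the at-most $2S-1$ nonzero cross terms produced by a vector supported in $\Omega\cup\mathrm{supp}(\betav^{(k)}_{2n+1})$ of size $\leq 2S$. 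With (a) in force, the restricted $S\times S$ submatrix of $\Wv^{(k)}_n$ controls the on-support error and yields
\begin{align*}
\|\betav^{(k+1)}_{2n+1}-\betav^*\|_\infty \leq \Big(1-\tfrac{2}{3}\gamma + \gamma(2S-1)\sqrt{\tfrac{\log d-\log\delta}{nc}} + \sqrt{\tfrac{\log S-\log\delta}{nc}}\Big)\|\betav^{(k)}_{2n+1}-\betav^*\|_\infty,
\end{align*}
whose contraction factor equals $e^{-\alpha_n}$ by the definition of $\alpha_n$. Iterating $K$ times from the initialization $\betav^{(1)}=\mathbf{0}$ (so $\|\betav^{(1)}_{2n+1}-\betav^*\|_\infty\leq \|\betav^*\|_1\leq b_\betav$), and then passing from $\ell_\infty$ to $\ell_2$ on the $S$-sparse error, produces the stated $\|\betav^{(K+1)}_{2n+1}-\betav^*\|\leq b_\betav e^{-\alpha_n K}$.

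The hardest part will be the coherence-concentration step: pinning down the exact constants $\tfrac{2}{3}\gamma$, $(2S-1)$, and the $\log d/\log S$ split so that $\alpha_n>0$ as soon as $n\geq N_0$, while keeping $N_0$ only quadratic in $S$ and logarithmic in $d$ and $\delta^{-1}$. Published LISTA-CP proofs assume a deterministic measurement matrix satisfying generalized mutual coherence; here that property must be \emph{generated} by the random draw of $\Xv$ and must survive the $(2S-1)$-fold accumulation on the support. A secondary obstacle is the joint design of $\theta^{(k)}$ and invariant (b): the threshold must be just large enough to kill the off-support leakage, yet small enough that shrinkage does not destroy the geometric decay, which forces $\theta^{(k)}$ to be defined recursively together with the bound on $\|\betav^{(k)}_{2n+1}-\betav^*\|_\infty$. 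Finally, the coherence event needs to hold uniformly over $k\in[K]$ and $n\in[N_0:N]$, but this adds only a logarithmic union-bound cost that is absorbed into $\delta$.
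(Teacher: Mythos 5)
Your proposal follows essentially the same route as the paper's proof: reduce via \Cref{thm:tf-implement-alista} to the LISTA-VM iteration with a fixed $\Mv$ tied to the covariance of $P_{\xv}$, prove coherence-type concentration of $\tfrac{1}{2n+1}\Mv[\Xv]_{1:n,:}^\top[\Xv]_{1:n,:}$ by sub-exponential tail bounds with a union bound over the support (giving the $\log S$ term) and over all off-diagonal pairs (giving the $\log d$ term), and then run a LISTA-CP--style induction with no-false-positive support preservation enforced by a geometrically decaying threshold $\theta^{(k)}$; this mirrors \Cref{lemma:lemma4}, \Cref{lemma:lemma3}/\Cref{lemma:aux-thm1-1}, and \Cref{prop:prop3}. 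The only substantive deviations are cosmetic or fixable: the paper takes $\Mv^{V}=\tfrac{2}{\sigma_d^2}\Iv_{d\times d}$ rather than your inverse-covariance choice (both yield the needed diagonal/off-diagonal coherence), and since $\Mv$ does not depend on $k$ the coherence event is a statement about $\Xv$ alone, so no union bound over $k$ (nor, as the result is stated per-$n$, over $n$) is actually needed.

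The one concrete issue is your norm bookkeeping at the end. You carry the induction in $\ell_\infty$ with initialization $\|\betav^{(1)}_{2n+1}-\betav^*\|_\infty\le\|\betav^*\|_1\le b_{\betav}$ and then convert to $\ell_2$ on the $S$-sparse error, which costs a factor $\sqrt{S}$ and delivers only $\|\betav^{(K+1)}_{2n+1}-\betav^*\|\le \sqrt{S}\,b_{\betav}e^{-\alpha_n K}$, not the stated $b_{\betav}e^{-\alpha_n K}$. The paper avoids this by running the same contraction argument directly in $\ell_1$ (the per-step factor $\gamma(2S-1)\sigma_{\max}+|1-\gamma\sigma_{\min}|$ is identical there, with the threshold contributing the extra $S\theta^{(k)}$ term), so that $\|\betav^{(K+1)}_{2n+1}-\betav^*\|\le\|\betav^{(K+1)}_{2n+1}-\betav^*\|_1\le \|\betav^*\|_1 e^{-\alpha_n K}\le b_{\betav}e^{-\alpha_n K}$ with no sparsity-dependent loss. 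Switching your invariant (b) from $\ell_\infty$ to $\ell_1$ (keeping invariant (a) and your threshold choice) closes this gap and otherwise leaves your argument intact.
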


\begin{proof}[Main challenge and key ideas of the proof.]
Similar to the proofs in \citet{chen2018theoretical} and \citet{liu2019alista}, the core step in proving convergence is to ensure that \(\Dv^{(k)}\) exhibits small coherence with \(\Xv\), i.e., \((\Dv^{(k)})^\top \Xv \approx \Iv_{d \times d}\).
In \citet{chen2018theoretical} and \citet{liu2019alista}, such a \(\Dv^{(k)}\) is obtained by minimizing the generalized mutual coherence to a \emph{fixed} \(\Xv\). However, this results in poor generalization across different \(\Xv\)'s. In our proof, we consider \(\Xv\) as a random matrix and leverage its sub-Gaussian properties to prove that if \(\Dv^{(k)} = \Xv (\Mv^{(k)})^\top\), where \(\Mv^{(k)}\) is associated with the covariance of \(\Xv\), then \(\Dv^{(k)}\) will have small coherence with \(\Xv\) with high probability. We defer the detailed proof of \Cref{thm:thm-concentration-W} to \Cref{sec:apx-c1}. 
\end{proof}

\begin{remark}[Linear convergence rate]
The linear convergence rate demonstrated in \Cref{thm:thm-concentration-W} is due to the incorporation of curvature information into the update rule. Specifically, the learned matrices \(\mathbf{M}^{(k)}\) serve as approximations of the inverse Hessian. By leveraging the statistical properties of the problem, these matrices effectively accelerate convergence, enabling the Transformer to mimic second-order optimization methods.  This allows the Transformer to overcome the traditional limitations of first-order methods, which are typically restricted to a sublinear 
\(\Oc(1/K)\) convergence rate unless additional assumptions are introduced \citep{nesterov2005smooth, nemirovskij1983problem}. 
\end{remark}

\begin{remark}[Generalization across measurement matrix $\Xv$] \Cref{thm:thm-concentration-W} shows that for any $\Xv$ satisfying Assumption \ref{assump:basic}, the Transformer can estimate the ground-truth sparse vector \(\betav^*\) in-context at a convergence rate linear in $K$. This is in stark contrast to traditional LISTA-CP type of algorithms, which only work for fixed $\Xv$. Such generalization is enabled by the input-dependent matrices $\{\Dv^{(k)}\}_k$. 
Besides, we also note that $\betav^{(K+1)}_{2n+1}$ only depends on $\xv_1,\ldots,\xv_n$. This implies that even if the measurement matrix $\Xv$ is of dimension $n\times d$ instead of $N\times d$, when $n\in[N_0,N]$, the Transformer can still recover \(\betav^*\) accurately. Such results demonstrate the robustness of Transformers to variations in in-context sparse recovery tasks.    
\end{remark}

\begin{remark}[Effective utilization of the hidden patterns in ICL tasks]\label{cor:cor-restrict-sup}
We note that the Transformer can be slightly modified to exploit certain hidden structures in the in-context sparse recovery tasks. Specifically, if the support of $\betav$ lies in a subset $\Sb\subset [1:d]$ with $S<|\Sb|\leq d$, then by slightly modifying the parameters of the Transformer to ensure \([\betav^{(k)}]_i = 0\) for all \(i \notin \Sb\), the ICL performance can be improved by replacing all $d$ involved in \Cref{thm:thm-concentration-W} by $|\Sb|$. We defer the corresponding result and analysis to \Cref{subsec:c2}. 
\end{remark}

\subsection{Label Prediction }\label{sec:prediction-error}

In \Cref{sec:theory}, we have demonstrated that Transformers can successfully recover the ground-truth sparse vector \(\betav^*\) with linear convergence by implementing a LISTA-type algorithm. In this section, we bridge the gap between this theoretical claim and the explicit objective of in-context sparse recovery, which is to predict \(\hat{y}_{N+1}\) given an in-context instance \((\Xv, \yv, \xv_{N+1})\). This gap might seem trivial at first glance, as given \(\xv_{N+1}\) and an accurate estimate of \(\betav\), the label \(\hat{y}_{N+1}\) can be obtained through a simple linear operation. However, we will show that, for a decoder-based Transformer, generating \(\hat{y}_{N+1}\) using the predicted sparse vector \(\betav\), which is implicitly embedded within the forward-pass sequences, critically depends on the structure of the read-out function.

\begin{theorem}\label{thm:prediction-linear-read}
Under the same setting as in \Cref{sec:tf-alista-like}, for any \(n\in[N_0+1:N+1]\),  with probability at least {\small \(1 - n\delta - \delta'\)}, we have
    \begin{align*}
    \|y_n-\hat{y}_n\|
     \leq
     b_{\xv}\mleft(1 - \frac{2}{3}\gamma\mright)^{K}
    +
    \frac{c_4 K}{\sqrt{n}}
     \mleft(1 - \frac{2}{3}\gamma\mright)^{K-1}
    \end{align*}
    for a linear read-out function, and with probability at least \(1-\delta\), we have \(\|y_{n} - \hat{y}_{n}\| \leq c_5 e^{-\alpha_n K}\) for a query read-out function, where $c_4$, $c_5$ are constants.
    
\end{theorem}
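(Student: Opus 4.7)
The plan is to handle the two read-out classes separately. For the query read-out I would choose $\Vv \in \Rb^{D \times d}$ to be the projection that pulls the $\betav$-block of the hidden state (rows $d+2$ through $2d+1$) out and pairs it with the $n$-th row of $\tilde{\Xv}$ supplied to the read-out. By \Cref{thm:tf-implement-alista} the pulled-out vector is exactly $\betav_{2n+1}^{(K+1)}$, so the read-out evaluates to $\hat{y}_n = \xv_n^\top \betav_{2n+1}^{(K+1)}$. Under the noiseless part of \Cref{assump:basic} we have $y_n = \xv_n^\top \betav^*$, and Cauchy--Schwarz with $\|\xv_n\|\leq b_{\xv}$ gives $|y_n - \hat{y}_n| \leq b_{\xv}\,\|\betav^* - \betav_{2n+1}^{(K+1)}\|$. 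Plugging in the $\betav$-convergence of \Cref{thm:thm-concentration-W} then yields $|y_n - \hat{y}_n| \leq b_{\xv}b_{\betav}\,e^{-\alpha_n K}$ with probability $\geq 1-\delta$; take $c_5 = b_{\xv}b_{\betav}$.

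The linear read-out $F(\hv)=\vv^\top \hv$ is the delicate case, because a purely linear functional of $\hv_{2n+1}^{(K+1)}$ cannot by itself instantiate the bilinear pairing $\xv_n^\top \hat{\betav}$. The plan is to exploit the fact that the attention mechanism at the $K$-th block already forms the residual $[\Xv]_{1:n,:}\betav_{2n+1}^{(K)} - \yv_{1:n}$ while implementing the LISTA-VM step of \Cref{thm:tf-implement-alista}; its $n$-th coordinate is exactly $\xv_n^\top \betav_{2n+1}^{(K)} - y_n$. A small modification (one extra value head in the $K$-th block, or equivalently a one-dimensional widening of the embedding to host an otherwise unused slot) writes $\xv_n^\top \betav_{2n+1}^{(K)}$ into a designated coordinate of the token at position $2n+1$, from which $\vv$ extracts it. Because this slot is disjoint from the $\betav$-block, the convergence analysis of \Cref{thm:thm-concentration-W} carries through unchanged.

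With $\hat{y}_n = \xv_n^\top \betav_{2n+1}^{(K)}$ in hand, Cauchy--Schwarz and \Cref{thm:thm-concentration-W} give $|y_n - \hat{y}_n| \leq b_{\xv}\,\|\betav^* - \betav_{2n+1}^{(K)}\| \leq b_{\xv}b_{\betav}\, e^{-\alpha_n K}$ (shifting the depth by one only changes constants). To match the stated two-term form I would Taylor-expand the rate: writing $\alpha_n = -\log\bigl(1 - \tfrac{2}{3}\gamma + \eta_n\bigr)$ with $\eta_n = \mathcal{O}(\sqrt{(\log d)/n})$, the binomial expansion gives $(1 - \tfrac{2}{3}\gamma + \eta_n)^K = (1 - \tfrac{2}{3}\gamma)^K + K\eta_n(1 - \tfrac{2}{3}\gamma)^{K-1} + \mathcal{O}(K^2\eta_n^2)$, and absorbing $b_{\xv}b_{\betav}$ together with the constants into $c_4$ produces the advertised bound. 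The probability $1 - n\delta - \delta'$ comes from a union bound of the $1-\delta$ event of \Cref{thm:thm-concentration-W} over the $n$ prefix lengths whose iterates feed the last-layer attention, plus an auxiliary $1-\delta'$ event controlling the sub-Gaussian tail used when upgrading $\|\xv_n\|\leq b_{\xv}$ (or the associated row-level concentration) uniformly in $n$.

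The main obstacle I expect is the construction step for the linear read-out: cleanly exposing the bilinear scalar $\xv_n^\top \betav_{2n+1}^{(K)}$ in a single hidden-state coordinate without perturbing the LISTA-VM block that \Cref{thm:thm-concentration-W} relies on. The remaining ingredients---Cauchy--Schwarz, the Taylor expansion of $e^{-\alpha_n K}$ in $1/\sqrt{n}$, and the union bound over prefix lengths---are largely bookkeeping once the construction is in place.
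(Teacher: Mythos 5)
Your treatment of the query read-out is correct and is essentially the paper's own argument (Corollary \ref{corollary:d.2}): choose \(\Vv\) to extract the \(\betav\)-block, pair it with the query row, and apply Cauchy--Schwarz together with \Cref{thm:thm-concentration-W}; only the constant differs.

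For the linear read-out, however, the step you yourself flag as the ``main obstacle'' is the crux of the theorem, and your plan for it does not go through. In the architecture of \Cref{sec:tf-alista-like}, the scalar \(\xv_n^\top\betav_{2n+1}^{(K)}\) is never materialized in any hidden coordinate: it appears only inside attention scores \(\sigma(\langle\Qv\hv_{2n+1},\Kv\hv_j\rangle)\), and any value head attached to such scores produces the masked sum \(\frac{1}{i}\sum_{j\le i}\sigma(\langle\Qv\hv_i,\Kv\hv_j\rangle)\Vv\hv_j\), i.e.\ an \emph{average over all prefixes} \(j\le i\), not the single diagonal term \(j=i\) (no coordinate of the embedding identifies \(j=i\), and the token-wise one-hidden-layer ReLU MLP cannot exactly realize a bilinear form either; the \(\pm\)-head trick removes the ReLU but still leaves the prefix average). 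This is precisely why the paper's proof appends a \((K{+}1)\)-th attention layer whose output is a thresholded, rescaled average \(\hat y_n=\frac{2}{2np-p}\sum_{j:o_j=1}\xv_n^\top\betav_{2j+1}^{(K+1)}-\frac{2(\sum_j o_j)}{2np-p}B'+B'\), with threshold \(B'\), hit probability \(p\), and counting events \(\Ec_1,\Ec_2\). The consequences you tried to reverse-engineer then have a different provenance: the \(n\delta\) is a genuine union bound of \Cref{thm:thm-concentration-W} over all prefixes entering that average (in your single-prefix construction no union bound would be needed at all); \(\delta'\) bounds \(\Pb\{\neg\Ec_1\}+\Pb\{\neg\Ec_2\}\) for the thresholding/counting, not a sub-Gaussian upgrade of \(\|\xv_n\|\le b_{\xv}\), which already holds almost surely under \Cref{assump:basic}; and the \(\frac{K}{\sqrt n}\left(1-\frac{2}{3}\gamma\right)^{K-1}\) term comes from summing the prefix-dependent rates \(e^{-\alpha_i K}\) over \(i\in[N_0+1,n]\) (via \(\sum_i i^{-1/2}\approx 2\sqrt n\)) together with the thresholding corrections, not from Taylor-expanding the single rate \(e^{-\alpha_n K}\). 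A useful sanity check: if your construction were admissible, the linear read-out would enjoy the same \(c_5 e^{-\alpha_n K}\) bound with probability \(1-\delta\) as the query read-out, erasing exactly the separation between the two read-out classes that the theorem asserts and the experiments in \Cref{fig:3} exhibit.
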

We defer the proof of \Cref{thm:prediction-linear-read} to \Cref{appendix:d}.

\begin{remark}
\Cref{thm:prediction-linear-read} indicates that adopting a linear read-out function results in a prediction error of order {\small \(\Oc\big(e^{-K} + \frac{K}{\sqrt{n}} e^{-K}\big)\)}, which still exhibits linear convergence with respect to \(K\). When a query-based read-out function is employed, the convergence rate improves to {\small \(\Oc(e^{-K})\)}. However, there exists a gap of order {\small \(\Oc\big(\frac{K}{\sqrt{n}} e^{-K}\big)\)}, which diminishes as \(n\) becomes large. Empirically, we observe the superiority of using a query-based read-out function in our experimental results, as detailed in \Cref{sec:exp}, and we also observe that the gap decreases as \(n\) grows.

\end{remark} 

\section{Experimental Results}\label{sec:exp}

\textbf{Problem setup.} 
In all experiments, we adhere to the following steps to generate in-context sparse recovery instances.
First, we sample a ground truth sparse vector \(\betav^*\) from a \(d=20\) dimensional standard normal distribution, and we fix the sparsity of \(\betav^*\) to be \(3\) by randomly setting \(17\) entries in \(\betav^*\) to zero. Next, we independently sample 
\(N=10\) vectors form a \(d\) dimensional standard normal distribution and then contract the measurement matrix \(\Xv\in\Rb^{10\times 20}\) (each sampled \(d\) dimensional random vector is a row in \(\Xv\)). We also sample an additional \(\xv_{N+1}\) from the \(d\)-dimensional standard Gaussian distribution. We follow the noiseless setting in \citet{bai2023transformers} for sparse recovery, i.e., \(\yv = \Xv \betav^*\).

\textbf{Baselines.} The baselines for our experiments include traditional iterative algorithms such as ISTA and FISTA~\citep{beck2009fast}. We also evaluate three classical LISTA-type L2O algorithms: LISTA, LISTA-CP, ALISTA~\citep{gregor2010learning, chen2018theoretical, liu2019alista}. For all of these algorithms, we set the number of iterations \(K = 12\). We generate a single fixed measurement matrix \(\Xv\). For each training epoch, we create \(I = 50,000\) instances from \(50,000\) randomly generated sparse vectors. During inference, we evaluate the pre-trained LISTA-type models under two settings: (1) when the measurement matrix remains identical to that used during pre-training, reported as ``Fixed $X$'', and (2) when the measurement matrix is varying through random sampling, reported as ``Varying $X$''.

We also evaluate the LISTA-VM algorithm introduced in \Cref{thm:tf-implement-alista}, where we set the number of iterations \(K = 12\) as well. For each training epoch, we randomly sample \(100\) measurement matrices, each generating \(500\) instances from 500 randomly generated sparse vectors, which results in a total of \(50,000\) instances. For comparison, we also meta-train LISTA and LISTA-CP using the same training method as LISTA-VM. We do not perform meta-training for ALISTA, as the training process of ALISTA involves solving a non-convex optimization problem for each different measurement matrix \(\Xv\), which makes meta-training for ALISTA unrealistic. For all baseline algorithms, we minimize the sparse vector prediction loss \(\sum_{i=1}^I \|\hat{\betav}_i - \betav_i\|^2\) using gradient descent for each epoch. We run all baseline experiments for 340 epochs.

\textbf{Transformer structure.} We consider two Transformer models, i.e., a small Transformer model (denoted as Small TF) and GPT-2. Small TF has \(12\) layers, each containing a self-attention layer followed by an MLP layer. Each self-attention layer has 4 attention heads. We set the embedding dimension to \(D = 42\), and the embedding structure according to \Cref{equ:embd}. For GPT-2, we employ \(12\) layers, and set the embedding dimension to \(256\) and the number of attention heads per layer to \(8\). 
We note that the Small TF model shares the same 4-head configuration and ReLU activation function as described in \Cref{thm:tf-implement-alista}, while the configuration of GPT-2 is commonly used in practical applications.
In order to train Small TF and GPT-2, we randomly generate \(64\) instances per epoch and train the algorithms for \(10^6\) epochs. 
The training process minimizes the label prediction loss \(\sum_{j=1}^{N+1} (y_{j} - \hat{y}_{j})^2\). 
We run the experiments for Small TF and GPT-2 on an NVIDIA RTX A5000 GPU with 24G memory. The training time for Small TF is approximately \(8\) hours, while the training time for GPT-2 is around \(12\) hours.

\begin{figure}[t]
\begin{subfigure}{0.33\linewidth}
 \centering
     \includegraphics[width=\textwidth]{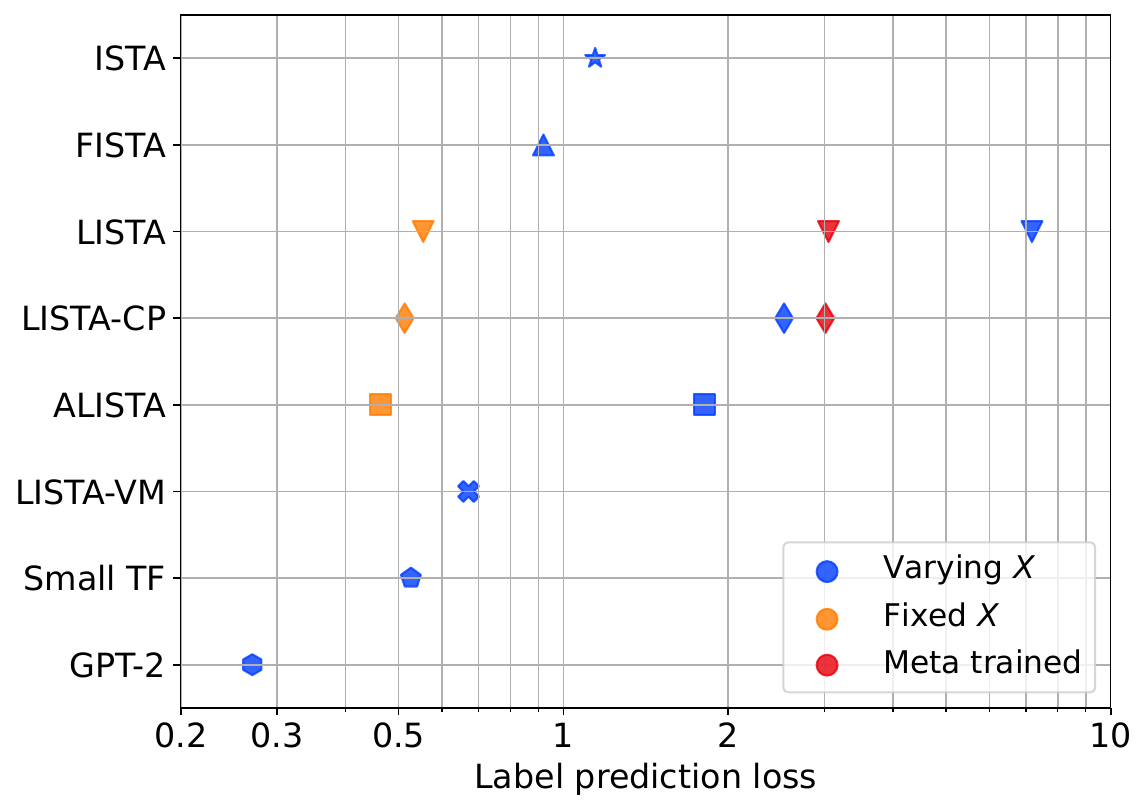}
     \caption{Without support constraints}\label{fig:1}
 \end{subfigure}
 \begin{subfigure}{0.33\linewidth}
 \centering
     \includegraphics[width=\textwidth]{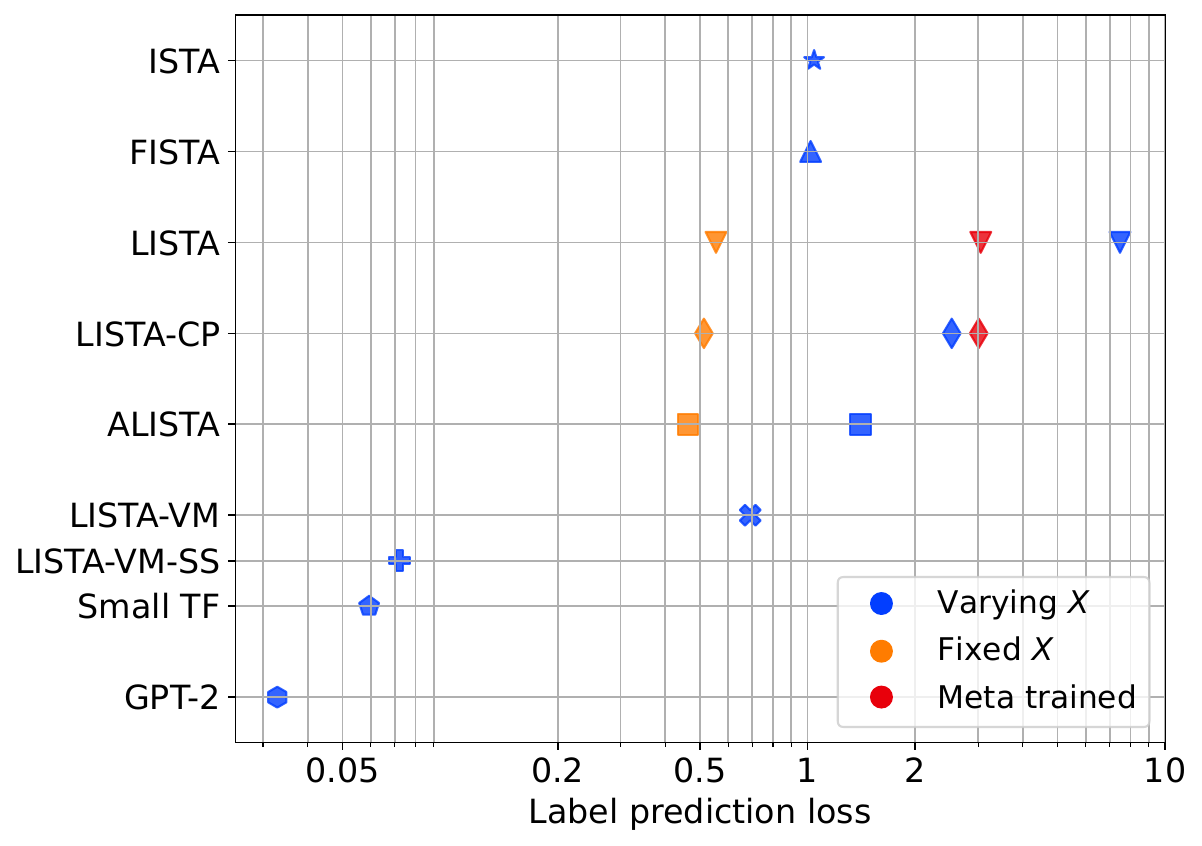}
     \caption{With support constraints}
     \label{fig:2}
 \end{subfigure}
 \begin{subfigure}{0.33\linewidth}
 \centering
     \includegraphics[scale=0.225]{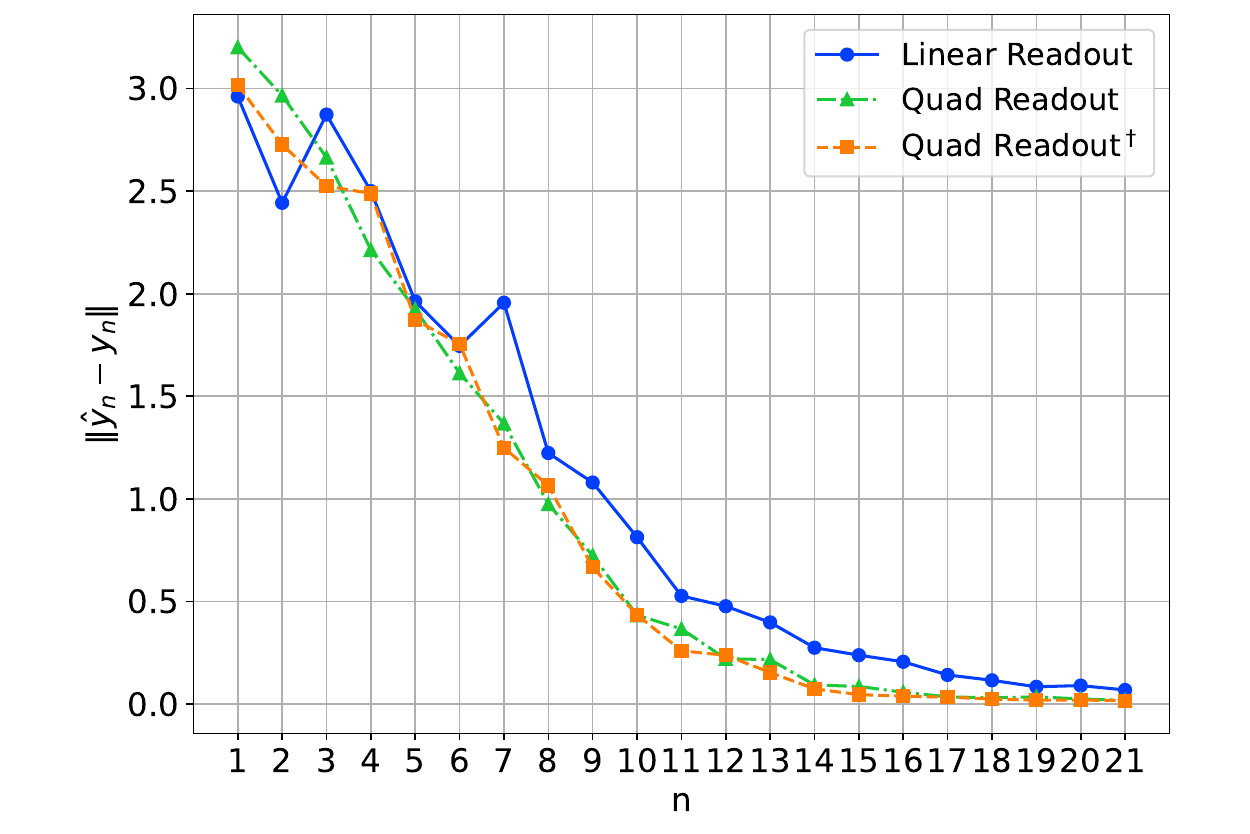}
     \caption{Without support constraints}\label{fig:3}
 \end{subfigure}
 \vspace{-0.1in}
 \scriptsize
 \caption{\small Experimental results for sparse recovery. 
   (a) $S=3$. (b) $S=3$, and the support is restricted to be within the first 10 entries. (c) Prediction with different read-outs functions. }
 \label{fig:alg-performance}
 \vspace{-0.2cm}
    \scriptsize
\end{figure}

\textbf{Results.} We test the prediction performance of the baseline algorithms and Transformers on a sparse recovery instance \((\Xv, \yv, \xv_{N+1})\), and plot the label prediction loss in \Cref{fig:alg-performance}.

We first do not impose any support constraint on $\betav$. We start with the general setting where the testing instance is randomly generated (Varying \(\Xv\)). As shown in \Cref{fig:1}, GPT-2 outperforms Small TF, followed by LISTA-VM, which outperforms iterative algorithms FISTA and ISTA, while the classical LISTA-type algorithms LISTA, LISTA-CP, and ALISTA perform the worst. Such results highlight the efficiency of LISTA-VM, Small TF and GPT-2 in solving ICL sparse recovery problems, corroborating our theoretical result in \Cref{thm:thm-concentration-W}. Meanwhile, classical LISTA-type algorithms cannot handle mismatches between the measurement matrices in pre-training and testing, leading to poor prediction performance. When $\Xv$ during testing is fixed to be the same as that in pre-training, all LISTA-type algorithms achieve performances comparable with Small TF. For meta-trained LISTA and LISTA-CP, the corresponding prediction loss (denoted by red marks in \Cref{fig:1}) are still much higher than that under LISTA-VM. This indicates that meta-training cannot help those classical LISTA-type algorithms achieve performances comparable with LISTA-VM, further corroborating the strong generalization capability induced by the constructed Transformer. 

Next, we impose an additional constraint on the support of the sparse vectors. For in-context sparse recovery instances used in both pre-training and testing, we set the support of the sparse vector \(\betav\) to the first \(10\) entries, i.e., \(\Sb = \{1, 2, \cdots, 10\}\). As observed in \Cref{fig:2}, Small TF and GPT-2 significantly improve their performances in \Cref{fig:1}, while other baseline algorithms do not exhibit significant performance improvements.
In \Cref{fig:2}, we present the experimental results for a support-selected version of LISTA-VM, referred to as LISTA-VM-SS. This algorithm is a simple variation of LISTA-VM, where we incorporate prior knowledge of the support by setting all columns in \(\Xv\) whose indices are not in the prior support set to be zero vectors.
As we claim in \Cref{cor:cor-restrict-sup} and \Cref{cor:c2}, a Transformer could perform this LISTA-VM-SS by utilizing prior knowledge of the support. Our results show that the LISTA-VM-SS achieves an in-context prediction error of approximately \(0.07\), which is almost $10 \times$ better than the standard version of LISTA-VM and is comparable to the prediction error of Small TF. This empirical finding corroborates \Cref{cor:cor-restrict-sup}.

Finally, we examine the label prediction loss under Small TF with three types of read-out functions, i.e., linear read-out function (\emph{Linear Readout}), query read-out function (\emph{Query Readout}), and another quadratic read-out function with parameters selected according to the proof of \Cref{thm:prediction-linear-read} (\emph{Query Readout\textsuperscript{\textdagger}}). In \Cref{fig:3}, we observe that the label prediction error is lower with the query read-out functions than with the linear read-out function. When \(n\) becomes large, the gap between the linear read-out function and the other two types of query read-out functions becomes insignificant, which is consistent with our theoretical result in \Cref{thm:prediction-linear-read}. Meanwhile, those two query read-out functions behave very similarly.

\section{Conclusion}
In this work, we demonstrated that Transformers' known ICL capabilities could be understood as performing L2O algorithms. 
Specifically, we showed that for in-context sparse recovery tasks, Transformers can execute the LISTA-VM algorithm with a provable linear convergence rate. Our results highlight that, unlike existing LISTA-type algorithms, which are limited to solving individual sparse recovery problems with fixed measurement matrices, Transformers can address a general class of sparse recovery problems with varying measurement matrices during inference without requiring parameter updates. Experimentally, we demonstrated that Transformers can leverage prior knowledge from training tasks and generalize effectively across different lengths of demonstration pairs, where traditional L2O methods typically fail. 

\newpage
\onecolumn

\subsubsection*{Acknowledgments}
The work of R. Liu and J. Yang was supported in part by the U.S. National Science Foundation under the grants ECCS-2133170 and ECCS-2318759. The work of R. Zhou was supported in part by NSF grants 2139304, 2146838 and the Army Research Laboratory grant under Cooperative Agreement W911NF-17-2-0196. The work of C. Shen was supported in part by the U.S. National Science Foundation under the grants CNS-2002902, ECCS-2029978, ECCS-2143559, ECCS-2033671, CPS-2313110, and ECCS-2332060. 
\bibliography{refs}
\bibliographystyle{apalike}

\newpage

\appendix
\centerline{{\fontsize{18}{18}\selectfont \textbf{Supplementary Materials}}}
\tableofcontents

\newpage
\section{Additional Related Works}\label{sec:additional-refs}

\paragraph{General L2O Techniques.}

L2O leverages machine learning to develop optimization algorithms, aiming to improve existing methods and innovate new ones. As highlighted by \citet{sucker2024learningtooptimize} and \citet{chen2022learning}, L2O intersects with meta-learning (also known as ``learning-to-learn'') and automated machine learning (AutoML).

Unlike meta-learning, which focuses on enabling models to quickly adapt to new tasks with minimal data by leveraging prior knowledge from diverse tasks~\citep{finn2017modelagnostic, hospedales2020metalearning}, L2O aims to improve the optimization process itself by developing adaptive algorithms tailored to specific tasks, leading to faster convergence and enhanced performance in model training~\citep{andrychowicz2016learning, li2016learning}. Thus, while meta-learning enhances task adaptability, L2O refines the efficiency of the optimization process. In contrast, AutoML focuses on model selection, optimization algorithm selection, and hyperparameter tuning~\citep{yao2018taking}; L2O distinguishes itself by its ability to generate new optimization techniques through learned models.

L2O has demonstrated significant potential across various optimization fields and applications. For instance, \citet{andrychowicz2016learning} introduced a method where optimization algorithms are learned using recurrent neural networks trained to optimize specific classes of functions. \citet{li2016learning} proposed learning optimization algorithms through reinforcement learning, utilizing guided policy search to develop optimization strategies. Furthermore, \citet{hruby2021learning} applied L2O to address the ``minimal problem'', a common challenge in computer vision characterized by the presence of many spurious solutions. They trained a multilayer perceptron model to predict initial problem solutions, significantly reducing computation time.

\paragraph{Training Dynamics of Transformers.} There exist some works aiming to theoretically understand the ICL mechanism in Transformer through their training dynamics. 
\citet{ahn2023transformers,mahankali2023one,zhang2023trained,huang2023incontext} investigate the dynamics of Transformers with a single attention layer and a single head for in-context linear regression tasks. 
\citet{cui2024superiority} prove that Transformers with multi-head attention layers outperform those with single-head attention. \citet{cheng2023transformers} show that local optimal solutions in Transformers can perform gradient descent in-context for non-linear functions. \citet{kim2024transformers} study the non-convex mean-field dynamics of Transformers, and \citet{nichani2024transformers} characterize the convergence rate for the training loss in learning a causal graph. Additionally, \citet{chen2024training} investigate the gradient flow in training multi-head single-layer Transformers for multi-task linear regression. \citet{chen2024provably} propose a supervised training algorithm for multi-head Transformers.

The training dynamics of Transformers for binary classification \citep{tarzanagh2023max, tarzanagh2023transformers,vasudeva2024implicit,li2023theoretical,deora2023optimization,li2024training} and next-token prediction (NTP) \citep{tian2023scan,tian2023joma,li2024mechanics,huang2024non} have also been studied recently.

\newpage
\section{Table of Notations}

\begin{table}[h]
\centering
\begin{tabular}{|c|p{10cm}|}
\hline
\textbf{Notation} & \textbf{Definition} \\
\hline
\(\Xv\) & Measurement Matrix \\
\hline
\(\betav^*\) & Ground-truth sparse vector \\
\hline
\(\epsilon\) & Noise Vector \\
\hline
\(Y\) & \(Y = \Xv\betav^*+\epsilon\) \\
\hline
\(d\) & Number of columns of \(\Xv\) \\
\hline
\(N\) & Number of Measurement Vectors, i.e., number of rows of \(\Xv\) \\
\hline
\(S\) & Sparsity of \(\betav^*\), i.e., \(\|\betav^*\|_0=S\)\\
\hline
\(\Sb\) & Support set of \(\betav^*\)\\
\hline
\(D\) & Dimension in the Self-attention Layer \\
\hline
\(M\) & Number of Heads in the Self-attention Layer \\
\hline
\(D'\) & Hidden Dimension in the MLP Layer \\
\hline
\(K\) & Number of Layers in Transformer\\
\hline
\(\Qv_{i}^{(k)}\) & Query Matrix of the \(k\)-th Layer of Transformer's \(i\)-th Head \\
\hline
\(\Kv_{i}^{(k)}\) & Key Matrix of the \(k\)-th Layer of Transformer's \(i\)-th Head \\
\hline
\(\Vv_{i}^{(k)}\) & Value Matrix of the \(k\)-th Layer of Transformer's \(i\)-th Head \\
\hline
\(\Hv^{(k)}\) & Input Sequence of the \(k\)-th Layer of Transformer\\
\hline
\(\hv_i^{(k)}\) & \([\Hv^{(k)}]_{:,i}\)\\
\hline
\(\text{sign}(x)\) & Sign Function: \(\text{sign}(x)=|x|/x\) if \(x\neq 0\),  \(\text{sign}(x)=0\) if \(x= 0\)\\
\hline
\(\ST_{\theta}(x)\) & Soft Thresholding Function: \( \ST_{\theta}(x) = \text{sign}(x) \max\{0, |x| - \theta\} \)\\
\hline
\(\sigma:\Rb^d\rightarrow\Rb^d\) & ReLU Function:\([\sigma(x)]_i = x_i\) if \(x_i\geq 0\),  \([\sigma(x)]_i = 0\) if \(x_i< 0\)\\
\hline
\(\neg \Ec\) & Complement of an event \(\Ec\)\\
\hline
\end{tabular}

\label{tab:notations}
\end{table}

\section{Deferred Proofs in Section \ref{sec:tf-alista-like} }

\subsection{Transformer Structure}\label{subsec:b1}
\paragraph{Attention layer.} 
Consider a model consisting of \( K \) Transformer layers, where each layer is equipped with four attention heads. These heads are uniquely indexed as \( +1 \), \( -1 \), \( +2 \), and \( -2 \) to distinguish their specific roles within the layer.
\begin{align}
\Qv_{\pm1}^{(k)} &= 
\begin{bmatrix}
    \mathbf{0}_{(d + 1) \times (d + 1)} & \mathbf{0}_{(d + 1) \times d} & \mathbf{0}_{d + 1} \\
    \mathbf{0}_{d \times (d + 1)} & \Mv^{Q,(k)}_{\pm1} & \mathbf{0}_{d}\\
    \mathbf{0}_{1 \times (d + 1)} & \mathbf{0}_{1 \times d} & -B
\end{bmatrix}, &
\Qv_{\pm2}^{(k)} &= \begin{bmatrix}
    \mathbf{0}_{d \times (2d + 1)} & \mathbf{0}_{d}\\
    \mathbf{0}_{1 \times (2d + 1)} & m_{\pm2}^{Q,(k)}\\
    \mathbf{0}_{(d + 1) \times (2d + 1)} & \mathbf{0}_{d + 1}
\end{bmatrix}\nonumber\\
\Kv_{\pm1}^{(k)} &=
\begin{bmatrix}
    \mathbf{0}_{(d + 1) \times d} & 
     \mathbf{0}_{(d + 1) \times (d + 1)} & \mathbf{0}_{d + 1}\\
   \Iv_{d \times d} & \mathbf{0}_{d \times (d + 1)} & \mathbf{0}_{d}\\
   \mathbf{0}_{1 \times d} & \mathbf{0}_{1 \times (d + 1)} & {1}
\end{bmatrix}, &
\Kv_{\pm2}^{(k)} &=
\begin{bmatrix}
        \mathbf{0}_{d \times d} & \mathbf{0}_{d} & \mathbf{0}_{d \times d + 1}\\
        \mathbf{0}_{1 \times d} & 1 & \mathbf{0}_{1 \times (d + 1)}\\
        \mathbf{0}_{(d + 1) \times d} & \mathbf{0}_{d + 1} & \mathbf{0}_{(d + 1) \times (d + 1)}
\end{bmatrix}\nonumber\\
\Vv_{\pm1}^{(k)} &= \begin{bmatrix}
    \mathbf{0}_{(d+1)\times d} & \mathbf{0}_{(d + 1) \times (d + 2)}\\
    \Mv^{V,(k)}_{\pm1} & \mathbf{0}_{d \times (d + 2)} \\
    \mathbf{0}_{{1} \times d} & \mathbf{0}_{{1} \times (d + 2)} 
\end{bmatrix}, &
\Vv_{\pm2}^{(k)} &= \begin{bmatrix}
    \mathbf{0}_{(d + 1) \times d} & \mathbf{0}_{(d + 1) \times d + 2}\\
    \Mv^{V,(k)}_{\pm2} & \mathbf{0}_{d \times (d + 2)} \\
    \mathbf{0}_{{1} \times d} & \mathbf{0}_{{1} \times (d + 2)} 
\end{bmatrix},
\end{align} 
where \(\Mv^{Q,(k)}_{+1}\), \(\Mv^{Q,(k)}_{-1}\), \(\Mv^{V,(k)}_{+1}\), \(\Mv^{V,(k)}_{-1}\), \(\Mv^{V,(k)}_{+2}\), and \(\Mv^{V,(k)}_{-2}\) are all \(d \times d\) matrices, and \(m_{+2}^{Q,(k)}\), \(m_{-2}^{Q,(k)}\) are scalars.
\paragraph{MLP layer.} For the MLP layer following the \(k\)-th self-attention layer, we set
\begin{align}
\Wv_1 = 
\begin{bmatrix}
    \Wv_{1,\text{sub}}\\
    -\Wv_{1,\text{sub}}\\
    \Wv_{1,\text{sub}}\\
    -\Wv_{1,\text{sub}}
\end{bmatrix},
\Wv_2^\top=
\begin{bmatrix}
    -\Iv_{(2d+2)\times (2d+2)} \\
    \Iv_{(2d+2)\times (2d+2)} \\
    \Iv_{(2d+2)\times (2d+2)} \\
    -\Iv_{(2d+2)\times (2d+2)} 
\end{bmatrix},
\bv^{(k)}=\begin{bmatrix}
    \mathbf{0}_{5d+5}\\
    -\theta^{(k)}\cdot \mathbf{1}_{d}\\
    \mathbf{0}_{d+2}\\
    \theta^{(k)}\cdot \mathbf{1}_{d}\\
    0
\end{bmatrix}.\label{def:mlp2}
\end{align}
where the submatrix \(\Wv_{1,\text{sub}}\) is defined as \(\Wv_{1,\text{sub}} = \diag(\mathbf{0}_{(d+1)\times (d+1)}, \mathbf{I}_{d\times d}, 0)\). Therefore, the output of the MLP layer is
\begin{align}
    \mlp(\hv_i)= \begin{bmatrix}
        [\hv_i]_{1:d+1}\\
        \ST_{\theta^{(k)}}([\hv_i]_{d+2:2d+1})\\
        [\hv_i]_{2d+2}
    \end{bmatrix}.
\end{align}
where \(\ST_{\theta^{(k)}}\) is the soft-thresholding function parameterized by \(\theta^{(k)}\).

\subsection{Proof of Theorem \ref{thm:tf-implement-alista}}\label{subsec:b.1}
We start by stating an equivalent form of 
\Cref{thm:tf-implement-alista} below, where we specific \(\Mv^{(k)}\) in \Cref{thm:tf-implement-alista} to be \(\gamma^{(k)}\Mv^V\).
\begin{theorem}[Equivalent form \Cref{thm:tf-implement-alista}]\label{thm:b.1}
    Suppose Assumption \ref{assump:basic} holds.
    For a Transformer with $K$ layers as described in \Cref{sec:tf-alista-like}, set the input sequence as:
\begin{align}
\Hv^{(1)} = 
\begin{bmatrix}
    [\Xv^{\top}]_{:,1} & [\Xv^{\top}]_{:,1} & \cdots & [\Xv^{\top}]_{:,N} & [\Xv^{\top}]_{:,N} & \xv_{N+1} \\
    0 & y_1 & \cdots & 0 & y_N & 0 \\
    \betav^{(1)}_{1} & \betav^{(1)}_{2} & \cdots & \betav^{(1)}_{2N-1} & \betav^{(1)}_{2N} & \betav^{(1)}_{2N+1} \\
    1 & 0 & \cdots & 1 & 0 & 1
\end{bmatrix}.\label{def:input-seq-structure}
\end{align}
Denote $\Hv^{(k+1)}$ as the output of the $K$-th layer of the Transformer and define \(\betav_{2n+1}^{(K+1)} = \Hv^{(k+1)}_{d+2:2d+1, 2n+1}\).
There exists a set of parameters within the Transformer such that for all \( k \in [1, K] \), we have:
\begin{align*}
        \betav_{2n + 1}^{(k + 1)}
    = \ST_{\theta^{(k)}}
    \Big(\betav^{(k)}_{2n+1}-\gamma^{(k)}(\Dv_n)^\top ([\Xv]_{1:n,:}  \betav^{(k)}_{2n+1}- \yv_{1:n})\Big),
\end{align*}
where \(\Dv_n = \frac{1}{2n+1}[\Xv]_{1:n,:} (\Mv^V)^\top\) and \(\Mv^{V} \in \Rb^{d \times d}\) is embedded in the \(k\)-th Transformer lsyer.
\end{theorem}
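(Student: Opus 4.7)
The plan is to prove \Cref{thm:b.1} by induction on the layer index $k\in[1:K]$. The induction hypothesis is that the input $\Hv^{(k)}$ to the $k$-th Transformer block has the template of \Cref{def:input-seq-structure}, with only the middle $d$ rows updated from $\betav^{(1)}_i$ to the current iterates $\betav^{(k)}_i$. It then suffices to show that one attention--MLP block (a) preserves the $\xv$-rows, the $y$-row and the position-indicator row, and (b) updates the middle rows of column $2n+1$ to
\[
\ST_{\theta^{(k)}}\!\Big(\betav^{(k)}_{2n+1}-\tfrac{\gamma^{(k)}}{2n+1}\Mv^V[\Xv]_{1:n,:}^{\top}\bigl([\Xv]_{1:n,:}\betav^{(k)}_{2n+1}-\yv_{1:n}\bigr)\Big)
\]
for every $n\in[N]$, which is exactly one LISTA--VM step.

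I would first analyze the attention sub-layer at the odd columns $j=2n+1$, where the causal mask supplies the factor $1/(2n+1)$ automatically. Heads $\pm 1$ are engineered so that the pre-ReLU score at position $i$ equals $\xv_{\lceil i/2\rceil}^{\top}\Mv^{Q,(k)}_{\pm 1}\betav^{(k)}_{2n+1}-B\,\mathbf{1}\{i\text{ odd}\}$: $\Qv_{\pm 1}^{(k)}$ only touches the $\betav$- and indicator-rows, and $\Kv_{\pm 1}^{(k)}$ only touches the $\xv$-rows and the indicator. For $B$ large, $\relu$ zeroes every odd-$i$ contribution, leaving a sum over even indices $i=2l$ that carry the paired $(\xv_l,y_l)$. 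Choosing $\Mv^{Q,(k)}_{\pm 1}=\pm\Iv$ and $\Mv^{V,(k)}_{\pm 1}=\pm\gamma^{(k)}\Mv^V$ and using $\relu(z)-\relu(-z)=z$ across the two heads recovers the signed linear form, producing $\tfrac{\gamma^{(k)}}{2n+1}\Mv^V\sum_{l=1}^{n}\xv_l\xv_l^{\top}\betav^{(k)}_{2n+1}$ in the middle rows of column $2n+1$. A structurally identical argument for heads $\pm 2$ (whose query reads the indicator via $m^{Q,(k)}_{\pm 2}=\pm 1$ and whose key reads the $y$-row) supplies the $-\tfrac{\gamma^{(k)}}{2n+1}\Mv^V\sum_l y_l\xv_l$ piece after the same ReLU-decomposition trick and an opposite sign in the value matrices. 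Because each $\Vv$-matrix writes only into the middle $d$ rows, the residual preserves all other rows and (a) holds.

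For the MLP sub-layer, $\Wv_{1,\mathrm{sub}}=\diag(\mathbf{0}_{(d+1)\times(d+1)},\Iv_{d\times d},0)$ restricts each of the four stacked branches of $\Wv_1$ to the middle $d$ coordinates of every token, so only those rows are modified. On those rows, the alternating signs of the branches together with the biases $\pm\theta^{(k)}$ and the pattern $\Wv_2=[-\Iv,\Iv,\Iv,-\Iv]$ realize the standard four-piece ReLU decomposition of $x\mapsto\ST_{\theta^{(k)}}(x)-x$, so that after the residual the middle rows become $\ST_{\theta^{(k)}}(\cdot)$ entrywise. Composing with the attention output reinstates the template for $\Hv^{(k+1)}$ and closes the induction.

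The main obstacle is pure bookkeeping rather than analysis: one has to verify with care that (i) the ``$-B$'' term on the position indicator cleanly suppresses the odd-$i$ contributions after $\relu$ and the causal mask, so that only even columns enter the sum; (ii) the paired heads $\pm 1$ and $\pm 2$ combine via $\relu(z)-\relu(-z)=z$ so that the signed (not merely positive) linear and $y$-dependent terms are reconstructed; and (iii) every value matrix writes exclusively into the middle $d$ rows so that the $\xv$-, $y$- and indicator-rows survive the residual and the induction hypothesis is reinstated at layer $k+1$ simultaneously for every $n\in[N]$.
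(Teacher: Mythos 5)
Your plan is essentially the paper's own proof: induction on the layer index with the embedding template of \Cref{def:input-seq-structure} as the invariant, the paired heads exploiting $\relu(z)-\relu(-z)=z$ to reconstruct the Gram term and the label term, the $-B$ entry (query indicator times key indicator) gating out the odd, label-free key columns, the causal mask contributing the $\frac{1}{2n+1}$ normalization, and the four-branch ReLU MLP realizing $\ST_{\theta^{(k)}}$ on the middle $d$ rows only while the residual preserves the other rows. Two of the items you dismiss as bookkeeping are, however, exactly where the paper has to do real work. First, a sign slip: with your stated pairings ($\Mv^{Q,(k)}_{\pm1}=\pm\Iv$ with $\Mv^{V,(k)}_{\pm1}=\pm\gamma^{(k)}\Mv^{V}$, and the opposite-signed values for heads $\pm2$), the attention adds $+\frac{\gamma^{(k)}}{2n+1}\Mv^{V}[\Xv]_{1:n,:}^{\top}\big([\Xv]_{1:n,:}\betav^{(k)}_{2n+1}-\yv_{1:n}\big)$ rather than subtracting it, so the quantity entering the soft threshold is not the claimed LISTA-VM step; the paper instead pairs $\Mv^{Q,(k)}_{+1}=-\Iv$ with $\Mv^{V,(k)}_{+1}=+\gamma^{(k)}\Mv^{V}$ (and same-signed pairing for $\pm2$). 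Trivial to repair, but as displayed your update has the wrong sign. Second, and more substantively, ``for $B$ large'' is not self-justifying: the ReLU gating must hold at every layer $k\le K$, which requires a bound on $\|\betav^{(k)}_{i}\|$ that is uniform over the layers, and such a bound is not implied by your structural induction hypothesis alone. The paper's \Cref{prop:prop2} carries the extra invariant $|\betav^{(k)}_{i}|\le C_{\betav}^{k}$ through the induction and only then fixes $B\ge b_{\betav}b_{\xv}+b_{\xv}\sqrt{d}\,(b_{\betav}+C_{\betav}^{K})$, a single choice valid for all $K$ layers; you should add this quantitative bound to your induction invariant, otherwise step (i) of your verification cannot be closed.
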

Proving the theorem is equivalent to demonstrating the existence of a Transformer for which the following proposition holds for any \(k \geq 2\):
\begin{proposition}\label{prop:prop2}
    Suppose Assumption \ref{assump:basic} holds. 
    For a $K$ layers Transformer with structure described in \Cref{sec:tf-alista-like}, the input sequence of the $k$-th layer of the Transformer satisfies:
    \begin{align*}
        \Hv^{(k)}  
        =
        \begin{bmatrix}
        [\Xv ^{\top}]_{:,1} & [\Xv ^{\top}]_{:,1}& \cdots& [\Xv ^{\top}]_{:,N} & [\Xv ^{\top}]_{:,N} & \xv_{N+1}\\
        0 & y_1 & \cdots & 0 & y_N &  0\\
        \betav^{(k)}_{1}& \betav^{(k)}_{2}& \cdots & \betav^{(k)}_{2N-1}& \betav^{(k)}_{2N} & \betav^{(k)}_{2N+1}\\
        1 & 0 & \cdots & 1 & 0 & 1
    \end{bmatrix},
    \end{align*}\label{equ:structure-embedding}
    where it holds that \(\betav_{2n + 1}^{(k)}
    = \ST_{\theta^{(k-1)}}
    \Big(\betav^{(k-1)}_{2n+1}-\gamma^{(k-1)}(\Dv_n)^\top ([\Xv]_{1:n,:}  \betav^{(k)}_{2n+1}- \yv_{1:n})\Big)\), \(\Dv_n = \frac{1}{2n+1}[\Xv]_{1:n,:} (\Mv^V)^\top\) and \(|\betav_{2n+1}^{(k-1)}|\leq C_{\betav}^k\) for Constant \(C_{\betav}\).
\end{proposition}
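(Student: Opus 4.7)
The plan is to prove Proposition~\ref{prop:prop2} by induction on the layer index $k$. The base case follows directly from the construction of $\Hv^{(1)}$ in \eqref{def:input-seq-structure}: the $\xv$-rows, $y$-row, and parity-indicator row are already in place, and $\betav^{(1)}\equiv\mathbf{0}$ trivially satisfies the initial boundedness. For the inductive step, the crucial structural observation is that every value matrix $\Vv_{\pm 1}^{(k)}, \Vv_{\pm 2}^{(k)}$ in Appendix~\ref{subsec:b1} has its nonzero block supported only in rows $d{+}2$ through $2d{+}1$; combined with the residual connection of the attention layer, this immediately preserves the $\xv$-, $y$-, and indicator rows of $\Hv^{(k)}$. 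Hence it suffices to compute the attention update to rows $d{+}2$ through $2d{+}1$ at each odd column $2n{+}1$, and then compose with the MLP.

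For heads $\pm 1$, a direct matrix multiplication gives the key--query score $\xv_{\lceil i/2\rceil}^\top \Mv^{Q,(k)}_{\pm 1}\betav^{(k)}_j - B\cdot\mathrm{ind}_i\cdot\mathrm{ind}_j$; at the target query $j=2n{+}1$ (odd), the $-B$ penalty combined with the inductive bound $|\betav^{(k)}|\le C_\betav^k$ drives every odd-key score below zero, so the ReLU kills them and only the even keys $i=2,4,\ldots,2n$ survive, carrying $\xv_1,\ldots,\xv_n$. Taking $\Mv^{Q,(k)}_{+1}=-\Mv^{Q,(k)}_{-1}=\Iv_d$ and $\Mv^{V,(k)}_{\pm 1}=\mp\gamma^{(k)}\Mv^V$, the identity $\relu(a)-\relu(-a)=a$ assembles the two heads into $-\frac{\gamma^{(k)}}{2n+1}\Mv^V\sum_{m=1}^n\xv_m(\xv_m^\top\betav^{(k)}_{2n+1})$ after the $1/(2n{+}1)$ mask normalization. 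For heads $\pm 2$, the score $y_{\text{val}}(i)\cdot m^{Q,(k)}_{\pm 2}\cdot\mathrm{ind}_j$ is automatically nonzero only when $j$ is odd and $i$ is even (since $y_{\text{val}}$ vanishes at odd keys); the analogous sign-splitting with $m^{Q,(k)}_{\pm 2}=\pm 1$ and $\Mv^{V,(k)}_{\pm 2}=\pm\gamma^{(k)}\Mv^V$ yields $+\frac{\gamma^{(k)}}{2n+1}\Mv^V\sum_{m=1}^n y_m\xv_m$. Adding both four-head contributions to the residual $\betav^{(k)}_{2n+1}$ produces exactly $\tilde\betav_{2n+1}=\betav^{(k)}_{2n+1}-\gamma^{(k)}\Dv_n^\top([\Xv]_{1:n,:}\betav^{(k)}_{2n+1}-\yv_{1:n})$ with $\Dv_n=\frac{1}{2n+1}[\Xv]_{1:n,:}(\Mv^V)^\top$, as required.

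For the MLP layer, the construction in \eqref{def:mlp2} produces four ReLU copies of $\betav$ (two unshifted and two shifted by $\pm\theta^{(k)}$), which the read-out matrix $\Wv_2=[-\Iv,\Iv,\Iv,-\Iv]$ combines, via the standard identity $\ST_\theta(b)=\relu(b-\theta)-\relu(-b-\theta)$ together with $b=\relu(b)-\relu(-b)$, into the quantity $\ST_{\theta^{(k)}}(\betav)-\betav$ on the $\betav$-rows and zero on the remaining rows. The residual MLP output therefore applies $\ST_{\theta^{(k)}}$ element-wise only to rows $d{+}2$ through $2d{+}1$, giving $\betav^{(k+1)}_{2n+1}=\ST_{\theta^{(k)}}(\tilde\betav_{2n+1})$ and preserving the claimed structural form of $\Hv^{(k+1)}$. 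Maintenance of $|\betav^{(k+1)}_{2n+1}|\le C_\betav^{k+1}$ follows from the non-expansiveness of $\ST_{\theta^{(k)}}$ together with $\|\xv\|\le b_\xv$, $\|\betav^*\|_1\le b_\betav$, boundedness of $\gamma^{(k)}\Mv^V$, and the inductive bound on $\betav^{(k)}$, via routine norm estimates on the gradient step.

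The main obstacle is calibrating the fixed constant $B$ in the $\pm 1$-head queries: it must be chosen once, yet remain large enough that $-B\cdot\mathrm{ind}_i\cdot\mathrm{ind}_j$ dominates $\xv_{\lceil i/2\rceil}^\top\Mv^{Q,(k)}_{\pm 1}\betav^{(k)}_j$ at every layer and every odd--odd pair, and simultaneously small enough that the ``$-B$'' does not leak through the ReLU into the downstream arithmetic. Because $|\betav^{(k)}|$ can in principle grow with $k$, this forces the boundedness invariant to be carried in lockstep with the structural invariant throughout the induction, so that $B$ can ultimately be fixed in terms of $b_\xv$, $b_\betav$, $\|\Mv^V\|$, $\max_k\gamma^{(k)}$, and the total layer count $K$, and still dominate uniformly.
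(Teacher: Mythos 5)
Your proposal is correct and follows essentially the same route as the paper's own proof: the same induction on $k$, the same four-head sign-splitting using $\relu(a)-\relu(-a)=a$ to assemble the masked attention into the gradient step with $\Dv_n=\frac{1}{2n+1}[\Xv]_{1:n,:}(\Mv^V)^\top$, the same $-B$ query-key mechanism (calibrated against the geometric bound $C_{\betav}^k$ carried along in the induction) to kill the odd-odd score terms, and the same ReLU realization of $\ST_{\theta^{(k)}}$ in the MLP, followed by the same routine norm estimate to propagate $|\betav^{(k+1)}_{2n+1}|\leq C_{\betav}^{k+1}$. The remaining differences are cosmetic (relabeling which head carries which sign, and starting the induction at $k=1$ instead of $k=2$).
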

\begin{proof}[Proof of \Cref{prop:prop2}]
We prove \Cref{prop:prop2} is true for all \(k \geq 2\) by induction. First, \(\Hv^{(k)}\) for \(k=2\) satisfies the condition automatically; therefore, \Cref{prop:prop2} is true when \(k=2\). Then, we demonstrate that if \Cref{prop:prop2} is true for \(k-1\), it remains valid for \(k\). For odd values of \(i\), the token-wise outputs of the \(k\)-th attention layer corresponding to the first and second heads satisfy:
\begin{align*}
    &\Qv_{\pm1}^{(k)} \hv_{i}^{(k)} = 
    \begin{bmatrix}
    \mathbf{0}_{d+1}\\
    \Mv^{Q,(k)}_{\pm1} \betav_i^{(k)}\\
    -B
    \end{bmatrix};\quad
    \Kv_{\pm1}^{(k)} \hv_i^{(k)} = 
    \begin{bmatrix}
    \mathbf{0}_{d+1}\\ 
    [\Xv^\top]_{:,\lfloor \frac{i+1}{2}\rfloor }\\
    1
    \end{bmatrix};\quad
    \Vv_{\pm1}^{(k)} \hv_i^{(k)} = 
    \begin{bmatrix}
        \mathbf{0}_{d+1}\\
        \Mv^{V,(k)}_{\pm1}[\Xv^\top]_{:,\lfloor \frac{i+1}{2}\rfloor}\\
        0
    \end{bmatrix}.
\end{align*}
The token-wise outputs of the third and fourth heads for odd \(i\) satisfy
\begin{align*}
    &\Qv_{\pm2}^{(k)} \hv_{i}^{(k)} = 
    \begin{bmatrix}
    \mathbf{0}_{d}\\
    m^{Q,(k)}_{\pm2}\\
    \mathbf{0}_{d + 1}
    \end{bmatrix};\quad
    \Kv_{\pm2}^{(k)} \hv_i^{(k)} = 
    \mathbf{0}_{2d + 2};\quad
    \Vv_{\pm2}^{(k)} \hv_i^{(k)} = 
    \begin{bmatrix}
        \mathbf{0}_{d+1}\\
        \Mv^{V,(k)}_{\pm2}[\Xv^\top]_{:,\lfloor\frac{i+1}{2}\rfloor}\\
        0
    \end{bmatrix}.
\end{align*}
Besides, for any \(i\) that is an even number, the token-wise outputs of the \(k\)-th attention layer corresponding to the first and second heads satisfy:
\begin{align*}
    &\Qv_{\pm1}^{(k)} \hv_{i}^{(k)} = 
    \begin{bmatrix}
    \mathbf{0}_{d+1}\\
    \Mv^{Q,(k)}_{\pm1} \betav_i^{(k)}\\
    0
    \end{bmatrix};\quad
    \Kv_{\pm1}^{(k)} \hv_i^{(k)} = 
    \begin{bmatrix}
    \mathbf{0}_{d+1}\\ 
    [\Xv^\top]_{:,\lfloor \frac{i+1}{2}\rfloor}\\
    0
    \end{bmatrix};\quad
    \Vv_{\pm1}^{(k)} \hv_i^{(k)} = 
    \begin{bmatrix}
        \mathbf{0}_{d+1}\\
        \Mv^{V,(k)}_{\pm1}[\Xv^\top]_{:,\lfloor\frac{i+1}{2}\rfloor}\\
        0
    \end{bmatrix}.
\end{align*}

The token-wise outputs of the third and fourth heads for even \(i\) satisfy
\begin{align*}
    &\Qv_{\pm2}^{(k)} \hv_{i}^{(k)} = 
    \mathbf{0}_{2d + 2};\quad
    \Kv_{\pm2}^{(k)} \hv_i^{(k)} = 
    \begin{bmatrix}
    \mathbf{0}_{d}\\ 
    y_{\frac{i}{2}}\\
    \mathbf{0}_{d+1}
    \end{bmatrix};\quad
    \Vv_{\pm2}^{(k)} \hv_i^{(k)} = 
    \begin{bmatrix}
        \mathbf{0}_{d+1}\\
        \Mv^{V,(k)}_{\pm2}[\Xv^\top]_{:,\lfloor\frac{i+1}{2}\rfloor}\\
        0
    \end{bmatrix}.
\end{align*}
Therefore, for \(\hv_i\) with an odd index and head with index \(u \in \{1, -1\}\), we have
\begin{align}
    &\sigma\mleft(\Big\langle\Qv^{(k)}_{u}\hv_i^{k},\Kv^{(k)}_{u}\hv_j^{k}\Big\rangle\mright)\cdot \Vv_{u}^{(k)}\hv^{(k)}_j \nonumber\\
 &   = 
    \sigma\mleft((\betav^{(k)}_i)^\top(\Mv_u^Q)^\top [\Xv^\top]_{:,\lfloor\frac{j+1}{2}\rfloor}-\mathbbm{1}_{\{j\%2=1\}}(j)B\mright)\cdot \begin{bmatrix}
        \mathbf{0}_{d+1}\\
        \Mv^{V,(k)}_{u}[\Xv^\top]_{:,\lfloor\frac{j+1}{2}\rfloor}\\
        0
    \end{bmatrix}.\label{equ:8}
\end{align}
Also, for \(\hv_i\) with an odd index and head with index \(u \in \{2, -2\}\), we have
\begin{align}
\sigma\mleft(\Big\langle\Qv^{(k)}_{u}\hv_i^{k},\Kv^{(k)}_{u}\hv_j^{k}\Big\rangle\mright)\cdot \Vv_{u}^{(k)}\hv^{(k)}_j 
    = 
    \mathbbm{1}_{{\{j\%2=0\}}}    \cdot\sigma\mleft(m_{u}^{Q,(k)}\cdot y_{\frac{j}{2}}\mright)\cdot \begin{bmatrix}
        \mathbf{0}_{d+1}\\
        \Mv^{V,(k)}_{u}[\Xv^\top]_{:,\lfloor\frac{j+1}{2}\rfloor}\\
        0
    \end{bmatrix}.\label{equ:9}
\end{align}
We specify the parameters of the self-attention layer as
    \begin{align*}
        &\Mv_1^{Q,(k)} = -\Iv_{d\times d},\quad
        \Mv_1^{V,(k)} = \gamma^{(k)}\Mv^V,\\
        &\Mv_{-1}^{Q,(k)} = \Iv_{d\times d},\quad
        \Mv_{-1}^{V,(k)} = -\gamma^{(k)}\Mv^V,\\
        &m_{2}^{q,(k)} = 1,\quad
        \Mv_{2}^{V,(k)} = \gamma^{(k)}\Mv^V,\\
        &m_{-2}^{q,(k)} = -1,\quad
        \Mv_{-2}^{V,(k)} = -\gamma^{(k)}\Mv^V.
    \end{align*}   
Then, for head index $u\in\{1,-1\}$ we have
\begin{align}
    (\betav^{(k)}_i)^\top(\Mv_u^Q)^\top [\Xv^\top]_{:,\lfloor \frac{j+1}{2}\rfloor}-B 
    &\leq
    \big|(\betav^{(k)}_i)^\top[\Xv^\top]_{:,\lfloor \frac{j+1}{2}\rfloor}\big|-B\nonumber\\
    &\leq
    \big\|\betav^{(k)}_i\big\|_{1}\big\|[\Xv^\top]_{:,\lfloor \frac{j+1}{2}\rfloor}\big\|_{\infty}-B\label{ineq:ineq9}\\
    &\leq
    \big\|\betav^{(k)}_i-\betav^*\big\|_{1}
    \big\|[\Xv^\top]_{:,\lfloor \frac{j+1}{2}\rfloor}\big\|_{\infty}+b_{\betav}b_{\xv}-B,\nonumber
\end{align}
where \Cref{ineq:ineq9} is given by Hölder's Inequality. Therefore, combining this with the assumption \(\|\xv\|_{\infty} \leq b_{\xv}\) and \(\|\betav^*\|_1 \leq b_{\betav}\), we obtain
\begin{align}
    (\betav^{(k)}_i)^\top(\Mv_u^Q)^\top [\Xv^\top]_{:,\lfloor \frac{j+1}{2}\rfloor}-B 
    &\leq
   b_{\xv}\big\|\betav^{(k)}_i-\betav^*\big\|_{1}+
    b_{\betav}b_{\xv}-B.\label{ineq:prop1-12}
\end{align}
Recall that for odd \(i\), we assume that \Cref{prop:prop2} is true for \(k-1\), then \(\betav_i^{(k)}\) satisfies
\begin{align*}
    \betav_{i}^{(k)}
    = \ST_{\theta^{(k-1)}}
    \Big(\betav^{(k-1)}_{i} - \gamma^{(k-1)}(\Dv_n)^\top ([\Xv]_{1:\lfloor\frac{i+1}{2}\rfloor,:} \betav^{(k)}_{i} - \yv_{1:\lfloor\frac{i+1}{2}\rfloor})\Big)
\end{align*}
and \(|\betav_{2n+1}^{(k-1)}|\leq C_{\betav}^k\). Then, we obtain
\begin{align}
    &|\betav_{i}^{(k)}|\nonumber\\
    &\leq |\betav_{i}^{(k-1)}|+|\gamma^{(k-1)}|\|\Mv^V\|\mleft\|\frac{1}{i}[\Xv]_{1:\lfloor\frac{i+1}{2}\rfloor,:}^\top[\Xv]_{1:\lfloor\frac{i+1}{2}\rfloor,:}\mright\||\betav_{i}^{(k-1)}|
    +b_{\betav}|\gamma^{(k-1)}|\|\Mv^V\|\mleft\|\frac{1}{i}[\Xv]_{1:\lfloor\frac{i+1}{2}\rfloor,:}\mright\|\nonumber\\
    &\leq
    C_{\betav}^{k}+b_{\xv}^2d|\gamma^{(k-1)}|\|\Mv^V\|\frac{\lfloor\frac{i+1}{2}\rfloor}{i}C_{\betav}^{k}
    +b_{\betav}b_{\xv}|\gamma^{(k-1)}|\|\Mv^V\|\frac{\lfloor\frac{i+1}{2}\rfloor}{i}\nonumber\\
    &\leq
    C_{\betav}^{k}+b_{\xv}^2d|\gamma^{(k-1)}|\|\Mv^V\|C_{\betav}^{k}
    +b_{\betav}b_{\xv}|\gamma^{(k-1)}|\|\Mv^V\|\label{ineq:beta-13}.
\end{align}
\Cref{ineq:beta-13} arises from the fact that \(\lfloor \frac{i+1}{2} \rfloor / i \leq 1\). Denoting \(\gamma_{\max} = \max_{k} \|\gamma^{(k)}\|\) and letting \(C_{\betav} = 1 + b_{\xv}^2 d \gamma_{\max} \|\Mv^V\| + b_{\betav} b_{\xv} \gamma_{\max} \|\Mv^V\|\), we have
\begin{align*}
    |\betav_{i}^{(k)}|
    \leq
    C_{\betav}^{k}+b_{\xv}^2d|\gamma^{(k-1)}|\|\Mv^V\|C_{\betav}^{k}
    +b_{\betav}b_{\xv}|\gamma^{(k-1)}|\|\Mv^V\|\leq C_{\betav}^{k+1}.
\end{align*}
Therefore, we have
\begin{align*}
    b_{\xv}\big\|\betav^{(k)}_i-\betav^*\big\|_{1}\leq b_{\xv} \sqrt{d}(b_{\betav}+C_{\betav}^k).
\end{align*}
Then, by setting \(B\geq b_{\betav}b_{\xv}+b_{\xv} \sqrt{d}(b_{\betav}+C_{\betav}^K)\), we obtain that $B>\big|(\betav^{(k)}_i)^\top[\Xv^\top]_{:,\lfloor \frac{j+1}{2}\rfloor}\big|$ for all $k\leq K$ and any odd $i$. Then, combining with \Cref{ineq:prop1-12} we have
\begin{align*}
    \sigma\mleft((\betav^{(k)}_i)^\top(\Mv_u^Q)^\top [\Xv^\top]_{:,\lfloor\frac{j+1}{2}\rfloor}-B\mright) = 0.
\end{align*}
Further, from \Cref{equ:8}, for \(i\) is odd number and \(j\in[1,3,\cdots,i]\) we have
\begin{align*}
\sigma\mleft(\Big\langle\Qv^{(k)}_{u}\hv_i^{k},\Kv^{(k)}_{u}\hv_j^{k}\Big\rangle\mright)\cdot \Vv_{u}^{(k)}\hv^{(k)}_j 
&=
\mathbf{0}_{2d+2}.
\end{align*}
Besides, for \(i\) being an odd number and \(j \in [2, 4, \cdots, i-1]\), we have
\begin{align*}
    \sigma\mleft(\Big\langle\Qv^{(k)}_{u}\hv_i^{k},\Kv^{(k)}_{u}\hv_j^{k}\Big\rangle\mright)\cdot \Vv_{u}^{(k)}\hv^{(k)}_j 
&= 
\sigma\mleft((\betav^{(k)}_i)^\top(\Mv_u^Q)^\top [\Xv^\top]_{:,\lfloor\frac{i+1}{2}\rfloor}\mright)\cdot \begin{bmatrix}
        \mathbf{0}_{d+1}\\
        \Mv^{V,(k)}_{u}[\Xv^\top]_{:,\lfloor\frac{j+1}{2}\rfloor}\\
        2
    \end{bmatrix}.
\end{align*}
Thus, when \(h_i\) is of odd index, summing over all \(j \leq i\) and \(u \in \{1, -1\}\), we obtain:
\begin{align}
    &\sum_{u\in\{-1,+1\}}\sum_{j = 1}^{i} \sigma\mleft(\Big\langle\Qv^{(k)}_{u}\hv_i^{k},\Kv^{(k)}_{u}\hv_j^{k}\Big\rangle\mright)\cdot \Vv_{u}^{(k)}\hv^{(k)}_j \nonumber\\
    &=
     \sum_{u\in\{-1,+1\}}\sum_{j=2}^{i-1}\sigma\mleft(\Big\langle\Qv^{(k)}_{u}\hv_i^{k},\Kv^{(k)}_{u}\hv_j^{k}\Big\rangle\mright)\cdot \Vv_{u}^{(k)}\hv^{(k)}_j \nonumber\\
    & = 
    \sum_{j=2}^{i-1}
    -(\betav^{(k)}_i)^\top [\Xv^\top]_{:,\lfloor\frac{j+1}{2}\rfloor}
    \cdot \begin{bmatrix}
        \mathbf{0}_{d+1}\\
        \gamma^{(k)}\Mv^{V}[\Xv^\top]_{:,\lfloor\frac{j+1}{2}\rfloor}\\
        0
    \end{bmatrix}\nonumber\\
    &=
    \begin{bmatrix}
    \mathbf{0}_{d + 1}\\
    -\gamma^{(k)}\Mv^{V} [\Xv^{\top}]_{:,1:\frac{i-1}{2}}[\Xv]_{1:\frac{i-1}{2},:}\betav^{(k)}_i\\
    0
    \end{bmatrix}.\label{equ:11}
\end{align}
Next, we consider heads with indexes \(m \in \{2, -2\}\). From \Cref{equ:9} we obtain
\begin{align}
     &\sum_{u\in\{-2,+2\}}\sum_{j = 1}^{i} \sigma\mleft(\Big\langle\Qv^{(k)}_{u}\hv_i^{k},\Kv^{(k)}_{u}\hv_j^{k}\Big\rangle\mright)\cdot \Vv_{u}^{(k)}\hv^{(k)}_j\nonumber\\ 
    &=
     \sum_{u\in\{-2,+2\}}\sum_{j = 2}^{i-1} \sigma\mleft(\Big\langle\Qv^{(k)}_{u}\hv_i^{k},\Kv^{(k)}_{u}\hv_j^{k}\Big\rangle\mright)\cdot \Vv_{u}^{(k)}\hv^{(k)}_j \nonumber\\
    &=
    \sum_{j=2}^{i-1}
     y_{\frac{j}{2}}\cdot\begin{bmatrix}
        \mathbf{0}_{d+1}\\
        \gamma^{(k)}\Mv^{V}[\Xv^\top]_{:,\frac{j}{2}}\\
        \mathbf{0}_2
    \end{bmatrix}\nonumber\\
    &=
    \begin{bmatrix}
    \mathbf{0}_{d + 1}\\
    \gamma^{(k)}\Mv^{V}[\Xv^\top]_{:,1:\frac{i-1}{2}}\yv_{1:\frac{i-1}{2}}\\
    \mathbf{0}_2
    \end{bmatrix}.\label{equ:12}
\end{align}
Combining \Cref{equ:11} and \Cref{equ:12}, we obtain the following equation for \(\hv_i\) with an odd index:
\begin{align*}
    \hv_{i}^{(k+1)} &= 
    \mlp\mleft(\hv_{i}^{(k)} + \frac{1}{i}\sum_{u\in\{\pm1,\pm2\}}\sum_{j = 1}^{i} \sigma\mleft(\Big\langle\Qv^{(k)}_{u}\hv_i^{k},\Kv^{(k)}_{u}\hv_j^{k}\Big\rangle\mright)\cdot \Vv_{u}^{(k)}\hv^{(k)}_j\mright) \\
    &= \begin{bmatrix}
             [\Xv^\top]_{:,\lfloor\frac{i+1}{2}\rfloor}\\
             0\\
             \betav^{k+1}_i\\
             1
         \end{bmatrix},
\end{align*}
where
\begin{align}
    \betav^{k+1}_i 
    &=
    \ST_{\theta^{(k)}}\mleft(\betav_i^{(k)}-
    \frac{\gamma^{(k)}}{i}\Mv^{V} [\Xv^{\top}]_{:,1:\frac{i-1}{2}}[\Xv]_{1:\frac{i-1}{2},:}\betav^{(k)}_i
    +
    \gamma^{(k)}\Mv^{V}[\Xv^\top]_{:,1:\frac{i-1}{2}}\yv_{1:\frac{i-1}{2}}
    \mright)\nonumber\\
    &=
    \ST_{\theta^{(k)}}\mleft(
    \betav_i^{(k)}-
    \frac{\gamma^{(k)}}{i}\Mv^{V}[\Xv^\top]_{:,1:\frac{i-1}{2}}\mleft([\Xv]_{1:\frac{i-1}{2},:}\betav^{(k)}_i-\yv_{1:\frac{i-1}{2}}\mright)
    \mright).\label{equ:10-updating-beta}
\end{align}
Similarly, we obtain the following equation for \(\hv_i\) with an even index:
\begin{align}
    \hv_{i}^{(k+1)} 
    &= 
    \mlp\mleft(\hv_{i}^{(k+1)} + \frac{1}{i}\sum_{u\in\{\pm1,\pm2\}}\sum_{j = 1}^{i} \sigma\mleft(\Big\langle\Qv^{(k)}_{u}\hv_i^{k},\Kv^{(k)}_{u}\hv_j^{k}\Big\rangle\mright)\cdot \Vv_{u}^{(k)}\hv^{(k)}_j\mright) \nonumber\\
    &= \begin{bmatrix}
             [\Xv^\top]_{:,\lfloor\frac{i+1}{2}\rfloor}\\
             0\\
             \betav^{k+1}_i\\
             0
         \end{bmatrix}.\label{equ:10-updating-beta-even}
\end{align}

Note that an explicit formulation of \(\betav_{i}^{(k+1)}\) is not required when \(i\) is even. Next, combining \Cref{equ:10-updating-beta} and \Cref{equ:10-updating-beta-even} gives
\begin{align*}
        \Hv^{(k + 1)}  
        =
        \begin{bmatrix}
        [\Xv ^{\top}]_{:,1} & [\Xv ^{\top}]_{:,1}& \cdots& [\Xv ^{\top}]_{:,N} & [\Xv ^{\top}]_{:,N} & \xv_{N+1}\\
        0 & y_1 & \cdots & 0 & y_N &  0\\
        \betav^{(k + 1)}_{1}& \betav^{(k + 1)}_{2}& \cdots & \betav^{(k + 1)}_{2N - 1}& \betav^{(k + 1)}_{2N} & \betav^{(k + 1)}_{2N + 1}\\
        1 & 0 & \cdots & 1 & 0 & 1
    \end{bmatrix}.
    \end{align*}
The proof for \Cref{prop:prop2} is thus complete.
\end{proof}

\section{Deferred Proofs in Section \ref{sec:theory}}

\subsection{Proof of Theorem \ref{thm:thm-concentration-W}}\label{sec:apx-c1}
In \Cref{subsec:in-context-lasso}, we assume that each row of the measurement matrix \(\Xv\) is independently sampled from an isometric sub-Gaussian distribution with zero mean and {covariance \(\diag(\sigma_1^2, \cdots, \sigma_d^2)\)}. Without loss of generality, let \(\sigma_1 \geq \sigma_2 \geq \cdots \geq \sigma_d\). We start the proof of \Cref{thm:thm-concentration-W} by introducing the auxiliary lemmas: \Cref{lemma:lemma3}, \Cref{lemma:aux-thm1-1}, and \Cref{lemma:lemma4}.
\begin{lemma}\label{lemma:lemma3}
Assume \(\betav^{(k)}\) and \(\betav^{(k+1)}\) satisfy:
\begin{align}
    \betav^{(k+1)}
    &=
    \ST_{\theta^{(k)}}
    \Big(\betav^{(k)}-\gamma\Dv^\top (\Xv \betav^{(k)}- \yv)\Big).\label{equ:updating-lemma3}
\end{align}
Define \(\sigma_{\min} = \min_{i \in \Sb} \left|\Dv_{:,i}^\top \Xv_{:,i}\right|\), \(\sigma_{\max} = \max_{i \neq j} \left|\Dv_{:,i}^\top \Xv_{:,j}\right|\) and let \(\theta^{(k)} = \gamma \sigma_{\max} c_1 e^{-c_2 (k-2)}\), where \(c_1 = \|\betav^*\|_1\) and \(c_2 = \log\left(\frac{1}{\gamma (2S - 1) \sigma_{\max} + \left|1 - \gamma \sigma_{\min}\right|}\right)\). Then, if \(\|\betav^{(k)} - \betav^*\|_1 \leq c_1 e^{-c_2 (k-2)}\) and \(\Dv\), \(\gamma\), \(\theta^{(k)}\) satisfy the following conditions:
\begin{equation}
    \begin{gathered}
        \gamma (2S - 1) \sigma_{\max} + \left|1 - \gamma \sigma_{\min}\right| \leq 1,\\
        0 \leq \gamma \Dv_{:,i}^\top \Xv_{:,i} \leq 1 \quad \forall i \in [d], \label{cond:thm1}
    \end{gathered}
\end{equation}
then \(\betav^{(k+1)}\) satisfies
\begin{align*}
    \|\betav^{(k+1)} - \betav^*\|_1 \leq c_1 e^{-c_2 (k-1)}.
\end{align*}

\end{lemma}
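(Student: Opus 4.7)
}
The plan is to unroll the update componentwise, exploit the noiseless assumption, and then handle on-support and off-support indices separately. Since $\yv = \Xv\betav^*$, the pre-threshold vector $\wv := \betav^{(k)} - \gamma\Dv^\top(\Xv\betav^{(k)}-\yv)$ can be written in the residual form $\wv = \betav^{(k)} - \gamma \Dv^\top\Xv(\betav^{(k)}-\betav^*)$, so for every coordinate $i$,
\begin{align*}
  w_i - \beta^*_i
   = \bigl(1 - \gamma\,\Dv_{:,i}^{\top}\Xv_{:,i}\bigr)(\beta^{(k)}_i - \beta^*_i)
     - \gamma \sum_{j\neq i} \Dv_{:,i}^{\top}\Xv_{:,j}\,(\beta^{(k)}_j-\beta^*_j).
\end{align*}
This decomposition isolates the diagonal contraction term (controlled by $\sigma_{\min}$) from the off-diagonal cross-talk (controlled by $\sigma_{\max}$).

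Next, I would carry along an implicit inductive hypothesis of \emph{support preservation}, $\mathrm{supp}(\betav^{(k)})\subseteq \Sb$, which holds at the base case since $\betav^{(1)}=\mathbf{0}$ and is the natural companion to the $\ell_1$ bound (it propagates because soft thresholding zeros out small coordinates). Under this, for $i\notin \Sb$ both $\beta^*_i$ and $\beta^{(k)}_i$ vanish, so only the cross-talk term survives and
\begin{align*}
  |w_i|\;\le\; \gamma\sigma_{\max}\sum_{j\in \Sb}|\beta^{(k)}_j-\beta^*_j|
  \;\le\; \gamma\sigma_{\max}\,\|\betav^{(k)}-\betav^*\|_1
  \;\le\; \gamma\sigma_{\max}\,c_1 e^{-c_2(k-2)} \;=\; \theta^{(k)}.
\end{align*}
Hence $\ST_{\theta^{(k)}}(w_i)=0=\beta^*_i$ for $i\notin \Sb$, propagating support preservation and contributing nothing to $\|\betav^{(k+1)}-\betav^*\|_1$.

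For $i\in \Sb$, I use the contraction estimate $|\ST_{\theta^{(k)}}(w_i)-\beta^*_i|\le |w_i-\beta^*_i|+\theta^{(k)}$, which follows from $|\ST_{\theta^{(k)}}(w_i)-w_i|\le \theta^{(k)}$ and the triangle inequality. Condition \eqref{cond:thm1} ensures $|1-\gamma\Dv_{:,i}^{\top}\Xv_{:,i}|\le |1-\gamma\sigma_{\min}|$ on $\Sb$, so summing the componentwise bound over $i\in \Sb$ gives
\begin{align*}
  \sum_{i\in \Sb}|w_i-\beta^*_i|
  \;\le\; \bigl(|1-\gamma\sigma_{\min}|+\gamma\sigma_{\max}(S-1)\bigr)\,\|\betav^{(k)}-\betav^*\|_1,
\end{align*}
where the $(S-1)$ factor comes from the double count when summing the off-diagonal term. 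Adding the $S\theta^{(k)}$ slack from the $S$ applications of the soft-thresholding inequality and inserting $\theta^{(k)}=\gamma\sigma_{\max}c_1 e^{-c_2(k-2)}$ yields
\begin{align*}
  \|\betav^{(k+1)}-\betav^*\|_1
  \;\le\; \bigl(|1-\gamma\sigma_{\min}|+\gamma\sigma_{\max}(2S-1)\bigr)\,c_1 e^{-c_2(k-2)}
  \;=\; e^{-c_2}\,c_1 e^{-c_2(k-2)} \;=\; c_1 e^{-c_2(k-1)},
\end{align*}
by the definition of $c_2$ and the first line of \eqref{cond:thm1} (which is exactly what makes the contraction factor $\le 1$).

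\paragraph{Main obstacle.} The delicate step is the off-support bound: without support preservation, the coefficient $|1-\gamma\Dv_{:,i}^{\top}\Xv_{:,i}|$ for $i\notin \Sb$ can be as large as $1$ and yields an $O(d)$ factor that destroys the linear rate. Threading support preservation through the induction (implicit from $\betav^{(1)}=\mathbf{0}$) is what makes the argument collapse to the clean $(2S-1)$ factor, and matching $\theta^{(k)}$ to $\gamma\sigma_{\max}\|\betav^{(k)}-\betav^*\|_1$ is exactly the balance that keeps both the zeroing-out and the shrinkage slack geometrically small.
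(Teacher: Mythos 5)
Your proposal is correct and follows essentially the same route as the paper's proof: rewrite the update in residual form using $\yv=\Xv\betav^*$, zero out off-support coordinates via the choice $\theta^{(k)}=\gamma\sigma_{\max}c_1e^{-c_2(k-2)}$ together with support preservation (the paper isolates this as a separate auxiliary lemma), bound on-support coordinates by the diagonal contraction plus cross-talk plus a $\theta^{(k)}$ slack, and sum to get the $(2S-1)$ factor and the contraction $e^{-c_2}$. The only cosmetic difference is that you use the pointwise bound $|\ST_{\theta^{(k)}}(w_i)-w_i|\le\theta^{(k)}$ where the paper invokes the subgradient characterization of soft thresholding; these yield the identical per-coordinate estimate, and your index bookkeeping is if anything cleaner than the paper's.
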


\begin{proof}[Proof of \Cref{lemma:lemma3}]
Note that we are considering the noiseless model: \(\yv = \Xv \betav^*\). We begin by rewriting
\Cref{equ:updating-lemma3}:
\begin{align*}
    \betav^{(k+1)}
    &=
    \ST_{\theta^{(k)}}
    \Big(\betav^{(k)}-\gamma^{(k)}\Dv^\top (\Xv \betav^{(k)}- \yv)\Big)\\
    &=
    \ST_{\theta^{(k)}}
    \Big(\betav^{(k)}-\gamma^{(k)}\Dv^\top \Xv( \betav^{(k)}- \betav^*)\Big).
\end{align*}
Consider entries of \(\betav^{(K+1)}\) in the support of $\betav^*$:
\begin{align*}
    \betav^{(k+1)}_\Sb
    &=
    \ST_{\theta^{(k)}}
    \Big(\betav^{(k)}_\Sb-\gamma^{(k)}(\Dv_{:,\Sb})^\top \Xv_{:,\Sb}( \betav^{(k)}_\Sb- \betav^*_\Sb)\Big)\\
    &\in
    \betav^{(k)}_\Sb-\gamma^{(k)}(\Dv_{:,\Sb})^\top \Xv_{:,\Sb}( \betav^{(k)}_\Sb- \betav^*_\Sb)-\theta^{(k)}\partial \ell_1(\betav_\Sb^{(k+1)}),
\end{align*}
where \(\partial \ell_1(\betav)\) is the sub-gradient of \(|\betav|\):
\[
[\partial \ell_1(\betav)]_{i} = \begin{cases} 
\{\text{sign}([\betav]_i)\} & \text{if } [\betav]_i \neq 0, \\
{[-1, 1]} & \text{if } [\betav]_i = 0.
\end{cases}
\]
Then, for any $i\in\Sb$, we have
\begin{align*}
    \betav^{(k+1)}_i
    \in
    \betav^{(k)}_i-\gamma^{(k)}(\Dv_{:,i})^\top \Xv_{:,\Sb}( \betav^{(k)}_\Sb- \betav^*_\Sb)-\theta^{(k)}\partial \ell_1(\betav_\Sb^{(k+1)}).
\end{align*}

Note that \(\betav^{(k)}_i - \gamma^{(k)} (\Dv_{:,i})^\top \Xv_{:,\Sb} (\betav^{(k)}_\Sb - \betav^*_\Sb)\) can be rewritten as
\begin{align*}
    &\betav^{(k)}_i-\gamma^{(k)}(\Dv_{:,i})^\top \Xv_{:,\Sb}( \betav^{(k)}_\Sb- \betav^*_\Sb)\\
    &=\betav^{(k)}_i-\gamma^{(k)}\sum_{j\in\Sb,j\neq i}(\Dv_{:,i})^\top \Xv_{:,j}( \betav^{(k)}_j- \betav^*_j) -\gamma^{(k)}(\Dv_{:,i})^\top \Xv_{:,i}( \betav^{(k)}_i- \betav^*_i)\\
    &=\betav^{*}_i-\gamma^{(k)}\sum_{j\in\Sb,j\neq i}(\Dv_{:,i})^\top \Xv_{:,j}(\betav^{(k)}_j- \betav^*_j) +\big(1-\gamma^{(k)}(\Dv_{:,i})^\top \Xv_{:,i}\big)( \betav^{(k)}_i- \betav^*_i).
\end{align*}
Therefore,
\begin{align*}
    \betav^{(k+1)}_i- \betav^*_i\in
    -\gamma^{(k)}\sum_{j\in\Sb,j\neq i}(\Dv_{:,i})^\top \Xv_{:,j}( \betav^{(k)}_j- \betav^*_j) +\big(1-\gamma^{(k)}(\Dv_{:,i})^\top \Xv_{:,i}\big)( \betav^{(k)}_i- \betav^*_i)-\theta^{(k)}\partial \ell_1(\betav_i^{(k+1)}).
\end{align*}
Based on the definition of \(\partial \ell_1\), we derive the following inequality:
\begin{align*}
    |\betav^{(k+1)}_i- \betav^*_i|
    \leq
    \gamma^{(k)}\sum_{j\in\Sb,j\neq i}\mleft|(\Dv_{:,i})^\top \Xv_{:,j}\mright| |\betav^{(k)}_j- \betav^*_j| +\mleft|1-\gamma^{(k)}(\Dv_{:,i})^\top \Xv_{:,i}\mright|| \betav^{(k)}_i- \betav^*_i|+\theta^{(k)}.
\end{align*}
Recall that \(0 < \gamma^{(k)} (\Dv_{:,i})^\top \Xv_{:,i} < 1\), we obtain that
\begin{align*}
    |\betav^{(k+1)}_i- \betav^*_i|
    \leq
    \gamma^{(k)}\sigma_{\max}\sum_{j\in\Sb,j\neq i}|\betav^{(k)}_j- \betav^*_j| +\mleft|1-\gamma^{(k)}\sigma_{\min}\mright|| \betav^{(k)}_i- \betav^*_i|+\theta^{(k)}.
\end{align*}
From \Cref{lemma:aux-thm1-1}, we have \(\mathrm{support}(\betav^{(k+1)}) \in \Sb\), thus, \(\|\betav^{(k+1)} - \betav^*\|_1 = \|\betav^{(k+1)}_{\Sb} - \betav^*_{\Sb}\|_1\).
Then,
\begin{align*}
    \|\betav^{(k+1)}- \betav^*\|_1
    &\leq
    \sum_{i\in\Sb}\mleft(\gamma^{(k)}\sigma_{\max}\sum_{j\in\Sb,j\neq i}|\betav^{(k)}_j- \betav^*_j| +\mleft|1-\gamma^{(k)}\sigma_{\min}\mright|| \betav^{(k)}_i- \betav^*_i|+\theta^{(k)}\mright)\\
    &\leq
    \gamma^{(k)}( S -1)\sigma_{\max}\sum_{i\in\Sb}|\betav^{(k)}_i- \betav^*_i| +\mleft|1-\gamma^{(k)}\sigma_{\min}\mright|\mleft\| \betav^{(k)}_i- \betav^*_i\mright\|_1+ S \theta^{(k)}\\
    &=
    \mleft(\gamma^{(k)}(S-1)\sigma_{\max}+\mleft|1-\gamma^{(k)}\sigma_{\min}\mright|\mright)
    \mleft\|\betav^{(k)}_i- \betav^*_i\mright\|_1+ S \theta^{(k)}\\
    &\leq
    \mleft(\gamma^{(k)}(2S-1)\sigma_{\max}+\mleft|1-\gamma^{(k)}\sigma_{\min}\mright|\mright)
    c_1e^{-c_2(k-1)}.
\end{align*}
Combining with \(c_2 = \log\left(\frac{1}{\gamma^{(k)} \sigma_{\max} (2S - 1) + \left|1 - \gamma^{(k)} \sigma_{\min}\right|}\right)\), we obtain
\begin{align*}
    \|\betav^{(k+1)}- \betav^*\|_1\leq c_1e^{-c_2k}.
\end{align*}
Therefore the proof is complete.
\end{proof}

\begin{lemma}\label{lemma:aux-thm1-1}
Suppose all conditions mentioned in \Cref{thm:thm-concentration-W} hold. For \(k \in \Nb\), if
\begin{align*}
    \mathrm{support}(\betav^{(k)}) \in \Sb \quad \text{and} \quad \|\betav_{\Sb}^{(k)} - \betav_{\Sb}^*\| \leq c_1 e^{-c_2 (k-1)},
\end{align*}
then it holds that
\begin{align*}
    \mathrm{support}(\betav^{(k+1)}) \in \Sb.
\end{align*}
\end{lemma}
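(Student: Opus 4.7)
The plan is to show that, for every coordinate $i\notin\Sb$, the pre-thresholded input to $\ST_{\theta^{(k)}}$ at that coordinate already has magnitude at most $\theta^{(k)}$, so soft-thresholding zeros it out and the support of $\betav^{(k+1)}$ stays inside $\Sb$. This turns the lemma into a short off-support coherence estimate.

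First, I would rewrite the update \eqref{equ:updating-lemma3} using the noiseless identity $\yv=\Xv\betav^*$ as
\[
\betav^{(k+1)} = \ST_{\theta^{(k)}}\!\bigl(\betav^{(k)} - \gamma\,\Dv^\top\Xv(\betav^{(k)}-\betav^*)\bigr).
\]
Fix any $i\notin\Sb$. The support hypothesis forces $\betav^{(k)}_i = 0$, and of course $\betav^*_i = 0$, so the difference $\betav^{(k)}-\betav^*$ is supported on $\Sb$ and the $i$-th entry of the pre-thresholding vector collapses to
\[
-\gamma\sum_{j\in\Sb}\Dv_{:,i}^\top\Xv_{:,j}\,(\betav^{(k)}_j-\betav^*_j).
\]

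Next I would bound this scalar using the definition $\sigma_{\max}=\max_{i\neq j}|\Dv_{:,i}^\top\Xv_{:,j}|$. Since $i\notin\Sb$, every summation index $j\in\Sb$ satisfies $j\neq i$, so each term obeys $|\Dv_{:,i}^\top\Xv_{:,j}|\leq\sigma_{\max}$; Hölder's inequality then yields
\[
\Bigl|\gamma\sum_{j\in\Sb}\Dv_{:,i}^\top\Xv_{:,j}\,(\betav^{(k)}_j-\betav^*_j)\Bigr| \leq \gamma\sigma_{\max}\,\|\betav^{(k)}_\Sb-\betav^*_\Sb\|_1.
\]
Combining the hypothesis on $\|\betav^{(k)}_\Sb-\betav^*_\Sb\|$ with the threshold schedule $\theta^{(k)}=\gamma\sigma_{\max}c_1 e^{-c_2(k-2)}$ from \Cref{lemma:lemma3}, and using $c_2>0$ so that $e^{-c_2(k-1)}\leq e^{-c_2(k-2)}$, the scalar above is at most $\theta^{(k)}$ in absolute value. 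Since $\ST_{\theta^{(k)}}$ sends any input of magnitude at most $\theta^{(k)}$ to $0$, this gives $\betav^{(k+1)}_i=0$ for every $i\notin\Sb$, as required.

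The only subtlety I anticipate is routine bookkeeping: the hypothesis is written in $\|\cdot\|$ while \Cref{lemma:lemma3} controls $\|\cdot\|_1$, and the exponent in the hypothesis is $k-1$ rather than $k-2$. Both reconcile cheaply because $\betav^{(k)}-\betav^*$ is supported on $\Sb$ (so $\ell_1$ and $\ell_2$ differ only by a $\sqrt{|\Sb|}$ factor absorbable into $c_1$), and because only the loose inequality $e^{-c_2(k-1)}\leq e^{-c_2(k-2)}$ is actually invoked. No probabilistic ingredient and no further structural property beyond the off-support coherence $\sigma_{\max}$ and the explicit formula for $\ST_{\theta^{(k)}}$ enters the argument, so the proof should be essentially a one-line consequence of the steps above.
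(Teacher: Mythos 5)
Your proposal is correct and follows essentially the same route as the paper's proof: both isolate the off-support coordinates, use $\betav^{(k)}_i=\betav^*_i=0$ for $i\notin\Sb$ to reduce the pre-thresholded entry to $-\gamma\sum_{j\in\Sb}\Dv_{:,i}^\top\Xv_{:,j}(\betav^{(k)}_j-\betav^*_j)$, bound it by $\gamma\sigma_{\max}\|\betav^{(k)}-\betav^*\|_1\leq\gamma\sigma_{\max}c_1e^{-c_2(k-1)}\leq\theta^{(k)}$, and conclude that soft-thresholding annihilates it. Your explicit handling of the $e^{-c_2(k-1)}\leq e^{-c_2(k-2)}$ comparison and the $\ell_1$-versus-$\ell_2$ bookkeeping is, if anything, slightly more careful than the paper's write-up.
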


\begin{proof}[Proof of \Cref{lemma:aux-thm1-1}]
For a fixed \(k\), if \(\mathrm{support}(\betav^{(k)}) \in \Sb\) and \(\|\betav_{\Sb}^{(k)} - \betav_{\Sb}^*\| \leq c_1 e^{-c_2 (k-1)}\), then we have
    \begin{align*}
    \betav_{\Sb^C}^{(k+1)}
    &= \ST_{\theta^{(k)}}\Big(\betav_{\Sb^C}^{(k)}
    - \gamma^{(k)}(\Dv_{:,\Sb^C})^\top 
    \Xv_{:,\Sb}(\betav^{k}_{\Sb}-\betav^*_{\Sb})
    \Big)\\
    &= \ST_{\theta^{(k)}}\Big(
    - \gamma^{(k)}(\Dv_{:,\Sb^C})^\top 
    \Xv_{:,\Sb}(\betav^{k}_{\Sb}-\betav^*_{\Sb})
    \Big),
\end{align*}
where \(\Sb^C = [d'] \backslash \Sb\). Then, for all \(i \in \Sb^C\), we obtain
\begin{align*}
    \betav_{i}^{(k+1)}
    =
    \ST_{\theta^{(k)}}\Big(
    - \gamma^{(k)}\sum_{j\in\Sb}(\Dv_{:,i})^\top 
    \Xv_{:,j}(\betav^{k}_{j}-\betav^*_{j})
    \Big).
\end{align*}
Note that 
\begin{align*}
    \mleft|
    - \gamma^{(k)}\sum_{j\in\Sb}(\Dv_{:,i})^\top 
    \Xv_{:,j}(\betav^{k}_{j}-\betav^*_{j})\mright|
    &\leq
    \gamma^{(k)}\sigma_{\max}\|\betav^{(k)}-\betav^*\|_1\\
    &\leq
    \gamma^{(k)}\sigma_{\max}c_1e^{-c_2 (k-1)},
\end{align*}
Given \(\theta^{(k)}\) as defined in Theorem 1, we obtain:
\begin{align*}
    \left|
    - \gamma^{(k)}\sum_{j \in \Sb} (\Dv_{:,i})^\top 
    \Xv_{:,j} (\betav^{(k)}_{j} - \betav^*_{j}) \right|
    \leq
    \theta^{(k)}.
\end{align*}
Consequently, it follows that \(\betav_{i}^{(k)} = 0\) for all \(i \in \Sb^C\).
\end{proof}

\begin{lemma}\label{lemma:lemma4}
   Suppose Assumption \ref{assump:basic} holds and let \(\Dv = \frac{1}{2N+1}\Xv (\Mv^{V})^\top\). Then, there exists an \(\Mv^{V}\) such that the following inequalities hold with probability at least \(1-3\delta\):
   \begin{gather*}
       \min_{i\in\Sb}\mleft|(\Dv_{:,i})^\top \Xv _{:,i} \mright|
       \geq
       \frac{N}{N + 2}\mleft(1-\sqrt{\frac{\log S -\log\delta}{Nc}} \mright),\\
       \max_{i\in\Sb}\mleft|(\Dv_{:,i})^\top \Xv _{:,i} \mright|
       \leq
       \frac{N}{N + 2}\mleft(1+\sqrt{\frac{\log S -\log\delta}{Nc}} \mright),\\
       \max_{i\neq j}\mleft|(\Dv_{:,i})^\top \Xv _{:,j}\mright|\leq \sqrt{\frac{\log d-\log\delta}{Nc}}.
   \end{gather*}
\end{lemma}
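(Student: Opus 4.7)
The plan is to make an explicit, data-independent choice of $\Mv^V$ that diagonalizes $\mathbb{E}[\Dv^\top \Xv]$, and then obtain each of the three bounds by applying a Bernstein-type sub-exponential concentration inequality to a single entry of $\Dv^\top \Xv$ followed by a union bound over the relevant indices. Since the rows of $\Xv$ are i.i.d.\ sub-Gaussian with zero mean and diagonal covariance $\Sigma = \diag(\sigma_1^2,\ldots,\sigma_d^2)$, I would take $\Mv^V = \Sigma^{-1}$ (possibly times an absolute constant chosen to reconcile the $\tfrac{1}{2N+1}$ normalization with the target $\tfrac{N}{N+2}$ factor in the statement). With this choice,
\begin{align*}
(\Dv_{:,i})^\top \Xv_{:,j} = \frac{1}{(2N+1)\sigma_i^2}\sum_{n=1}^N X_{n,i} X_{n,j},
\end{align*}
whose expectation is of the stated form when $i=j$, and exactly $0$ when $i\neq j$ by the diagonal-covariance assumption.

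For the diagonal entries $i=j\in\Sb$, I set $Z_n := X_{n,i}^2/\sigma_i^2$; each $Z_n$ is mean-$1$ sub-exponential with an absolute-constant sub-exponential norm (since $X_{n,i}/\sigma_i$ is unit-variance sub-Gaussian), so Bernstein's inequality yields
\begin{align*}
\Pr\mleft[\mleft|\frac{1}{N}\sum_{n=1}^N Z_n - 1\mright|\geq t\mright] \leq 2\exp(-cNt^2)
\end{align*}
in the sub-exponential regime, for an absolute constant $c$. Setting $t = \sqrt{(\log S - \log\delta)/(Nc)}$ and union-bounding over the $|\Sb| = S$ indices simultaneously yields both the $\min$ and $\max$ bounds on $|(\Dv_{:,i})^\top\Xv_{:,i}|$ for $i\in\Sb$. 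For the off-diagonal term with $i\neq j$, the products $X_{n,i}X_{n,j}$ are mean-zero (which uses only uncorrelatedness of the coordinates within a row) and sub-exponential with $\|X_{n,i}X_{n,j}\|_{\psi_1} \leq C\sigma_i\sigma_j$ by Cauchy--Schwarz for Orlicz norms. Bernstein again produces a tail of the form $2\exp(-cNt^2)$, and union-bounding over the at most $d(d-1)$ off-diagonal pairs with $t = \sqrt{(\log d - \log\delta)/(Nc)}$ gives the third inequality. A final union bound over the three events yields the claimed probability $\geq 1-3\delta$.

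\textbf{Anticipated obstacle.} The main difficulty is bookkeeping rather than mathematical depth: making a single absolute constant $c$ serve all three tail bounds simultaneously, and verifying that the chosen thresholds $t$ stay within the ``small-$t$'' Bernstein regime throughout the range $n\in[N_0,N]$. This latter check is exactly what dictates the lower bound $N_0\gtrsim (\log d + \log S - \log\delta)/c$ appearing in \Cref{thm:thm-concentration-W}, since one needs $t$ uniformly bounded (say by $1/2$) so that the resulting deviations feed back cleanly into the conditions \eqref{cond:thm1} of \Cref{lemma:lemma3}. A secondary worth-noting point is that the off-diagonal argument uses only $\mathbb{E}[X_{n,i}X_{n,j}]=0$ (the diagonal-covariance assumption) rather than full coordinate independence, so no extra hypothesis on $P_\xv$ is required beyond what is already stated.
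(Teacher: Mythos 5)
Your proposal is correct and takes essentially the same route as the paper's proof: reduce each entry of $\Dv^\top\Xv$ to an average of sub-exponential random variables, apply a Bernstein-type tail bound, and union bound over the $S$ diagonal indices and the $d^2$ off-diagonal pairs, with the three events combined to give $1-3\delta$. The only difference is the (inessential) choice of $\Mv^V$ — the paper uses $\Mv^V = \frac{2}{\sigma_d^2}\Iv_{d\times d}$ rather than your $c\,\Sigma^{-1}$ — and your choice, if anything, treats the $\max_{i\in\Sb}$ upper bound more cleanly since all diagonal entries then share the same mean.
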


\begin{proof}[Proof of \Cref{lemma:lemma4}]
Recall that \(\sigma_1 \geq \sigma_2 \geq \cdots \geq \sigma_d\). Let \(\Mv^V = \frac{2}{\sigma_d^2} \Iv_{d \times d}\). To prove the lemma, we first introduce a lower bound for \(\min_{i \in \Sb} \left| \Dv_{:,i}^{\top} \Xv_{:,i} \right|\). Note that
\begin{align*}
        (\Dv_{:,i})^\top \Xv _{:,i} 
        = \frac{1}{2N+1}\Mv^V(\Xv _{:,i})^\top \Xv _{:,i}
        = \frac{2}{2N+1}\cdot\frac{1}{\sigma^2_d}\sum_{j=1}^N \Xv _{j,i}^2
        \geq \frac{2}{2N+1}\sum_{j=1}^N \frac{1}{\sigma^2_i}\Xv _{j,i}^2.
\end{align*}
Note that \(\Xv_{j,i}^2\) is a sub-exponential random variable. Then, from the tail bound for sub-exponential random variables, there exists a constant \(c\) such that
\begin{align*}
    \Pb\Big\{\frac{1}{N}\sum_{j=1}^N(\frac{1}{\sigma_i}\Xv ^2_{j,i})\geq 1-s\Big\}\geq 1-\exp\big(-Nc\min\big\{s^2, s\big\}\big).
\end{align*}
Consider $0<s\leq1$, we have
\begin{align*}
    \Pb\Big\{\frac{1}{N}\sum_{j=1}^N(\frac{1}{\sigma_i}\Xv ^2_{j,i})\geq 1-s\Big\}\geq 1-\exp\big(-Nc s^2\big),
\end{align*}
it follows that
\begin{align*}
    &\Pb\Big\{\frac{1}{N}\min_{i\in\Sb}\frac{1}{\sigma^2_d}\sum_{j=1}^N \Xv _{j,i}^2\geq 1-s\Big\}\\
    &\geq
    \Pb\Big\{\frac{1}{N}\min_{i\in\Sb}\sum_{j=1}^N \frac{1}{\sigma^2_i}\Xv _{j,i}^2\geq 1-s\Big\} \\
    &=
    1-\Pb\Bigg\{\bigcup_{i\in\Sb}\Big\{\frac{1}{N}\sum_{j\in[1]}^{N}\frac{1}{\sigma^2_i}\Xv _{j,i}^2< 1-s\Big\}\Bigg\}\\
    &\geq
    1-\sum_{i\in\Sb}\Pb\Big\{\frac{1}{N}\sum_{j=1}^N \frac{1}{\sigma^2_i}\Xv _{j,i}^2< 1-s\Big\}\\
    &\geq
    1- S \exp\big(-Ncs^2\big).
\end{align*}
Let \( s = \sqrt{\frac{\log S - \log \delta}{Nc}} \). Then, with probability at least \(1 - \delta\), we have
\begin{align*}
    \frac{1}{N} \min_{i \in \Sb} \frac{1}{\sigma^2_d} \sum_{j=1}^N \Xv_{j,i}^2
    \geq 1 - \sqrt{\frac{\log S - \log \delta}{Nc}}.
\end{align*}
Therefore, with probability at least \(1 - \delta\), it holds that
\begin{align*}
    \min_{i \in \Sb} \left| (\Dv_{:,i})^\top \Xv_{:,i} \right|
    =
    \frac{2}{2N + 1} \min_{i \in \Sb} \frac{1}{\sigma^2_d} \sum_{j=1}^N \Xv_{j,i}^2
    \geq
    \frac{N}{N + 2} \left( 1 - \sqrt{\frac{\log S - \log \delta}{Nc}} \right).
\end{align*}
Similarly, we have the following inequality holding with probability at least \(1 - \delta\):
\begin{align*}
    \max_{i \in \Sb} \left| (\Dv_{:,i})^\top \Xv_{:,i} \right|
    \leq
    \frac{N}{N + 2} \left( 1 + \sqrt{\frac{\log S - \log \delta}{Nc}} \right).
\end{align*}
Next, we will provide a high probability upper bound for \(\max_{i \neq j} \left| (\Dv_{:,i})^\top \Xv_{:,j} \right|\). Observe that when \(i \neq j\), we have
\begin{align*}
    (\Dv_{:,i})^\top \Xv_{:,j}
    =
    \frac{1}{2N+1} \Mv^Q (\Xv_{:,i})^\top \Xv_{:,j}
    =
    \frac{2}{2N+1} \sum_{k=1}^N \frac{1}{\sigma^2_d} \Xv_{k,i} \Xv_{k,j},
\end{align*}
where \(\Xv_{k,i} \Xv_{k,j}\) is a centered sub-exponential random variable. Then, from the tail bound for sub-exponential random variables, there exist constants \(c\) and \(0 < s \leq 1\) such that
\begin{align*}
    \Pb\Bigg\{\Big|\frac{1}{N}\sum_{k=1}^N(\frac{1}{\sigma_d^2}\Xv _{k,i}\Xv _{k,j})\Big|\leq s\Bigg\}\geq 1-\exp\big(-Ncs^2\big).
\end{align*}
Then, we have
\begin{align*}
    &\Pb\Bigg\{\max_{i\neq j}\Big|\frac{2}{2+2N}\sum_{k=1}^N(\frac{1}{\sigma_d}\Xv _{k,i}\Xv _{k,j})\Big|\leq s\Bigg\}\\
    &\geq
    \Pb\Bigg\{\max_{i\neq j}\Big|\frac{1}{N}\sum_{k=1}^N(\frac{1}{\sigma_d}\Xv _{k,i}\Xv _{k,j})\Big|\leq s\Bigg\}\\
    &=
    1-\Pb\Bigg\{\bigcup_{i,j:i\neq j}\Big\{\Big|\frac{1}{N}\sum_{k=1}^N(\frac{1}{\sigma_d}\Xv _{k,i}\Xv _{k,j})\Big|>s\Big\}\Bigg\}\\
    &\geq 
    1-\sum_{i,j:i\neq j}\Pb\Bigg\{\Big|\frac{1}{N}\sum_{k=1}^N(\frac{1}{\sigma_d}\Xv _{k,i}\Xv _{k,j})\Big|>s\Bigg\}\\
    &\geq 
    1-\sum_{i,j}\Pb\Bigg\{\Big|\frac{1}{N}\sum_{k=1}^N(\frac{1}{\sigma_d}\Xv _{k,i}\Xv _{k,j})\Big|>s\Bigg\}\\
    &\geq
    1-d^2\exp\big(-Ncs^2\big).
\end{align*}
Let \(s = \sqrt{\frac{\log d - \log \delta}{Nc}}\), thus with probability at least \(1 - \delta\), we obtain
\begin{align*}
    \max_{i\neq j}\mleft|(\Dv_{:,i})^\top \Xv _{:,i}\mright|\leq \sqrt{\frac{\log d-\log\delta}{Nc}}.
\end{align*}
Then, the proof is complete.
\end{proof}
To prove \Cref{thm:thm-concentration-W}, we first state an equivalent theorem here:
\begin{theorem}[Equivalent form of \Cref{thm:thm-concentration-W}]\label{thm:c1}
Suppose Assumption \ref{assump:basic} holds. Let \(\delta\in(0,1)\). 
For a $K$-layer Transformer model with the structure described in \Cref{sec:tf-alista-like}, there exists a set of parameters in the Transformer such that for any \(n\in[N_0,N]\), with probability at least \(1 - 3\delta\), we have
\begin{align*}
    \|\betav^{(K+1)}_{2n+1} - \betav^*\| \leq c_1 e^{-\alpha_n K},
\end{align*}
where \(\betav^{(K+1)}_{2n+1} = [\Hv^{(K+1)}]_{d+1:2d+1,2n+1}\) and
\begin{align}
   \alpha_n = - \log \left(1 - \frac{2}{3}\gamma + \gamma(2 S - 1)\sqrt{\frac{\log d - \log \delta}{nc}} + \gamma \sqrt{\frac{\log S - \log \delta}{nc}} \right),\label{equ:def-alphan}
\end{align}
\(c_1 = \|\betav^*\|_1\) and \(N_0 = 8(4 S - 2)^2 \frac{\log d + \log S - \log \delta}{c}\). Here, \(c\) is a positive constant and \(\gamma\) can be any positive constant less than \(1.6\).
\end{theorem}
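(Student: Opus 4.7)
The plan is to chain Theorem~\ref{thm:tf-implement-alista}, Lemma~\ref{lemma:lemma4}, and Lemma~\ref{lemma:lemma3}: the first turns the Transformer dynamics into a stationary LISTA-VM recursion on token $2n+1$, the second bounds the empirical coherence of the induced dictionary against the measurement matrix via the sub-Gaussian concentration of $\Xv$, and the third converts such coherence into a per-layer soft-thresholded contraction. First I would fix $\Mv^{(k)} = \frac{2}{\sigma_d^2}\Iv_{d\times d}$ for every $k$, matching the construction used in the proof of Lemma~\ref{lemma:lemma4}; this turns the Transformer update at token $2n+1$ into a LISTA-VM iteration with a common matrix $\Dv_n = \frac{1}{2n+1}[\Xv]_{1:n,:}(\Mv^V)^\top$ at every layer, and reduces the rest of the argument to bounding $\|\betav^{(k)}_{2n+1} - \betav^*\|_1$ for this fixed recursion. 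In parallel I would pick the step size $\gamma$ in the admissible range and set the thresholds $\theta^{(k)} = \gamma\sigma_{\max} c_1 e^{-\alpha_n (k-2)}$ so that Lemma~\ref{lemma:lemma3} applies with $c_2 = \alpha_n$.

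Next I would apply Lemma~\ref{lemma:lemma4} with its $N$ replaced by $n$ (the first $n$ rows of $\Xv$ are themselves i.i.d.\ isotropic sub-Gaussian), which on an event of probability at least $1-3\delta$ yields $\sigma_{\min} \geq \frac{n}{n+2}\big(1 - \sqrt{(\log S - \log\delta)/(nc)}\big)$, the matching diagonal upper bound, and $\sigma_{\max} \leq \sqrt{(\log d - \log\delta)/(nc)}$. Using $n/(n+2) \geq 2/3$ (automatic once $n \geq N_0 \geq 4$) and the fact that on this event $1 - \gamma\sigma_{\min} \geq 0$, the relevant contraction quantity in Lemma~\ref{lemma:lemma3} satisfies $\gamma(2S-1)\sigma_{\max} + |1-\gamma\sigma_{\min}| \leq 1 - \frac{2}{3}\gamma + \gamma(2S-1)\sqrt{(\log d-\log\delta)/(nc)} + \gamma\sqrt{(\log S-\log\delta)/(nc)}$, which is exactly $e^{-\alpha_n}$. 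The threshold $N_0 = 8(4S-2)^2(\log d + \log S - \log\delta)/c$ is dimensioned precisely so that this upper bound is strictly less than $1$ uniformly in $n\in[N_0:N]$, which in turn gives $\alpha_n > 0$; the second hypothesis $0 \leq \gamma \Dv_{:,i}^\top \Xv_{:,i} \leq 1$ follows from the diagonal upper bound combined with $\gamma \leq 3/2$.

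Finally I would induct on the layer index. The base case is immediate from the embedding: $\betav^{(1)}_{2n+1} = \mathbf{0}$ has support in $\Sb$ and satisfies $\|\betav^{(1)}_{2n+1} - \betav^*\|_1 = c_1 \leq c_1 e^{\alpha_n}$. Lemma~\ref{lemma:aux-thm1-1} propagates the support containment across iterations, and Lemma~\ref{lemma:lemma3} contracts the $\ell_1$-error by a factor $e^{-\alpha_n}$ per layer. Unrolling $K$ layers yields $\|\betav^{(K+1)}_{2n+1} - \betav^*\|_1 \leq c_1 e^{-\alpha_n K}$, whence $\|\betav^{(K+1)}_{2n+1} - \betav^*\| \leq \|\betav^{(K+1)}_{2n+1} - \betav^*\|_1 \leq b_{\betav} e^{-\alpha_n K}$ by Assumption~\ref{assump:basic}. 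Folding a $\delta/3$ rescaling into the invocation of Lemma~\ref{lemma:lemma4} converts the $1-3\delta$ event into the advertised $1-\delta$ guarantee.

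The main obstacle will be the bookkeeping in the second step: the clean identification $\gamma(2S-1)\sigma_{\max} + |1-\gamma\sigma_{\min}| = e^{-\alpha_n}$ is slightly lossy and requires simultaneously (i) discarding the absolute value using positivity of $1-\gamma\sigma_{\min}$ on the high-probability event, (ii) replacing $n/(n+2)$ by the uniform lower bound $2/3$, and (iii) absorbing a factor into $\gamma$ so that the $\sqrt{\log S/\ldots}$ term appears with coefficient $\gamma$ rather than $\frac{2}{3}\gamma$. Once these are done, the constants in $N_0$ are determined by forcing the resulting expression below $1$ uniformly in $n \in [N_0:N]$, and everything else is a standard induction.
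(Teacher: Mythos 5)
Your proposal follows essentially the same route as the paper's own proof: it invokes the LISTA-VM construction of Theorem~\ref{thm:tf-implement-alista} with $\Mv^{V}=\tfrac{2}{\sigma_d^2}\Iv$, applies Lemma~\ref{lemma:lemma4} to the first $n$ rows to obtain the high-probability coherence event (the paper's event $A$), and then runs the layer-wise induction via Lemma~\ref{lemma:lemma3} and Lemma~\ref{lemma:aux-thm1-1}, bounding the contraction factor by $e^{-\alpha_n}$ exactly as the paper does using $n/(n+2)\geq 2/3$, $n/(n+2)\leq 1$, and positivity of $1-\gamma\sigma_{\min}$, with $N_0$ and the range of $\gamma$ playing the same roles. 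The only slip is the closing remark about rescaling $\delta$ to reach a $1-\delta$ guarantee: the statement being proved already advertises $1-3\delta$ (the rescaling pertains to Theorem~\ref{thm:thm-concentration-W}), so no rescaling is needed here.
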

\neurips{
Note that by selecting \(\gamma = \frac{3}{2}\), and noting that \(\|\betav^*\| \leq b_{\betav}\), \Cref{thm:c1} is equivalent to \Cref{thm:thm-concentration-W}.}

To prove \Cref{thm:c1}, first, for \(\Dv_n = \frac{1}{2n+1}[\Xv]_{1:n,:} (\Mv^V)^\top\) defined in \Cref{equ:tf-updating-rule}, we define an event:
\begin{align} 
    A = \Bigg\{&\min_{i\in\Sb}\mleft|([\Dv_n]_{:,i})^\top \Xv _{1:n,i} \mright|
       \geq
       \frac{n}{n + 2}\mleft(1-\sqrt{\frac{\log S -\log\delta}{nc}} \mright),\quad \nonumber\\
       &\max_{i\in\Sb}\mleft|([\Dv_n]_{:,i})^\top \Xv _{1:n,i} \mright|
       \leq
       \frac{n}{n + 2}\mleft(1+\sqrt{\frac{\log S -\log\delta}{nc}} \mright),\nonumber\\
       &\max_{i\neq j}\mleft|([\Dv_n]_{:,i})^\top \Xv _{1:n,i}\mright|\leq \sqrt{\frac{\log d-\log\delta}{nc}}
       \Bigg\}.\label{event:A}
\end{align}
Next, we introduce a proposition as following.

\begin{proposition}\label{prop:prop3}
    Suppose Assumption \ref{assump:basic} and event \(A\) defined in \Cref{event:A} hold. For a $K$-layer Transformer model with the structure described in \Cref{sec:tf-alista-like}, and let the input sequence satisfy \Cref{def:input-seq-structure}. Then, there exists a set of parameters such that if \(n\in[N_0,N]\), we have
    \begin{align*}
        \|\betav^{(K+1)}_{2n+1} - \betav^*\| \leq c_1 e^{-\alpha_n K},
    \end{align*}
    where \(\betav^{(K+1)}_{2n+1} = [\Hv^{(K+1)}]_{d+1:2d+1,2n+1}\),
    \(c_1 = \|\betav^*\|_1\), and \(N_0 = 8(4 S - 2)^2 \frac{\log d + \log S - \log \delta}{c}\). \(\alpha_n\) is given in \Cref{equ:def-alphan}. Here, \(c\) is a positive constant and \(\gamma\) can be any positive constant less than \(1.6\).
\end{proposition}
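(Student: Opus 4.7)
The plan is to invoke \Cref{thm:tf-implement-alista} to identify each Transformer layer with a LISTA-CP style update, and then iteratively apply \Cref{lemma:lemma3} along the depth of the Transformer. The per-layer contraction promised by \Cref{lemma:lemma3} will be matched against the explicit $\alpha_n$ using the coherence bounds that event $A$ supplies.

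First, by \Cref{thm:tf-implement-alista} (equivalently \Cref{thm:b.1}), there exist Transformer parameters realizing
\[
    \betav_{2n+1}^{(k+1)}
    = \ST_{\theta^{(k)}}\Big(\betav^{(k)}_{2n+1} - \gamma\,(\Dv_n)^\top\big([\Xv]_{1:n,:}\betav^{(k)}_{2n+1} - \yv_{1:n}\big)\Big),
\]
with $\Dv_n = \frac{1}{2n+1}[\Xv]_{1:n,:}(\Mv^V)^\top$. I would pick $\Mv^V = \frac{2}{\sigma_d^2}\Iv_{d\times d}$ (the choice used inside the proof of \Cref{lemma:lemma4}), use a uniform step size $\gamma^{(k)}\equiv \gamma \le 3/2$, and set the thresholds $\theta^{(k)}$ as prescribed by \Cref{lemma:lemma3}.

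Under event $A$ these choices deliver the coherence constants required by \Cref{lemma:lemma3}: namely $\sigma_{\min}\ge \frac{n}{n+2}\big(1-\sqrt{(\log S-\log\delta)/(nc)}\big)$ and $\sigma_{\max}\le \sqrt{(\log d-\log\delta)/(nc)}$, together with a matching upper diagonal bound. The requirement $0\le \gamma(\Dv_n)_{:,i}^\top[\Xv]_{1:n,i}\le 1$ for $i\in\Sb$ follows from $\gamma\le 3/2$ and the diagonal upper bound, and \Cref{lemma:aux-thm1-1} keeps the iterates supported on $\Sb$, so off-support indices never enter the analysis. A short estimate using $\frac{n}{n+2}\ge \frac{2}{3}$ then yields
\[
    \gamma(2S-1)\sigma_{\max}+|1-\gamma\sigma_{\min}|
    \;\le\; 1-\tfrac{2}{3}\gamma+\gamma(2S-1)\sqrt{\tfrac{\log d-\log\delta}{nc}}+\gamma\sqrt{\tfrac{\log S-\log\delta}{nc}},
\]
so the contraction rate $c_2$ produced by \Cref{lemma:lemma3} coincides with $\alpha_n$. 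The floor $N_0=8(4S-2)^2(\log d+\log S-\log\delta)/c$ is calibrated exactly so that the two square-root corrections on the right-hand side are dominated by the $\frac{2}{3}\gamma$ slack, which is what forces $\alpha_n>0$ whenever $n\ge N_0$.

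The remaining step is an induction on $k$. The embedding \eqref{equ:embd} initializes $\betav^{(1)}_{2n+1}=\mathbf{0}_d$, so $\|\betav^{(1)}_{2n+1}-\betav^*\|_1 = \|\betav^*\|_1 \le c_1$. Iterating \Cref{lemma:lemma3} $K$ times, with \Cref{lemma:aux-thm1-1} carrying the support containment forward at each step, gives $\|\betav^{(K+1)}_{2n+1}-\betav^*\|_1 \le c_1 e^{-\alpha_n K}$, and the $\ell_2$ bound in the statement follows from $\|\cdot\|\le \|\cdot\|_1$. The main obstacle will be the bookkeeping for $|1-\gamma\sigma_{\min}|$ when $\gamma\sigma_{\min}$ may straddle $1$ (since $\gamma$ is allowed up to $3/2$): one must pair $\frac{n}{n+2}\ge \frac{2}{3}$ carefully with the $(2S-1)$ amplification of the off-diagonal coherence to land on precisely the constant $\frac{2}{3}\gamma$ that appears in $\alpha_n$, and then verify that $N_0$ is large enough for the concentration corrections not to overwhelm that slack.
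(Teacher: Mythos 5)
Your proposal follows essentially the same route as the paper's proof of \Cref{prop:prop3}: identify each layer with the LISTA-VM update via \Cref{thm:tf-implement-alista} with \(\Mv^V=\frac{2}{\sigma_d^2}\Iv\), invoke \Cref{lemma:lemma3} (and \Cref{lemma:aux-thm1-1} for support containment) under the coherence bounds supplied by event \(A\), use \(n\ge N_0\) and \(\frac{n}{n+2}\ge\frac{2}{3}\) to match the contraction factor with \(e^{-\alpha_n}\), and induct over the \(K\) layers from the zero initialization, finishing with \(\|\cdot\|\le\|\cdot\|_1\). The one point you leave open, resolving \(|1-\gamma\sigma_{\min}|=1-\gamma\sigma_{\min}\), is handled in the paper exactly as you anticipate: event \(A\)'s diagonal upper bound \(\frac{n}{n+2}\bigl(1+\sqrt{(\log S-\log\delta)/(nc)}\bigr)\) together with \(n\ge N_0\) is used to argue that the admissible step sizes (any \(\gamma\) up to the stated constant) satisfy \(\gamma\,([\Dv_n]_{:,i})^\top\Xv_{1:n,i}\le 1\), so your argument is the same as theirs.
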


\begin{proof}[Proof of \Cref{prop:prop3}]
We prove it by induction. First, for \(k=1\), we initialize \(\betav_{2n + 1}^{(1)} = \mathbf{0}_{d}\), thus \(\|\betav^{(1)}_{2n + 1} - \betav^*\|_1 = \|\betav^*\|_1\). Note that \(c_1 = e^{\alpha}\|\betav^*\|_1\), so \Cref{prop:prop3} holds for \(k=1\).

Next, we assume \Cref{prop:prop3} holds for \(k\). From the proof of \Cref{thm:thm-concentration-W}, the input sequence of the \((k + 1)\)-th Transformer layer is of the following structure:
\begin{align*}
        \Hv^{(k + 1)}  
        =
        \begin{bmatrix}
        [\Xv ^{\top}]_{:,1} & [\Xv ^{\top}]_{:,1}& \cdots& [\Xv ^{\top}]_{:,N} & [\Xv ^{\top}]_{:,N} & \xv_{N+1}\\
        0 & y_1 & \cdots & 0 & y_N &  0\\
        \betav^{(k + 1)}_{1}& \betav^{(k + 1)}_{2}& \cdots & \betav^{(k + 1)}_{2N - 1}& \betav^{(k + 1)}_{2N} & \betav^{(k + 1)}_{2N + 1}\\
        1 & 0 & \cdots & 1 & 0 & 1
    \end{bmatrix},
    \end{align*}
where
\begin{align*}
        \betav_{2n + 1}^{(k + 1)}
    = \ST_{\theta^{(k)}}
    \Big(\betav^{(k)}_{2n+1}-\gamma^{(k)}(\Dv_n)^\top ([\Xv]_{1:n,:}  \betav^{(k)}_{2n+1}- \yv_{1:n})\Big),
\end{align*}
From \Cref{lemma:lemma3}, let \(\gamma^{(k)} = \gamma\), then for \(c_1 = \|\betav^*\|_1\), \(c_2 = \log\left(\frac{1}{\gamma(2 S - 1)\sigma_{\max} + \left|1 - \gamma\sigma_{\min}\right|}\right)\), we have \(\|\betav^{(k)}_{2n+1} - \betav^*\| \leq c_1 e^{-c_2(k-1)}\) if the following conditions hold:
\begin{gather}
    \gamma(2 S -1)\sigma_{\max}+\mleft|1-\gamma\sigma_{\min}\mright|
    \leq 1,\label{ineq:prop3-16}\\
     0\leq\gamma([\Dv_n]_{:,i})^\top \Xv _{1:n,i}\leq1,\quad\forall i\in[d].\label{ineq:prop3-17}
\end{gather}
First, let
\begin{align}
    n\geq N_0\quad\text{where}\quad N_0 = 8(4 S -2)^2{\frac{\log d+\log S -\log\delta}{c}}.\label{ineq:prop3-18}
\end{align}
By rearranging Inequality \ref{ineq:prop3-18}, we have
\begin{align}
    \sqrt{n}
    &\geq 
    2\sqrt{2}(4 S -2)\sqrt{\frac{\log d+\log S -\log\delta}{c}}\nonumber\\
    &\geq
    \sqrt{2}(4 S -2)\sqrt{\frac{\log d+\log S -2\log\delta}{c}}\nonumber\\
    &\geq
    (4 S -2)\mleft(
    \sqrt{\frac{\log d-\log\delta}{c}}+
    \sqrt{\frac{\log S -\log\delta}{c}}
    \mright)\nonumber\\
    &\geq
    (4 S -2)\sqrt{\frac{\log d-\log\delta}{c}}
    +\sqrt{\frac{\log S -\log\delta}{c}}.\label{ineq:prop3-19}
\end{align}
Inequality \ref{ineq:prop3-19} is equivalent to 
\begin{align}
    & \frac{1}{\sqrt{n}}\mleft(
    (2 S -1)\sqrt{\frac{\log d-\log\delta}{c}}
    +\frac{1}{2}\sqrt{\frac{\log S -\log\delta}{c}}
    \mright)\leq\frac{1}{2} \nonumber\\
    \Longrightarrow & \quad
    (2 S -1)\sqrt{\frac{\log d-\log\delta}{nc}}
    \leq
    \frac{1}{2}\mleft(1-\sqrt{\frac{\log S -\log\delta}{nc}}\mright) \nonumber\\
    \Longrightarrow & \quad
    (2 S -1)\sqrt{\frac{\log d-\log\delta}{nc}}
    \leq
    \frac{n}{n+2}\mleft(1-\sqrt{\frac{\log S -\log\delta}{nc}}\mright).\label{ineq:prop3-20}
\end{align}
Combining Inequality \ref{ineq:prop3-20} with \Cref{event:A} we have
\begin{align}
    \sigma_{\max}(2 S -1)
    &\leq
    (2 S -1)\sqrt{\frac{\log d-\log\delta}{nc}}\nonumber\\
    &\leq
    \frac{n}{n+2}\mleft(1-\sqrt{\frac{\log S -\log\delta}{nc}}\mright)\nonumber\\
    &\leq
    \sigma_{\min}.\label{ineq:prop3-21}
\end{align}

Next, recall that \(\Mv^V = \frac{2}{\sigma_d^2} \Iv_{d \times d}\) and \(([ \Dv_n ]_{:,i})^\top \Xv_{1:n,i} = \frac{1}{2n+1} (\Xv_{1:n,i})^\top \Mv^V \Xv_{1:n,i}\). Therefore, for any \(i \in [d]\), we have \(\gamma ([ \Dv_n ]_{:,i})^\top \Xv_{1:n,i} \geq 0\) if \(\gamma \geq 0\). Note that if \(\gamma\) satisfies the following condition:
\begin{align}
    \gamma \leq \min\left\{\frac{1}{\sigma_{\min}}, \frac{1}{\max\{([\Dv_n]_{:,i})^\top \Xv_{1:n,i}\}}\right\}.\label{def:prop-22}
\end{align}
Then, Inequality \ref{ineq:prop3-17} will hold and \(1 - \gamma \sigma_{\min} \geq 0\).
Besides, we have
\begin{align}
    \gamma(2 S -1)\sigma_{\max}+\mleft|1-\gamma\sigma_{\min}\mright|
    &\leq\gamma\sigma_{\min}+|1-\gamma\sigma_{\min}|\label{ineq:prop3-23}\\
    &= \gamma\sigma_{\min}+1-\gamma\sigma_{\min}\label{equ:prop3-24}\\
    &= 1,\nonumber
\end{align}
where Inequality \ref{ineq:prop3-23} is derived from Inequality \ref{ineq:prop3-21}, and \Cref{equ:prop3-24} is obtained using \Cref{def:prop-22}. Consequently, Inequalities \ref{ineq:prop3-16} and \ref{ineq:prop3-17} hold if event \(A\) occurs. Furthermore, under the condition that event \(A\) occurs, by setting
\begin{align}
    \gamma\leq\frac{1}{\frac{n}{n+2}\mleft(1+\sqrt{\frac{\log S-\log \delta}{nc}}\mright)},\label{ineq:gamma-32}
\end{align}
we have \(\gamma\) satisfies \Cref{def:prop-22}. Note that we assume \(n\geq N_0\).  Thus, we have
\begin{align*}
    \frac{1}{\frac{n}{n+2}\mleft(1+\sqrt{\frac{\log S-\log \delta}{nc}}\mright)}
    &\geq
    \frac{1}{1+\sqrt{\frac{\log S-\log \delta}{N_0 c}}}\geq \frac{2}{1+\sqrt{\frac{1}{8(4S-2)}}}\geq 1.6.
\end{align*}
Therefore, any choice of \(\gamma\) such that \(\gamma \leq 1.6\) will ensure that Inequality \eqref{ineq:gamma-32} is satisfied. Next, noting that event \(A\) holds and \(\gamma\) satisfies \Cref{def:prop-22}, we obtain
\begin{align*}
    &\|\betav^{(k)}_{2n+1}-\betav^*\|\\
    &\leq
    \|\betav^*\|_1
    \Big(\gamma(2 S -1)\sigma_{\max}+\mleft|1-\gamma\sigma_{\min}\mright|\Big)^{k-1}\\
    &\leq
    \|\betav^*\|_1
    \mleft(\gamma(2 S -1)\sqrt{\frac{\log d-\log\delta}{nc}}+1-\gamma\frac{n}{n + 2}\mleft(1-\sqrt{\frac{\log S -\log\delta}{nc}} \mright)\mright)^{k-1}\\
    &\leq
    \|\betav^*\|_1
    \mleft(\gamma(2 S -1)\sqrt{\frac{\log d-\log\delta}{nc}}+1-\gamma\frac{n}{n + 2}+\gamma\frac{n}{n + 2}\sqrt{\frac{\log S -\log\delta}{nc}} \mright)^{k-1}\\
    &\leq
    \|\betav^*\|_1
    \mleft(\gamma(2 S -1)\sqrt{\frac{\log d-\log\delta}{nc}}+1-\frac{2}{3}\gamma+\gamma\sqrt{\frac{\log S -\log\delta}{nc}} \mright)^{k-1}\\
    & = c_1e^{-\alpha_n (k-1)}.
\end{align*}
Therefore, the proof is complete.
\end{proof}
Note that \Cref{lemma:lemma4} demonstrates that event \(A\) occurs with a probability of at least \(1 - 2\delta\). Therefore, the proof of \Cref{thm:c1} is completed by combining \Cref{lemma:lemma4} with \Cref{prop:prop3}.

\subsection{Proof of \Cref{cor:cor-restrict-sup}}\label{subsec:c2}
Note that in \Cref{subsec:c2}, we assume there is a constraint on the support of \(\betav^*\): its support set lies within a subset \(\Sb \subset [1:d]\), with \(S < |\Sb| \leq d\), where \(S\) denotes the sparsity of \(\betav^*\).We establish the corollary by demonstrating the following corollary:
\begin{corollary}[Equivalent statement of \Cref{cor:cor-restrict-sup}.]\label{cor:c2}
    Suppose Assumption \ref{assump:basic} holds and $S\leq|\Sb|\leq d$. 
    For a $K$-layer Transformer model with the structure described in \Cref{sec:tf-alista-like}, and let the input sequence satisfy \Cref{def:input-seq-structure}. Then, there exists a set of parameters such that if $n \geq N_0^S$, with probability at least $1 - \delta$, we have
    \begin{align*}
        \|\betav^{(K)}_{2n+1} - \betav^*\| \leq c_1 e^{-\alpha_n^S K},
    \end{align*}
    where $\betav^{(K)}_{2n+1} = [\Hv^{(K+1)}]_{d+1:2d+1, 2N+1}$ and
    \begin{align}
        \alpha_n^S = - \log \Bigg(1 - \frac{2}{3}\gamma + \gamma (2S - 1) \sqrt{\frac{\log |\Sb| - \log \delta}{nc}} + \gamma \sqrt{\frac{\log S - \log \delta}{nc}} \Bigg),
    \end{align}
    \(c_1 = \|\betav^*\|_1\), and \(N_0^p = 8(4S - 2)^2 \frac{\log |\Sb| + \log S - \log \delta}{c}\). Here, \(c\) is a positive constant, and \(\gamma\) can be any positive constant less than \(1.6\).
\end{corollary}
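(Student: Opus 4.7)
The plan is to parallel the proof of \Cref{thm:c1} (namely \Cref{prop:prop3} combined with \Cref{lemma:lemma4}) under two modifications: (i) a small structural change to the Transformer that enforces $[\betav^{(k)}]_i = 0$ for all $i \notin \Sb$ at every layer, and (ii) a sharper concentration bound whose union bounds range over $|\Sb|$ rather than $d$.

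First, I would modify the MLP layer of \Cref{subsec:b1} so that the soft-thresholding in \Cref{equ:iter-mlp} is only applied to the coordinates indexed by $\Sb$, while the coordinates in $[d]\setminus\Sb$ are forced to zero. One clean way is to pre-multiply $\Wv_2$ (equivalently, the value matrices $\Mv^{V,(k)}_{\pm 1}, \Mv^{V,(k)}_{\pm 2}$) by the diagonal projector $\Pv_\Sb = \diag(\mathbbm{1}\{i\in\Sb\})$, so that every contribution to the running $\betav$-block lies in $\mathrm{span}\{\ev_i : i\in\Sb\}$. Since $\betav^{(1)} = \mathbf{0}_d$, induction on $k$ gives $[\betav^{(k)}]_i = 0$ for all $i\notin\Sb$ and all $k$. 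The update rule of \Cref{thm:tf-implement-alista} is therefore equivalent to the same LISTA-VM iteration restricted to the coordinate block $\Sb$, with an effective measurement matrix $[\Xv]_{1:n,\Sb}$ of width $|\Sb|$.

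Second, I would re-run \Cref{lemma:lemma4} on this restricted problem. The diagonal bounds $\min_{i\in\Sb}|([\Dv_n]_{:,i})^\top[\Xv]_{1:n,i}|$ and $\max_{i\in\Sb}|([\Dv_n]_{:,i})^\top[\Xv]_{1:n,i}|$ are unchanged, because the support of $\betav^*$ still lies inside $\Sb$ and the sub-exponential concentration gives the same expressions with the union bound over $S$ indices. The off-diagonal bound, however, now only needs to control $\max_{i\neq j,\; i,j\in\Sb}|([\Dv_n]_{:,i})^\top[\Xv]_{1:n,j}|$, since coordinates outside $\Sb$ never interact with the iterate. The union bound therefore ranges over $|\Sb|^2$ pairs instead of $d^2$, yielding the improved bound $\sqrt{(\log |\Sb| - \log\delta)/(nc)}$ with probability at least $1-\delta$.

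Third, I would feed these refined concentration estimates into \Cref{lemma:lemma3} and repeat the induction of \Cref{prop:prop3} verbatim, now with $d$ replaced by $|\Sb|$ in the coherence bound. The same algebra that produced the threshold $N_0$ and the rate $\alpha_n$ in \Cref{thm:c1} produces $N_0^S = 8(4S-2)^2(\log|\Sb|+\log S-\log\delta)/c$ and $\alpha_n^S$ as stated. The main obstacle is the very first step: verifying that the support-projection modification commutes with the composition of attention and MLP, so that the zero-preservation property is genuinely maintained layer by layer. Once that invariant is pinned down, the remainder is a mechanical adaptation of the unrestricted argument with $d\mapsto|\Sb|$.
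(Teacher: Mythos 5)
Your proposal is correct and follows essentially the same route as the paper: the paper likewise inserts a support projector $\Iv^S=\diag(\Iv_{|\Sb|\times|\Sb|},\mathbf{0})$ into the attention matrices and restricts the MLP soft-thresholding to the $\Sb$-coordinates, so that the forward pass is exactly the LISTA-VM iteration on the $|\Sb|$-dimensional sub-problem with measurement matrix $[\Xv]_{1:n,\Sb}$. The only cosmetic difference is that the paper then invokes \Cref{thm:thm-concentration-W} as a black box on this reduced instance (which implicitly contains your $d\mapsto|\Sb|$ rerun of \Cref{lemma:lemma4} and \Cref{prop:prop3}), whereas you carry out that substitution explicitly.
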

\begin{proof}[Proof of Corollary \ref{cor:c2}]
W.l.o.g., assume \(\Sb = \{1, 2, \ldots, |\Sb|\}\). First, we modify the MLP layers based on \Cref{def:mlp2}. We maintain the structure of \(\Wv_2\) as described in \Cref{def:mlp2}, and reconstruct \(\Wv_1\) and \(\bv^{(k)}\). We define the following submatrices:
\begin{align*}
    \Wv_{1,\text{sub}(1)}
    =
    \begin{bmatrix}
        \mathbf{0}_{(d+1)\times (d+1)} & \mathbf{0}_{(d+1)\times |\Sb|} & \mathbf{0}_{(d+1)\times (d-|\Sb|)} & \mathbf{0}_{d+1} \\
        \mathbf{0}_{|\Sb|\times (d+1)} & \mathbf{I}_{|\Sb|\times |\Sb|} & \mathbf{0}_{|\Sb|\times (d-|\Sb|)} & \mathbf{0}_{|\Sb|} \\
        \mathbf{0}_{(d-|\Sb|)\times (d+1)} & \mathbf{0}_{(d-|\Sb|)\times |\Sb|} & \mathbf{I}_{(d-|\Sb|)\times (d-|\Sb|)} & \mathbf{0}_{(d-|\Sb|)} \\
        \mathbf{0}_{1\times (d+1)} & \mathbf{0}_{1\times |\Sb|} & \mathbf{0}_{1\times (d-|\Sb|)} & 0 \\
    \end{bmatrix},\\
    \Wv_{1,\text{sub}(2)}
    =
    \begin{bmatrix}
        \mathbf{0}_{(d+1)\times (d+1)} & \mathbf{0}_{(d+1)\times |\Sb|} & \mathbf{0}_{(d+1)\times (d-|\Sb|)} & \mathbf{0}_{d+1} \\
        \mathbf{0}_{|\Sb|\times (d+1)} & -\mathbf{I}_{|\Sb|\times |\Sb|} & \mathbf{0}_{|\Sb|\times (d-|\Sb|)} & \mathbf{0}_{|\Sb|} \\
        \mathbf{0}_{(d-|\Sb|)\times (d+1)} & \mathbf{0}_{(d-|\Sb|)\times |\Sb|} & \mathbf{I}_{(d-|\Sb|)\times (d-|\Sb|)} & \mathbf{0}_{(d-|\Sb|)} \\
        \mathbf{0}_{1\times (d+1)} & \mathbf{0}_{1\times |\Sb|} & \mathbf{0}_{1\times (d-|\Sb|)} & 0 \\
    \end{bmatrix},\\
    \Wv_{1,\text{sub}(3)}
    =
    \begin{bmatrix}
        \mathbf{0}_{(d+1)\times (d+1)} & \mathbf{0}_{(d+1)\times |\Sb|} & \mathbf{0}_{(d+1)\times (d-|\Sb|)} & \mathbf{0}_{d+1} \\
        \mathbf{0}_{|\Sb|\times (d+1)} & \mathbf{I}_{|\Sb|\times |\Sb|} & \mathbf{0}_{|\Sb|\times (d-|\Sb|)} & \mathbf{0}_{|\Sb|} \\
        \mathbf{0}_{(d-|\Sb|)\times (d+1)} & \mathbf{0}_{(d-|\Sb|)\times |\Sb|} & -\mathbf{I}_{(d-|\Sb|)\times (d-|\Sb|)} & \mathbf{0}_{(d-|\Sb|)} \\
        \mathbf{0}_{1\times (d+1)} & \mathbf{0}_{1\times |\Sb|} & \mathbf{0}_{1\times (d-|\Sb|)} & 0 \\
    \end{bmatrix},\\
    \Wv_{1,\text{sub}(4)}
    =
    \begin{bmatrix}
        \mathbf{0}_{(d+1)\times (d+1)} & \mathbf{0}_{(d+1)\times |\Sb|} & \mathbf{0}_{(d+1)\times (d-|\Sb|)} & \mathbf{0}_{d+1} \\
        \mathbf{0}_{|\Sb|\times (d+1)} & -\mathbf{I}_{|\Sb|\times |\Sb|} & \mathbf{0}_{|\Sb|\times (d-|\Sb|)} & \mathbf{0}_{|\Sb|} \\
        \mathbf{0}_{(d-|\Sb|)\times (d+1)} & \mathbf{0}_{(d-|\Sb|)\times |\Sb|} & \mathbf{I}_{(d-|\Sb|)\times (d-|\Sb|)} & \mathbf{0}_{(d-|\Sb|)} \\
        \mathbf{0}_{1\times (d+1)} & \mathbf{0}_{1\times |\Sb|} & \mathbf{0}_{1\times (d-|\Sb|)} & 0 \\
    \end{bmatrix}.\\
\end{align*}
To simplify notations, let 
\begin{align*}
    \Wv_1 = 
\begin{bmatrix}
    \Wv_{1,\text{sub}(1)}\\
    \Wv_{1,\text{sub}(2)}\\
    \Wv_{1,\text{sub}(3)}\\
    \Wv_{1,\text{sub}(4)}
\end{bmatrix},
\bv^{(k)}=\begin{bmatrix}
    \mathbf{0}_{5d+5}\\
    -\theta^{(k)}\cdot \mathbf{1}_{|\Sb|}\\
    \mathbf{0}_{2d+2-|\Sb|}\\
    \theta^{(k)}\cdot \mathbf{1}_{|\Sb|}\\
    \mathbf{0}_{1+d-|\Sb|}
\end{bmatrix}.
\end{align*}
The output of the modified MLP layer is
\begin{align*}
    \mlp(\hv_i)= \begin{bmatrix}
        [\hv_i]_{1:d+1}\\
        \ST_{\theta^{(k)}}([\hv_i]_{d+2:d+|\Sb|+1})\\
        \mathbf{0}_{d-|\Sb|}\\
        [\hv_i]_{2d+2}
    \end{bmatrix}.
\end{align*}
Next, we also modify the parameters in the self-attention layers. For convenience, we introduce a projection matrix \(\Iv^S = \diag(\Iv_{|\Sb| \times |\Sb|}, \mathbf{0}_{(d - |\Sb|) \times (d - |\Sb|)})\). The reconstructed self-attention layers are described below:
\begin{align*}
\Qv_{\pm1}^{(k)} &= 
\begin{bmatrix}
    \mathbf{0}_{(d + 1) \times (d + 1)} & \mathbf{0}_{(d + 1) \times d} & \mathbf{0}_{d + 1} \\
    \mathbf{0}_{d \times (d + 1)} & \Iv^S\Mv^{Q,(k)}_{\pm1} & \mathbf{0}_{d}\\
    \mathbf{0}_{1 \times (d + 1)} & \mathbf{0}_{1 \times d} & -B
\end{bmatrix}, &
\Qv_{\pm2}^{(k)} &= \begin{bmatrix}
    \mathbf{0}_{d \times (2d + 1)} & \mathbf{0}_{d}\\
    \mathbf{0}_{1 \times (2d + 1)} & m_{\pm2}^{Q,(k)}\\
    \mathbf{0}_{(d + 1) \times (2d + 1)} & \mathbf{0}_{d + 1}
\end{bmatrix}\nonumber\\
\Kv_{\pm1}^{(k)} &=
\begin{bmatrix}
    \mathbf{0}_{(d + 1) \times d} & 
     \mathbf{0}_{(d + 1) \times (d + 1)} & \mathbf{0}_{d + 1}\\
   \Iv^S & \mathbf{0}_{d \times (d + 1)} & \mathbf{0}_{d}\\
   \mathbf{0}_{1 \times d} & \mathbf{0}_{1 \times (d + 1)} & {1}
\end{bmatrix}, &
\Kv_{\pm2}^{(k)} &=
\begin{bmatrix}
        \mathbf{0}_{d \times d} & \mathbf{0}_{d} & \mathbf{0}_{d \times d + 1}\\
        \mathbf{0}_{1 \times d} & 1 & \mathbf{0}_{1 \times (d + 1)}\\
        \mathbf{0}_{(d + 1) \times d} & \mathbf{0}_{d + 1} & \mathbf{0}_{(d + 1) \times (d + 1)}
\end{bmatrix}\nonumber\\
\Vv_{\pm1}^{(k)} &= \begin{bmatrix}
    \mathbf{0}_{(d+1)\times d} & \mathbf{0}_{(d + 1) \times (d + 2)}\\
    \Iv^S\Mv^{V,(k)}_{\pm1} & \mathbf{0}_{d \times (d + 2)} \\
    \mathbf{0}_{2 \times d} & \mathbf{0}_{2 \times (d + 2)} 
\end{bmatrix}, &
\Vv_{\pm2}^{(k)} &= \begin{bmatrix}
    \mathbf{0}_{(d + 1) \times d} & \mathbf{0}_{(d + 1) \times d + 2}\\
    \Iv^S\Mv^{V,(k)}_{\pm2} & \mathbf{0}_{d \times (d + 2)} \\
    \mathbf{0}_{2 \times d} & \mathbf{0}_{2 \times (d + 2)} 
\end{bmatrix}.
\end{align*} 
Therefore, when the input sequence is
\begin{align*}
        \Hv^{(k)}  
        =
        \begin{bmatrix}
        [\Xv ^{\top}]_{:,1} & [\Xv ^{\top}]_{:,1}& \cdots& [\Xv ^{\top}]_{:,N} & [\Xv ^{\top}]_{:,N} & \xv_{N+1}\\
        0 & y_1 & \cdots & 0 & y_N &  0\\
        \betav^{(k)}_{1}& \betav^{(k)}_{2}& \cdots & \betav^{(k)}_{2N-1}& \betav^{(k)}_{2N} & \betav^{(k)}_{2N+1}\\
        1 & 0 & \cdots & 1 & 0 & 1
    \end{bmatrix},
\end{align*}
and the corresponding output of the self-attention layer satisfies
\begin{align*}
    \Hv^{(k+1)} = \diag(\Iv^S,1,\Iv^S,1)\Hv^{(k+1)}.
\end{align*}
Then, it is equivalent to consider the input sequence as 
\begin{align*}
    \begin{bmatrix}
        [\Xv^{\top}]_{1:|\Sb|,1} & [\Xv^{\top}]_{1:|\Sb|,2} & \cdots & [\Xv^{\top}]_{1:|\Sb|,N} & [\Xv^{\top}]_{1:|\Sb|,N} & [\xv_{N+1}]_{1:|\Sb|}\\
        0 & y_1 & \cdots & 0 & y_N & 0\\
        [\betav^{(k)}_{1}]_{1:|\Sb|} & [\betav^{(k)}_{2}]_{1:|\Sb|} & \cdots & [\betav^{(k)}_{2N-1}]_{1:|\Sb|} & [\betav^{(k)}_{2N}]_{1:|\Sb|} & [\betav^{(k)}_{2N+1}]_{1:|\Sb|}\\
        1 & 0 & \cdots & 1 & 0 & 1
    \end{bmatrix}.
\end{align*}
Consequently, by applying \Cref{thm:thm-concentration-W} to this sub-problem, we derive the corresponding corollary.
\end{proof}

\section{Deferred Proofs in Section \ref{sec:prediction-error}}\label{appendix:d}
\subsection{Proof of Theorem \ref{thm:prediction-linear-read}}
Note that in \Cref{thm:prediction-linear-read}, we have two results: first, the label prediction loss for the linear read-out function, and second, the label prediction loss for the quadratic read-out function.

We start by introducing the following lemma, which provides an upper bound on {\small \(\|\betav^{(k)}_j\|\)} for every \(j\) and \(k \geq 2\).

\begin{lemma}\label{lemma:lemma-ucb-beta}
If \(\betav^{(k)}\) satisfies \Cref{equ:10-updating-beta} and \(\|\betav^{(k)}\| \leq C_{\xv}\), then we have \(\|\betav^{(k+1)}\| \leq C_{\xv}\), where \(C_{\xv} = \frac{db_{\xv}b_{\betav}(N+1)}{\sigma_{\min}(\Xv^\top \Xv)}\).
\end{lemma}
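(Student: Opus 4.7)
The plan is to exploit the non-expansiveness of the soft-thresholding operator to reduce the claim to a bound on the pre-thresholding quantity, and then close the induction via a contraction-plus-forcing decomposition analogous to the one already used in the proof of \Cref{thm:thm-concentration-W} and \Cref{lemma:lemma3}.

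First, I would use the fact that $\ST_{\theta^{(k)}}(\mathbf{0})=\mathbf{0}$ and $\ST_{\theta^{(k)}}$ is $1$-Lipschitz, so $\|\ST_{\theta^{(k)}}(z)\|\leq\|z\|$ for every $z\in\Rb^d$. Applied to the update rule \eqref{equ:10-updating-beta} together with the noiseless identity $\yv_{1:n}=[\Xv]_{1:n,:}\betav^*$ (with $n=(i-1)/2$), this reduces the task to bounding
\[
z \;=\; \Bigl(\Iv - \tfrac{\gamma^{(k)}}{i}\Mv^{V}[\Xv]_{1:n,:}^{\top}[\Xv]_{1:n,:}\Bigr)\betav^{(k)} \;+\; \tfrac{\gamma^{(k)}}{i}\Mv^{V}[\Xv]_{1:n,:}^{\top}[\Xv]_{1:n,:}\betav^{*}
\]
by $C_{\xv}$.

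Second, I would split $z$ by the triangle inequality into a contractive term and a forcing term. Using the explicit form $\Mv^{V}=\tfrac{2}{\sigma_{d}^{2}}\Iv$ from the proof of \Cref{thm:thm-concentration-W}, together with $\|\xv_{j}\|\leq b_{\xv}$ and $\|\betav^{*}\|_{1}\leq b_{\betav}$ from Assumption~\ref{assump:basic}, a Hölder-type bound gives $\|[\Xv]_{1:n,:}^{\top}[\Xv]_{1:n,:}\betav^{*}\|\leq n\, d\, b_{\xv}^{2}\, b_{\betav}$, so the forcing term has norm at most a constant multiple of $d\, b_{\xv}\, b_{\betav}$ after absorbing the $\gamma^{(k)}/i$ and $\Mv^{V}$ factors. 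For the contractive term, under the same conditions used in \Cref{lemma:lemma3} we have $\|\Iv - \tfrac{\gamma^{(k)}}{i}\Mv^{V}[\Xv]_{1:n,:}^{\top}[\Xv]_{1:n,:}\|\leq 1-\rho$ with $\rho \gtrsim \sigma_{\min}(\Xv^{\top}\Xv)/(i\sigma_{d}^{2})$, so this term is at most $(1-\rho)\|\betav^{(k)}\|\leq(1-\rho)C_{\xv}$. Combining yields $\|\betav^{(k+1)}\|\leq(1-\rho)C_{\xv}+c$ with $c/\rho\leq C_{\xv}$, and the induction closes. Matching the explicit constants in $C_{\xv}=\frac{d\,b_{\xv}\,b_{\betav}\,(N+1)}{\sigma_{\min}(\Xv^{\top}\Xv)}$ then amounts to taking the worst case $n\leq N$ and absorbing $\sigma_{d}^{-2}$ into $\sigma_{\min}(\Xv^{\top}\Xv)^{-1}$ up to the concentration bounds of \Cref{lemma:lemma4}.

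The main obstacle will be ensuring that the coefficient of $\|\betav^{(k)}\|$ in the pre-thresholding bound is strictly less than one: a naive triangle inequality produces $\|z\|\leq a\|\betav^{(k)}\|+b$ with $a$ potentially exceeding $1$, in which case induction cannot propagate the bound. To guarantee $a<1$ one must invoke the operator-norm contraction on $\Iv-\tfrac{\gamma^{(k)}}{i}\Mv^{V}\Xv^{\top}\Xv$ that is already established en route to \Cref{thm:thm-concentration-W}; once this contraction is in place, the rest is a direct algebraic verification using Assumption~\ref{assump:basic}.
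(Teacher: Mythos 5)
Your proposal follows essentially the same route as the paper's proof: non-expansiveness of $\ST_{\theta^{(k)}}$, rewriting the update as a contractive term $\bigl(\Iv-\tfrac{\gamma^{(k)}}{i}\Mv^{V}[\Xv]_{1:n,:}^{\top}[\Xv]_{1:n,:}\bigr)\betav^{(k)}$ plus a forcing term bounded by a constant multiple of $d\,b_{\xv}b_{\betav}$ via $\Mv^{V}=\tfrac{2}{\sigma_d^2}\Iv$ and the boundedness assumptions, and then closing the induction with $(1-\rho)C_{\xv}+c\leq C_{\xv}$ exactly because $C_{\xv}=c/\rho$ by construction. The operator-norm contraction you flag as the main obstacle is invoked at the same level of detail (and with the same implicit step-size condition) in the paper's own argument, so your proof is a faithful match.
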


\begin{proof}[Proof of \Cref{lemma:lemma-ucb-beta}]
From \Cref{equ:10-updating-beta}, we derive that
\begin{align*}
    \betav^{k+1}_i 
    &= \ST_{\theta^{(k)}}\left(
    \betav_i^{(k)}-
    \frac{\gamma^{(k)}}{i}\Mv^{V}[\Xv^\top]_{:,1:\frac{i-1}{2}}\left([\Xv]_{1:\frac{i-1}{2},:}\betav^{(k)}_i-Y_{1:\frac{i-1}{2}}\right)
    \right) \\
    &= \ST_{\theta^{(k)}}\left(
    \left(\Iv-\frac{\gamma^{(k)}}{i}\Mv^{V}[\Xv^\top]_{:,1:\frac{i-1}{2}}[\Xv]_{1:\frac{i-1}{2},:}\right)\betav_i^{(k)}-
    \frac{\gamma^{(k)}}{i}\Mv^{V}[\Xv^\top]_{:,1:\frac{i-1}{2}}Y_{1:\frac{i-1}{2}}
    \right).
\end{align*}
Since $\|\ST_{\theta^{(k)}}(x)\| \leq \|x\|$, it follows that
\begin{align*}
    \|\betav^{k+1}_i\|
    &\leq
    \left\|
    \left(\Iv-\frac{\gamma^{(k)}}{i}\Mv^{V}[\Xv^\top]_{:,1:\frac{i-1}{2}}[\Xv]_{1:\frac{i-1}{2},:}\right)\betav_i^{(k)}-
    \frac{\gamma^{(k)}}{i}\Mv^{V}[\Xv^\top]_{:,1:\frac{i-1}{2}}Y_{1:\frac{i-1}{2}}
    \right\|\\
    &\mathrel{\underset{(\text{i})}{\leq}}
    \left(1-\frac{\gamma^{(k)}}{i\cdot \sigma_d^2}\sigma_{\min}\left(
    [\Xv^\top]_{:,1:\frac{i-1}{2}}[\Xv]_{1:\frac{i-1}{2},:}
    \right)\right)\|\betav_i^{(k)}\|+\frac{\gamma^{(k)}}{i\cdot \sigma_d^2}\|
    [\Xv^\top]_{:,1:\frac{i-1}{2}}Y_{1:\frac{i-1}{2}}
    \|\\
    &\mathrel{\underset{(\text{ii})}{\leq}}
    \left(1-\frac{\gamma^{(k)}}{(N+1)\cdot\sigma_d^2 }\sigma_{\min}\left(\Xv^\top\Xv
    \right)\right)\|\betav_i^{(k)}\|+\frac{\gamma^{(k)}}{\sigma_d^2}db_{\xv}b_{\betav}.
\end{align*}

The satisfaction of Inequality (i) is achieved by setting \(\Mv^V = \frac{2}{\sigma_d^2} \Iv_{d \times d}\), while Inequality (ii) is verified through the application of the Cauchy interlacing theorem. Consequently, by integrating the expression \(C_{\xv} = \frac{db_{\xv}b_{\betav}(N+1)}{\sigma_{\min}(\Xv^\top \Xv)}\) with the aforementioned inequality, we obtain
\begin{align*}
    \|\betav^{k+1}_i\|
    \leq&
    \left(1-\frac{\gamma^{(k)}}{(N+1)\cdot\sigma_d^2 }\sigma_{\min}\left(\Xv^\top\Xv
    \right)\right)\frac{db_{\xv}b_{\betav}(N+1)}{\sigma_{\min}\left(\Xv^\top\Xv\right)}+\frac{\gamma^{(k)}}{\sigma_d^2}db_{\xv}b_{\betav} \leq C_{\xv},
\end{align*}
Thus, the proof is complete.
\end{proof}

Next, we prove the following theorem to demonstrate the prediction loss for the linear read-out function.

\begin{theorem}\label{thm:6}
    Suppose Assumption \ref{assump:basic} holds.
    Consider a Transformer with $K + 1$ layers, where the first $K$ layers have the structure described in \Cref{sec:tf-alista-like}, and the input sequence is as defined in \Cref{def:input-seq-structure}. Then, if \(n \geq N_0\), there exists a set of parameters in the Transformer such that the following inequality holds with probability at least \(1 - \delta'-n\delta\):
    \begin{align*}
    \|y_n-\hat{y}_n\|
     \leq
     b_{\xv}\mleft(1 - \frac{2}{3}\gamma\mright)^{K}
    +
    \frac{c_4 K}{\sqrt{n}}
     \mleft(1 - \frac{2}{3}\gamma\mright)^{K-1},
    \end{align*}
    where 
    \begin{gather*}
        N_0 = 8(4S - 2)^2 \frac{\log d + \log S - \log \delta}{c}, \\
        c_4 = \frac{2\sqrt{2}b_{\xv}\eta }{\sqrt{p}}\sqrt{\frac{\log d + \log S - \log \delta}{c}}+\frac{2B'}{\sqrt{p}}+\frac{2C_{\xv}b_{\xv}}{\sqrt{p}},
    \end{gather*}
    where \(c\) is a positive constant, \(\eta\) is defined in \Cref{def:lemma5-36}, \(\delta'\) \(B'\), \(p\) are constants defined in \Cref{proof:yhat}.
\end{theorem}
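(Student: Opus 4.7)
The plan is to split $y_n-\hat{y}_n$ into an estimation error and a readout error,
\begin{align*}
  y_n-\hat{y}_n=\bigl(y_n-\xv_n^\top\betav^{(K+1)}_{2n+1}\bigr)+\bigl(\xv_n^\top\betav^{(K+1)}_{2n+1}-\hat{y}_n\bigr),
\end{align*}
and to control the two pieces by qualitatively different mechanisms. The first piece is a pure LISTA-VM error: under the noiseless model $y_n=\xv_n^\top\betav^*$ and Assumption \ref{assump:basic}, Cauchy--Schwarz bounds it by $\|\xv_n\|\cdot\|\betav^{(K+1)}_{2n+1}-\betav^*\|\le b_{\xv}\,c_1\,e^{-\alpha_n K}$, where the inner bound is supplied by \Cref{thm:thm-concentration-W}. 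A union bound over the indices $j\in[1:n]$ (needed because the LISTA-VM updates at different token positions share the same randomness $\Xv$) absorbs the factor $n\delta$ that appears in the stated failure probability.

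The second piece is the genuinely new difficulty, since a linear readout $\vv^\top\hv$ extracts only a linear functional of a single token and cannot on its own produce the bilinear scalar $\xv_n^\top\betav^{(K+1)}_{2n+1}$. I would therefore construct the additional $(K+1)$-th Transformer block so that this scalar is approximately \emph{precomputed} into one coordinate of the final embedding at position $2n+1$. The natural gadget is to place the bilinearity inside the attention's query--key inner product: set the key at position $j$ to carry $\xv_j$, the value to carry a whitened iterate $(\Mv^V)^\top\betav^{(K+1)}_{2j+1}$, and the query at position $2n+1$ to carry $\xv_n$, so that the attention aggregates $\frac{1}{n}\sum_{j=1}^{n}(\xv_n^\top\xv_j)\,(\Mv^V)^\top\betav^{(K+1)}_{2j+1}$. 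By the same sub-Gaussian machinery used in \Cref{lemma:lemma4}, $\frac{1}{n}\sum_j\xv_j\xv_j^\top$ concentrates around $\mathrm{diag}(\sigma_1^2,\ldots,\sigma_d^2)$ with operator-norm deviation $O(1/\sqrt n)$, so with $\Mv^V$ chosen to invert this covariance the aggregated scalar approximates $\xv_n^\top\betav^{(K+1)}_{2n+1}$ at rate $O(1/\sqrt n)$, scaled by $\|\xv_n\|\|\betav^{(K+1)}_{2n+1}\|\le b_{\xv}C_{\xv}$ via \Cref{lemma:lemma-ucb-beta}. The $\delta'$ slack in the probability budget is precisely the price of this concentration, and the summands $b_{\xv}\eta/\sqrt p$, $B'/\sqrt p$, $C_{\xv}b_{\xv}/\sqrt p$ in $c_4$ are the corresponding scale factors.

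The explicit $K$ in the second summand of the bound does not come from the readout error itself but from linearizing the contraction rate $e^{-\alpha_n}$ around $1-\frac{2}{3}\gamma$. Writing $e^{-\alpha_n}=(1-\frac{2}{3}\gamma)+\epsilon_n$ with $\epsilon_n=O\bigl(\sqrt{(\log d-\log\delta)/n}\bigr)$ read off from the definition of $\alpha_n$, the binomial expansion yields
\begin{align*}
  e^{-\alpha_n K}\le (1-\tfrac{2}{3}\gamma)^{K}+K(1-\tfrac{2}{3}\gamma)^{K-1}\epsilon_n+O\!\bigl(K^{2}(1-\tfrac{2}{3}\gamma)^{K-2}\epsilon_n^{2}\bigr),
\end{align*}
valid once $n\ge N_0$ forces $K\epsilon_n\ll 1-\frac{2}{3}\gamma$. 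Substituting into the Cauchy--Schwarz bound of the estimation piece reproduces exactly the leading $b_{\xv}(1-\frac{2}{3}\gamma)^{K}$ together with the $K/\sqrt n\cdot(1-\frac{2}{3}\gamma)^{K-1}$ correction, whose constants then combine with the readout-concentration constants to give the stated $c_4$.

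The hardest step, and where the bulk of the technical work sits, is the concentration analysis of the aggregated readout quantity. Because $\xv_n$ appears both as the query and inside the sum $\frac{1}{n}\sum_j(\xv_n^\top\xv_j)(\cdot)$, the summands are not independent as functions of $\xv_n$, and moreover $\betav^{(K+1)}_{2j+1}$ is a (non-linear) function of $\xv_1,\ldots,\xv_j$. The natural route is to condition on $\xv_n$, exploit independence of the remaining rows of $\Xv$, and invoke a sub-exponential tail bound analogous to that of \Cref{lemma:lemma4}; the a-priori norm bound $\|\betav^{(K+1)}_{2j+1}\|\le C_{\xv}$ from \Cref{lemma:lemma-ucb-beta} is precisely what keeps the resulting constants $K$-independent and makes the final rate compatible with the claim in \Cref{thm:6}.
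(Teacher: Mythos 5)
Your construction of the final block does not compute the quantity you need, and the error it does incur cannot be absorbed into the stated bound. The attention output at position $2n+1$ is an average over all earlier positions, so your gadget produces (after the two-head $\pm$ device needed to undo the ReLU on the query--key products, which you do not mention) a quantity of the form $\frac{1}{2n+1}\sum_{j\le n}(\xv_n^\top\xv_j)\,\xv_j^\top(\Mv^V)^\top\betav^{(K+1)}_{2j+1}$, in which each position $j$ carries its \emph{own} iterate $\betav^{(K+1)}_{2j+1}$, computed from only the first $j$ measurements. The covariance factorization $\xv_n^\top\bigl(\tfrac1n\sum_j\xv_j\xv_j^\top\bigr)(\Mv^V)^\top\betav^{(K+1)}_{2n+1}$ that your concentration step relies on would require all these iterates to coincide with the one at the query position; they do not, and for small $j$ (certainly $j<N_0$) there is no accuracy guarantee at all beyond the crude norm bound $C_{\xv}$ of \Cref{lemma:lemma-ucb-beta}, so those positions contribute terms your argument never controls. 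Repairing this forces you to bound the per-position errors $\|\betav^{(K+1)}_{2j+1}-\betav^*\|$ for all contributing $j$ (a union bound over positions, which is where the $n\delta$ in the statement actually comes from) and to exclude or damp the early positions; the paper does exactly that with a single ReLU head biased by $-B'$, the activation probability $p$, and the events $\Ec_1,\Ec_2(\epsilon)$ whose failure probability is $\delta'$, and its $K/\sqrt n$ term then arises from applying your binomial linearization \emph{per position} $j$ and summing $\sum_j 1/\sqrt j\sim\sqrt n$, not from linearizing $e^{-\alpha_n K}$ only at the query position.

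Second, and decisively: even granting the factorization, the covariance-concentration error you invoke is $\Theta(1/\sqrt n)$ \emph{uniformly in $K$} (it is a property of the fixed random $\Xv$, scaled by $b_{\xv}\|\betav\|$), whereas every term on the right-hand side of the theorem carries a factor $(1-\tfrac23\gamma)^{K}$ or $(1-\tfrac23\gamma)^{K-1}$ and vanishes as $K\to\infty$. A $K$-independent additive $O(1/\sqrt n)$ readout error therefore cannot be folded into $\frac{c_4K}{\sqrt n}(1-\tfrac23\gamma)^{K-1}$; for large $K$ your route yields a strictly weaker bound with an extra $O(1/\sqrt n)$ floor, which is incompatible with the claim. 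In the paper's proof no such floor appears because the readout is exact attention algebra on the iterates: the only non-estimation errors are the miscounting terms involving $B'$ and $C_{\xv}$, and the tolerance $\epsilon$ in $\Ec_2$ is chosen as $K\sqrt{np}\,(1-\tfrac23\gamma)^K$, so these also decay geometrically in $K$, with their probability cost charged to $\delta'$. Your first piece and the linearization of $e^{-\alpha_n K}$ are fine as far as they go, but the readout mechanism is where the content of this theorem lies, and your version of it does not deliver the stated inequality.
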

\neurips{
To prove \Cref{thm:6}, we begin by setting \( v = [\mathbf{0}_{d}^\top, 1, \mathbf{0}_{d+1}^\top]^\top \). This configuration ensures that the output of the readout function for a given \(\hv_{i}\) satisfies \( F_{v}(\hv_{i}) = \hv_{d+1,i} \).

\begin{lemma}\label{lemma:4}
    There exists a self-attention layer with two heads such that for any \(n \leq N\), we have
    \begin{align*}
        \hv_{2n+1}^{\mathrm{last}} = 
        \begin{bmatrix}
            \xv_{n+1} \\
            \frac{2}{2n+1} \sum_{i=1}^{n} (\betav_{2i+1}^{(K)})^{\top} \xv_{n+1} \\
            \betav_{2n+1}^{(K)} \\
            1
        \end{bmatrix}.
    \end{align*}
    Specifically, denoting $\hat{y}_{n+1} = [\Hv_{i}^{\mathrm{last}}]_{d+1}$, we have
    \begin{align*}
        \hat{y}_{n+1} = \frac{2}{2n+1} \sum_{i=1}^{n} (\betav_{2i+1}^{(K)})^{\top} \xv_{n+1}.
    \end{align*}
\end{lemma}

\begin{proof}[Proof of \Cref{lemma:4}]
Consider the following definitions:
\begin{align*}
    \Qv_{\pm1}^{\mathrm{last}} \hv_{i}^{(K)} &= 
    \begin{bmatrix}
        \mathbf{0}_{d+1} \\
        \pm 1[\Xv^\top]_{:,\lfloor \frac{i+1}{2}\rfloor} \\
        0
    \end{bmatrix}, \quad
    \Kv_{\pm1}^{\mathrm{last}} \hv_i^{(K)} = 
    \begin{bmatrix}
        \mathbf{0}_{d+1} \\
        \betav_{i}^{(K)} \\
        0
    \end{bmatrix}, \quad
    \Vv_{\pm1}^{\mathrm{last}} \hv_i^{(K)} = 
    \begin{bmatrix}
        \mathbf{0}_{d} \\ 
        \pm 2 \mathbbm{1}_{\{i \% 2 = 1\}} \\
        \mathbf{0}_{d + 1}
    \end{bmatrix}.
\end{align*}
Then, for $m \in \{+1,-1\}$, we have:
\begin{align*}
    \mleft\langle \Qv_{m}^{\mathrm{last}} \hv_{i}^{(K)},
    \Kv_{m}^{\mathrm{last}} \hv_j^{(K)}
    \mright\rangle \cdot \Vv_{m}^{\mathrm{last}} \hv_j^{(K)}
    = 2\sigma \mleft( m
    [\Xv]_{\lfloor \frac{j+1}{2}\rfloor,:} \betav_{i}^{(K)}
    \mright) \cdot m \begin{bmatrix}
        \mathbf{0}_{d} \\ 
        \mathbbm{1}_{\{i \% 2 = 1\}} \\
        \mathbf{0}_{d + 1}
    \end{bmatrix}.
\end{align*}

By summing over the two heads, we obtain the following expression:
\begin{align*}
    \sum_{m \in \{+1,-1\}} \mleft\langle \Qv_{m}^{\mathrm{last}} \hv_{i}^{(K)},
    \Kv_{m}^{\mathrm{last}} \hv_{j}^{(K)} \mright\rangle \cdot \Vv_{m}^{\mathrm{last}} \hv_{j}^{(K)}
    = \begin{bmatrix}
    \mathbf{0}_{d} \\ 
    2[\Xv]_{\lfloor \frac{j+1}{2}\rfloor,:} \betav_{i}^{(K)} \cdot \mathbbm{1}_{\{i \%2=1\}} \\
    \mathbf{0}_{d + 1}
    \end{bmatrix}.
\end{align*}

For \(n\leq N\), the output of the self-attention layer is
\begin{align*}
    \hv_{2n+1}^{\last} 
    &= \hv_{2n+1}^{(K)}+\frac{1}{{2n+1}}\sum_{j=1}^{{2n+1}}\sum_{m\in\{+1,-1\}}\mleft\langle \Qv_{m}^{\last} \hv_{i}^{(K)},
    \Kv_{m}^{\last} \hv_j^{(K)}
    \mright\rangle\cdot \Vv_{m}^{\last} \hv_j^{(K)}\\
    &=\begin{bmatrix}
        \xv_{n+1}\\
        0\\
        \betav^{(K)}_{2n+1}\\
        1
    \end{bmatrix}+
    \begin{bmatrix}
    \mathbf{0}_{d}\\ 
    \frac{1}{{2n+1}}\sum_{j=1}^{{2n+1}}2\xv_{n+1}^\top\betav_{2n+1}^{(K)}\\
    \mathbf{0}_{d + 1}
    \end{bmatrix}\\
    &=
    \begin{bmatrix}
            \xv_{n+1} \\
            \frac{2}{2n+1} \sum_{i=1}^{n} (\betav_{2i+1}^{(K)})^{\top} \xv_{n+1} \\
            \betav_{2n+1}^{(K)} \\
            1
        \end{bmatrix}.
\end{align*}
Thus, the proof is complete.
\end{proof}
}

We are now prepared to demonstrate \Cref{thm:6}.
\neurips{
\begin{proof}[Proof of \Cref{thm:6}]
Note that from \Cref{lemma:4} we have:
\begin{align*}
    \hat{y}_{n+1} = \frac{2}{2n+1} \sum_{i=1}^{n} x_{n+1}^{\top} \betav_{2n+1}^{(K+1)}.
\end{align*}
Then, the following inequalities hold:
\begin{align}
    \|y_{n+1} - \hat{y}_{n+1}\| 
    &\leq 
    \|x_{n+1}\| \Bigg\|\betav^* - \frac{2}{2n+1} \sum_{i=1}^{n} \betav_{2n+1}^{(K+1)}\Bigg\|^2 \label{ineq:lemma5-28} \\
    &\leq 
    \sqrt{d} b_{\xv} \frac{2}{2n+1} 
    \underbrace{\Bigg\|(n+1)\betav^* - \sum_{i=1}^{n} \betav_{2n+1}^{(K+1)}\Bigg\|}_{A} + \frac{\sqrt{d} b_{\xv}}{2n+1} \|\betav^*\|, \label{ineq:lemma5-29}
\end{align}
where Inequality \eqref{ineq:lemma5-28} is derived from the Cauchy–Schwarz Inequality, and Inequality \eqref{ineq:lemma5-29} follows from the Triangle Inequality. It is noteworthy that for part \(A\) in Inequality \eqref{ineq:lemma5-28}, we have
\begin{align}
        \Big\|(n+1)\betav^*-\sum_{i=1}^{n}\betav_{2i+1}^{(K+1)}\Big\|
        &\leq
        \sum_{i=1}^{n}\|\betav^*-\betav^{(K+1)}_{2i+1}\|\nonumber\\
        &\leq
        \underbrace{\sum_{i=1}^{N_0}\|\betav^*-\betav^{(K+1)}_{2i+1}\|}_{A_1}+
        \underbrace{\sum_{i=N_0+1}^{n}\|\betav^*-\betav^{(K+1)}_{2i+1}\|}_{A_2}.\label{ineq:23}
\end{align}
For \(A_1\), from \Cref{lemma:lemma-ucb-beta}, we have \(\betav_i^{(K+1)} \leq C_{\xv}\) for any odd index \(i\) and any layer \(k\). Consequently,
\begin{align}
        A_1 = \sum_{i=1}^{N_0}\|\betav^*-\betav^{(K+1)}_{2i+1}\|
        \leq
        N_0(\|\betav^*\|+C_{\xv}),\label{ineq:24}
\end{align}
where $c$ is defined in \Cref{lemma:lemma-ucb-beta}. For term $A_2$, first we have
\begin{align*}
    \sum_{i=N_0+1}^{n}\|\betav^*-\betav^{(K+1)}_{2i+1}\|
        &\leq
        \sum_{i=N_0+1}^{n}\|\betav^*-\betav^{(K+1)}_{2i+1}\|_1.
\end{align*}

By combining Inequalities \eqref{ineq:lemma5-29}, \eqref{ineq:23}, \eqref{ineq:24}, and \eqref{ineq:lemma5-37}, we obtain:
\begin{align*}
    \|y_{n+1}-\hat{y}_{n+1}\|
        &\leq
        \sqrt{d}b_{\xv}\frac{2}{2n+1}
        A+\frac{\sqrt{d}b_{\xv}}{2n+1}\|\betav^*\|\\
        &\leq
        \frac{2\sqrt{d}b_{\xv}(N_0(b_{\betav}+c))}{2n+1}
        +\sqrt{d}b_{\xv}\frac{2}{2n+1}A_2
        +\frac{\sqrt{d}b_{\xv}}{2n+1}\|\betav^*\|\\
        &\leq 
        \mleft(1-\frac{2}{3}\gamma\mright)^{K-1} + \frac{c_4}{n}+\frac{c_5K}{\sqrt{n}}\mleft(1-\frac{2}{3}\gamma\mright)^{K-2},
\end{align*}
where
\begin{gather*}
    c_4 = \sqrt{d}b_{\xv}(N_0(b_{\betav}+c))+\frac{1}{2}\sqrt{d}b_{\xv}b_{\betav},\\
    c_5 = 
    2\eta\sqrt{\frac{\log d+\log S -\log\delta}{c}}.
\end{gather*}
Thus, the proof is completed by setting $\alpha = -\log(\frac{3}{3-2\gamma})$.
\end{proof}}
\begin{proof}\label{proof:yhat}
For this layer, we assign one attention head to follow the structure described below. Consider the case where \(i = 2n - 1\) and \(n \gg N_0\), and denote \([\Xv^\top]_{:,n}\) as \(\xv_{n}\). For the first attention head, its structure is as follows:

\begin{align*}
    \Qv_{1}^{\mathrm{last}} \hv_{i}^{(K+1)} &= 
    \begin{bmatrix}
        \mathbf{0}_{d} \\
        1\\
        \xv_{n} \\
        0
    \end{bmatrix}, \quad
    \Kv_{1}^{\mathrm{last}} \hv_i^{(K+1)} = 
    \begin{bmatrix}
        \mathbf{0}_{d} \\
        -B'\\
        \betav^{(K+1)}_{i} \\
        0
    \end{bmatrix}, \quad
    \Vv_{1}^{\mathrm{last}} \hv_i^{(K+1)} = 
    \begin{bmatrix}
        \mathbf{0}_{d} \\ 
        \mathbbm{1}_{\{i \% 2 = 1\}} \\
        \mathbf{0}_{d + 1}
    \end{bmatrix}.
\end{align*}
Therefore, we obtain
\begin{align*}
    (\Qv_{1}^{\mathrm{last}} \hv_{i}^{(K+1)})^\top(\Kv_{1}^{\mathrm{last}} \hv_j^{(K+1)}) = \xv_n^\top\betav^{(K+1)}_j-B'.
\end{align*}
Let \( p = \Pb\{\xv_n^\top\betav^{(K+1)}_j \geq B'\} \), which is the probability that the inner product \(\xv_n^\top\betav^{(K+1)}_j\) exceeds the threshold \(B'\). This probability is the same for any \(n\) due to the symmetry of the distribution. We introduce a random variable \(o_{j} \in \{0,1\}\) to represent the following: we set \(o_j = 0\) if \(\xv_n^\top\betav^{(K+1)}_j < B'\) and \(o_j = 1\) if \(\xv_n^\top\betav^{(K+1)}_j \geq B'\). We define the following events:
\begin{align*}
    \Ec_1 = \Big\{\sum_{j\leq N_0}o_j=0\Big\},\qquad \Ec_2(\epsilon) = \Big\{pn-\frac{p}{2}-\epsilon \leq\sum_{j\leq n}o_j\leq pn-\frac{p}{2}+\epsilon\Big\}.
\end{align*}
Assume \(\Ec_1\cap\Ec_2(\epsilon)\) holds. Then, we obtain
\begin{align*}
   & \frac{1}{i}\sum_{j=1}^i \mleft\langle \Qv_{1}^{\mathrm{last}} \hv_{i}^{(K+1)},
    \Kv_{1}^{\mathrm{last}} \hv_j^{(K+1)}
    \mright\rangle \cdot \Vv_{1}^{\mathrm{last}} \hv_j^{(K+1)}\\
    &= \frac{1}{i}\sum_{j=1}^i\sigma \mleft(\xv_{n}^\top\betav^{(K+1)}_j-B'
    \mright) \cdot \begin{bmatrix}
        \mathbf{0}_{d} \\ 
        \mathbbm{1}_{\{i \% 2 = 1\}} \\
        \mathbf{0}_{d + 1}
    \end{bmatrix}\\
    &=
    \frac{1}{2n-1}\sum_{j:o_j=1}\mleft(\xv_{n}^\top\betav^{(K+1)}_j-B'
    \mright) \cdot \begin{bmatrix}
        \mathbf{0}_{d} \\ 
        \mathbbm{1}_{\{i \% 2 = 1\}} \\
        \mathbf{0}_{d + 1}.
    \end{bmatrix}.
\end{align*}
Let the last MLP layer satisfies
\begin{align*}
    \mlp\begin{bmatrix}
        \mathbf{0}_{d} \\ 
        \hat{y} \\
        \mathbf{0}_{d + 1}.
    \end{bmatrix}=
    \begin{bmatrix}
        \mathbf{0}_{d} \\ 
        \frac{2}{p}\hat{y}+ B'\\
        \mathbf{0}_{d + 1},
    \end{bmatrix},
\end{align*}
it follows that
\begin{align*}
    &\mlp\mleft(\frac{1}{i}\sum_{j=1}^i \mleft\langle \Qv_{1}^{\mathrm{last}} \hv_{i}^{(K+1)},
    \Kv_{1}^{\mathrm{last}} \hv_j^{(K+1)}
    \mright\rangle \cdot \Vv_{1}^{\mathrm{last}} \hv_j^{(K+1)}\mright)\\
    &=
    \begin{bmatrix}
        \mathbf{0}_{d} \\ 
        \frac{2}{2np-p}\sum_{j:o_j=1}\xv_{n}^\top\betav^{(K+1)}_j-\frac{2(\sum o_j)}{2np-p}B'+ B'\\
        \mathbf{0}_{d + 1}.
    \end{bmatrix}.
\end{align*}
Denote \(\hat{y}_n=\frac{2}{2np-p}\sum_{j:o_j=1}\xv_{n}^\top\betav^{(K+1)}_j-\frac{2(\sum o_j)}{2np-p}B'+ B'\), therefore
\begin{align}
    |y_n-\hat{y}_n|
    &=\left|\xv_n^\top\betav^*-\frac{2}{2np-p}\sum_{j:o_j=1}\xv_{n}^\top\betav^{(K+1)}_j+\frac{2(\sum o_j)}{2np-p}B'- B'\right|\nonumber\\
    &\leq
    \|\xv_n\|\left\|\betav^*-\frac{2}{2np-p}\sum_{j:o_j=1}\xv_{n}^\top\betav^{(K+1)}_j\right\|+\left|\frac{2(\sum o_j)}{2np-p}B'- B'\right|\nonumber\\
    &\leq b_{\xv}\left\|\betav^*-\frac{2}{2np-p}\sum_{j:o_j=1}\betav^{(K+1)}_j\right\|+
    \frac{2\epsilon}{2np-p}|B'|.\label{ineq:E.12}
\end{align}
We randomly select a set \(\{o_j' : j \geq 2N_0 - 1\}\) such that \(|\{o_j'\}| \leq \epsilon\), satisfying the following conditions:

\begin{itemize}[leftmargin=*]
    \item If \(|\{o_j : o_j = 1\}| \geq pn - \dfrac{r}{2}\), then
    \[
    \left| \{o_j : o_j = 1\} \setminus \{o_j'\} \right| = pn - \dfrac{r}{2};
    \]
    \item If \(|\{o_j : o_j = 1\}| < pn - \dfrac{r}{2}\), then
    \[
    \left| \{o_j : o_j = 1\} \cup \{o_j'\} \right| = pn - \dfrac{r}{2}.
    \]
\end{itemize}

We define the set \(\mathcal{O}\) as follows:
\[
\mathcal{O} =
\begin{cases}
\{o_j : o_j = 1\} \setminus \{o_j'\}, & \text{if } \left| \{o_j : o_j = 1\} \right| \geq pn - \dfrac{r}{2}, \\[2ex]
\{o_j : o_j = 1\} \cup \{o_j'\}, & \text{if } \left| \{o_j : o_j = 1\} \right| < pn - \dfrac{r}{2}.
\end{cases}
\]
Therefore, we have
\begin{align}
    &\left\|\betav^*-\frac{2}{2np-p}\sum_{j:o_j=1}\xv_{n}^\top\betav^{(K+1)}_j\right\|\nonumber\\
    &\leq
    \left\|\betav^*-\frac{1}{np-\frac{p}{2}}\sum_{j:o_j\in\Oc}\betav^{(K+1)}_j\right\|+\left\|\frac{1}{np-\frac{p}{2}}\sum_{j:o_j\in\{o_j'\}}\xv_{n}^\top\betav^{(K+1)}_j\right\|\nonumber\\
    &\leq
    \left\|\betav^*-\frac{1}{np-\frac{p}{2}}\sum_{j:o_j\in\Oc}\betav^{(K+1)}_j\right\|+\frac{C_{\xv}b_{\xv}\epsilon}{np-\frac{p}{2}}\label{ineq:E.13},
\end{align}
where \Cref{ineq:E.13} follows from \Cref{lemma:lemma-ucb-beta}. Observe that
\begin{align*}
    \left\|\betav^*-\frac{1}{np-\frac{p}{2}}\sum_{j:o_j\in\Oc}\betav^{(K+1)}_j\right\|\leq\frac{1}{np-\frac{p}{2}}\sum_{j:o_j\in \Oc}\left\|\betav^*-\betav^{(K+1)}_j\right\|.
\end{align*}
Then, from \Cref{thm:thm-concentration-W}, the following inequality holds with probability at least \(1-3n\delta\):
\begin{align}
        \sum_{i:o_{2i+1}\in\Oc}\|\betav^*-\betav^{(K+1)}_{2i+1}\|
        \leq
        \sum_{i:o_{2i+1}\in\Oc}\|\betav^*-\betav^{(K+1)}_{2i+1}\|_1
        \leq
        \sum_{i:o_{2i+1}\in\Oc}\|\betav^*\|_1e^{-\alpha_iK}
        ,\label{ineq:lemma5-32}
\end{align}
where
\begin{align*}
       \alpha_i = - \log \Bigg(1-\frac{2}{3}\gamma+\gamma(2 S -1)\sqrt{\frac{\log d-\log\delta}{ic}}+\gamma\sqrt{\frac{\log S -\log\delta}{ic}} \Bigg).
\end{align*}
From Cauchy–Schwarz Inequality we have:
\begin{align}
    \gamma(2 S -1)\sqrt{\frac{\log d-\log\delta}{ic}}+\gamma\sqrt{\frac{\log S -\log\delta}{ic}}
    &\leq
    2\gamma(2 S -1)\sqrt{\frac{\log d+\log S -\log\delta}{ic}}.\label{ineq:lemma5-33}
\end{align}
Combining Inequalities \ref{ineq:lemma5-32} and \ref{ineq:lemma5-33} gives
\begin{align*}
    \sum_{i:o_{2i+1}\in\Oc}\|\betav^*-\betav^{(K+1)}_{2i+1}\|
    \leq
    \|\betav^*\|_1\sum_{i:o_{2i+1}\in\Oc}
    \mleft(1-\frac{2}{3}\gamma+2\gamma(2 S -1)\sqrt{\frac{\log d+\log S -\log\delta}{ic}}\mright)^{K}.
\end{align*}
Note that, by induction, it is straightforward to prove that the following inequality holds if \(1 - \frac{2}{3}\gamma \leq 1\):
\begin{align}
    &\mleft(1 - \frac{2}{3}\gamma + 2\gamma(2S - 1)\sqrt{\frac{\log d + \log S - \log \delta}{ic}}\mright)^{K} \nonumber \\
    &\leq \mleft(1 - \frac{2}{3}\gamma\mright)^{K} + \eta_i K \mleft(1 - \frac{2}{3}\gamma\mright)^{K-1} \sqrt{\frac{\log d + \log S - \log \delta}{ic}}, \label{ineq:lemma5-34}
\end{align}
when \(\eta_i\) satisfies
\begin{align}
    \eta_i \geq \frac{\mleft(1 - \frac{2}{3}\gamma\mright)}{\mleft(1 - \frac{2}{3}\gamma\mright) - (K - 1)\sqrt{\frac{\log d + \log S - \log \delta}{ic}}},\label{ineq:lemma5-35}
\end{align}
which can be satisfied by simply setting
\begin{align}
    \eta_{i} = \eta = \frac{\mleft(1 - \frac{2}{3}\gamma\mright)}{\mleft(1 - \frac{2}{3}\gamma\mright) - (K - 1)\sqrt{\frac{\log d + \log S - \log \delta}{N_0 c}}}. \label{def:lemma5-36}
\end{align}
Therefore, Inequality \eqref{ineq:lemma5-35} holds for any \(i \geq N_0\). Based on Inequality \eqref{ineq:lemma5-34}, we have
\begin{align}
    &\sum_{i:o_{2i+1}\in\Oc}\mleft(1-\frac{2}{3}\gamma+2\gamma(2 S -1)\sqrt{\frac{\log d+\log S -\log\delta}{ic}}\mright)^{K}\nonumber\\
    &\quad\leq
    \sum_{i=N_0+1}^{N_0+np-\frac{p}{2}}\mleft(1-\frac{2}{3}\gamma+2\gamma(2 S -1)\sqrt{\frac{\log d+\log S -\log\delta}{ic}}\mright)^{K}\nonumber\\
    &\quad\leq
    \sum_{i=N_0+1}^{N_0+np-\frac{p}{2}}\mleft(1-\frac{2}{3}\gamma\mright)^{K}+\eta K\mleft(1-\frac{2}{3}\gamma\mright)^{ K-1}
    \sum_{i=N_0+1}^{N_0+np-\frac{p}{2}}
    \sqrt{\frac{\log d+\log S -\log\delta}{ic}}\nonumber\\
    &\quad\leq
    (np-\frac{p}{2})\mleft(1-\frac{2}{3}\gamma\mright)^{K}
    +
    \eta K\mleft(1-\frac{2}{3}\gamma\mright)^{K-1}\sqrt{\frac{\log d+\log S -\log\delta}{c}}
    \sum_{i=N_0+1}^{N_0+np-\frac{p}{2}}\frac{1}{\sqrt{i}}\nonumber\\
    &\quad\leq
    (np-\frac{p}{2})\mleft(1-\frac{2}{3}\gamma\mright)^{K}
    +
    \eta K\mleft(1-\frac{2}{3}\gamma\mright)^{K-1}\sqrt{\frac{\log d+\log S -\log\delta}{c}}
   2\mleft(\sqrt{N_0+np-\frac{p}{2}}-\sqrt{N_0}\mright).\label{ineq:lemma5-36}
\end{align}
By combining Inequalities \eqref{ineq:lemma5-32} and \eqref{def:lemma5-36}, we obtain
\begin{align*}
    &\sum_{i:o_{2i+1}\in\Oc}\|\betav^*-\betav^{(K+1)}_{2i+1}\| \\
    &\leq
    (np-\frac{p}{2})\mleft(1-\frac{2}{3}\gamma\mright)^{K}
    +
    \eta K\mleft(1-\frac{2}{3}\gamma\mright)^{K-1}\sqrt{\frac{\log d+\log S -\log\delta}{c}}
   \\
    &\leq
    (np-\frac{p}{2}) \mleft(1 - \frac{2}{3}\gamma\mright)^{K}
    +
    2\eta K\sqrt{np-\frac{p}{2}} \mleft(1 - \frac{2}{3}\gamma\mright)^{K-1} \sqrt{\frac{\log d + \log S - \log \delta}{c}}.
\end{align*}
Therefore,
\begin{align}
    \frac{1}{np - \frac{p}{2}} \sum_{i=N_0+1}^{n} \|\betav^* - \betav^{(K+1)}_{2i+1}\|
    \leq
     \mleft(1 - \frac{2}{3}\gamma\mright)^{K}
    +
    \frac{2\sqrt{2}\eta K}{\sqrt{np}}
     \mleft(1 - \frac{2}{3}\gamma\mright)^{K-1} \sqrt{\frac{\log d + \log S - \log \delta}{c}}.
     \label{ineq:lemma5-37}
\end{align}
Combining \Cref{ineq:E.12}, \Cref{ineq:E.13} and \Cref{ineq:lemma5-37} we obtain
\begin{align*}
     &\|y_n-\hat{y}_n\|\\
     &\leq
     b_{\xv}\mleft(1 - \frac{2}{3}\gamma\mright)^{K}
    +
    \frac{2\sqrt{2}b_{\xv}\eta K}{\sqrt{np}}
     \mleft(1 - \frac{2}{3}\gamma\mright)^{K-1} \sqrt{\frac{\log d + \log S - \log \delta}{c}}
     +
     \frac{2\epsilon}{np}|B'|
     +
     \frac{2C_{\xv}b_{\xv}\epsilon}{np}.
\end{align*}
Set \(\epsilon =K\sqrt{np}(1-2\gamma/3)^K\), we obtain
\begin{align*}
     \|y_n-\hat{y}_n\|
     \leq
     b_{\xv}\mleft(1 - \frac{2}{3}\gamma\mright)^{K}
    +
    \frac{c_4 K}{\sqrt{n}}
     \mleft(1 - \frac{2}{3}\gamma\mright)^{K-1},
\end{align*}
where
\begin{align*}
    c_4 = \frac{2\sqrt{2}b_{\xv}\eta }{\sqrt{p}}\sqrt{\frac{\log d + \log S - \log \delta}{c}}+\frac{2|B'|}{\sqrt{p}}+\frac{2C_{\xv}b_{\xv}}{\sqrt{p}}.
\end{align*}
Then, the theorem follows by denoting \(\delta'\) as \(\delta' = \Pb\{\neg \Ec_1\} + \Pb\{\neg \Ec_2(K\sqrt{np}(1 - 2\gamma/2)^K)\).

\end{proof}

Finally, we establish the result concerning the label prediction loss presented in \Cref{thm:prediction-linear-read} through the following corollary:
\begin{corollary}\label{corollary:d.2}
    Suppose Assumption \ref{assump:basic} holds.
    For a Transformer with \(K\) layers, where all layer structures are described in \Cref{sec:tf-alista-like}, and the input sequence is set as in \Cref{def:input-seq-structure}, there exists a set of parameters in the Transformer and an explicitly quadratic readout function \(F\) such that, if \(n \geq N_0\), the following inequality holds with probability at least \(1-\delta\):
    \begin{align*}
        \|y_{n+1} - \hat{y}_{n+1}\| \leq c_6 e^{-\alpha_n K},
    \end{align*}
    where \(c_6 = \sqrt{d} b_{\xv} c_1\), and \(c_1\) and \(\alpha_n\) are defined in \Cref{thm:thm-concentration-W}.
\end{corollary}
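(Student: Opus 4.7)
The plan is to reduce the label prediction error to the sparse vector estimation error established in Theorem C.1 (the equivalent form of Theorem 2). The key observation is that the query readout, unlike the linear readout, is explicitly parameterized with access to $\tilde{\Xv}$, so it can directly implement the desired inner product $\xv_{n+1}^\top \betav^{(K+1)}_{2n+1}$. Once this reduction is in place, the exponential bound on $\|\betav^{(K+1)}_{2n+1}-\betav^*\|$ transfers almost verbatim to the prediction error.

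First, I would choose a specific $\Vv \in \Rb^{(2d+2)\times d}$ that acts as an ``extract the $\betav$-block'' operator on tokens. By Proposition 2 (the structural invariance through layers established in the proof of Theorem~\ref{thm:tf-implement-alista}), the token at the odd position $2n+1$ after $K$ layers has the form
\begin{align*}
\hv_{2n+1}^{(K+1)} = \bigl(\xv_{n+1}^\top,\,0,\,(\betav^{(K+1)}_{2n+1})^\top,\,1\bigr)^\top.
\end{align*}
Setting $\Vv$ to be $\Iv_{d\times d}$ in rows $d+2$ through $2d+1$ and zero elsewhere, I obtain $(\hv_{2n+1}^{(K+1)})^\top \Vv = (\betav^{(K+1)}_{2n+1})^\top$. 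Since $\tilde{\Xv}_{n+1,:}=\xv_{n+1}$ in the query readout, this produces $\hat{y}_{n+1} = (\betav^{(K+1)}_{2n+1})^\top \xv_{n+1}$.

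Second, under the noiseless assumption in Assumption~\ref{assump:basic} we have $y_{n+1}=\xv_{n+1}^\top \betav^*$, so the prediction error decomposes cleanly:
\begin{align*}
|y_{n+1}-\hat{y}_{n+1}|
=\bigl|\xv_{n+1}^\top(\betav^*-\betav^{(K+1)}_{2n+1})\bigr|
\leq \|\xv_{n+1}\|_1\cdot\|\betav^{(K+1)}_{2n+1}-\betav^*\|_\infty
\end{align*}
by H\"older's inequality. Using $\|\xv_{n+1}\|_1\leq \sqrt{d}\,\|\xv_{n+1}\|_2\leq \sqrt{d}\,b_{\xv}$ and $\|\cdot\|_\infty\leq \|\cdot\|_2$, the right-hand side is bounded by $\sqrt{d}\,b_{\xv}\cdot \|\betav^{(K+1)}_{2n+1}-\betav^*\|$.

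Finally, for $n\geq N_0$ I would invoke Theorem~\ref{thm:c1}, which yields $\|\betav^{(K+1)}_{2n+1}-\betav^*\|\leq c_1 e^{-\alpha_n K}$ with probability at least $1-\delta$ (after an obvious rescaling of $\delta$ by the constant factor of $3$ coming from the three high-probability events in Lemma~\ref{lemma:lemma4}). Combining the estimates gives $|y_{n+1}-\hat{y}_{n+1}|\leq \sqrt{d}\,b_{\xv}\,c_1\, e^{-\alpha_n K} = c_6 e^{-\alpha_n K}$, matching the stated bound. There is no real obstacle here: Theorem~\ref{thm:c1} already does the hard work of proving linear-in-$K$ convergence for the underlying LISTA-VM updates, and the query readout is tailor-made so that the final bilinear step $\xv_{n+1}^\top \betav$ incurs no extra convergence penalty; the only care needed is the off-by-one bookkeeping to confirm that the token $\hv_{2n+1}^{(K+1)}$ simultaneously carries both $\xv_{n+1}$ (through the first block, which is irrelevant to the chosen $\Vv$) and $\betav^{(K+1)}_{2n+1}$ (which is extracted by $\Vv$), while $\tilde{\Xv}_{n+1,:}$ supplies the $\xv_{n+1}$ used in the inner product.
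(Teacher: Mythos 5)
Your proposal is correct and follows essentially the same route as the paper's own proof: choose $\Vv$ to extract the $\betav$-block so the query readout yields $\hat{y}_{n+1}=\xv_{n+1}^\top\betav^{(K+1)}_{2n+1}$, bound $|y_{n+1}-\hat{y}_{n+1}|\leq \sqrt{d}\,b_{\xv}\,\|\betav^{(K+1)}_{2n+1}-\betav^*\|$ (the paper uses Cauchy--Schwarz where you use H\"older plus norm conversions, yielding the same constant $c_6=\sqrt{d}\,b_{\xv}c_1$), and then invoke the convergence guarantee of \Cref{thm:thm-concentration-W}. The only cosmetic difference is your explicit remark about rescaling $\delta$ to reconcile the $1-3\delta$ probability in \Cref{thm:c1} with the stated $1-\delta$, which the paper leaves implicit.
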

\begin{proof}[Proof of Corollary \ref{corollary:d.2}]

We set the $\Vv$ matrix in the read out function $F_{\Vv}$ as
\begin{align*}
    \Vv = \begin{bmatrix}
        \mathbf{0}_{(d\!+\!1)\!\times\!d}\\
        \mathbf{I}_{d\!\times\!d}\\
        \mathbf{0}_{1\!\times \!d}
    \end{bmatrix}.
\end{align*}
From \Cref{prop:prop2}, when $i$ is an odd index, $\hv_i^{(K+1)}$ is of the following structure:
\begin{align*}
    \hv_i^{(K+1)} = 
    \begin{bmatrix}
        [\Xv ^{\top}]_{:,\frac{i+1}{2}}\\
        0\\
        \betav^{(K+1)}_{i}\\
        1
    \end{bmatrix}.
\end{align*}
Therefore, the output after the read-out function is
\begin{align*}
    \hat{y}_{\frac{i+1}{2}} = F_{\Vv}(\hv_i^{(K+1)}) = [\Xv]_{\frac{i+1}{2},:} \betav^{(K+1)}_i.
\end{align*}
Note that $\mleft\|[\Xv]_{\frac{i+1}{2},:}\mright\|\leq \sqrt{d}b_{\xv}$, we have
\begin{align*}
    \mleft\|y_{\frac{i+1}{2}}-\hat{y}_{\frac{i+1}{2}}\mright\|\leq \mleft\|[\Xv]_{\frac{i+1}{2},:}\mright\|\mleft\|\betav^*-\betav^{(K+1)}_i\mright\|\leq \sqrt{d}b_{\xv}\mleft\|\betav^*-\betav^{(K+1)}_i\mright\|.
\end{align*}
Combining with \Cref{thm:thm-concentration-W}, the proof is thus complete.
\end{proof}
Therefore, by combining Corollary \ref{corollary:d.2} and \Cref{thm:6}, we arrive at the conclusion of \Cref{thm:prediction-linear-read}.

\newpage
\section{Additional Experiment Results}\label{sec:add-exp}
In \Cref{sec:exp}, we demonstrate that classical LISTA-type algorithms, such as LISTA, perform poorly when applied to varying measurement matrices after being trained on a fixed measurement matrix \(\Xv\). In this section, we further show that unlike LISTA-VM, these classical LISTA-type algorithms fail to handle in-context sparse recovery problems, even when trained on varying measurements.

The training setup is identical to how we train LISTA-VM. Specifically, we set the number of iterations to \(K = 12\). During each epoch, we randomly sample 100 measurement matrices, each generating 500 instances from 500 randomly generated sparse vectors, resulting in a total of 50,000 instances per epoch. For each epoch, we minimize the sparse vector prediction loss \(\sum_{j=1}^N \|\hat{\betav}_j - \betav_j\|^2\) using gradient descent. We train the model for a total of 340 epochs and conduct in-context testing at the end of every epoch. 

In the results shown in \Cref{fig:meta-train}, we observe that the prediction error on varying \(\Xv\) for meta-trained LISTA and LISTA-CP remains around \(3\). As illustrated in the figure, there is no observable trend indicating improvement in the error throughout the training process.

\begin{figure}[h]
\begin{subfigure}{0.5\linewidth}
 \centering
     \includegraphics[width=\textwidth]{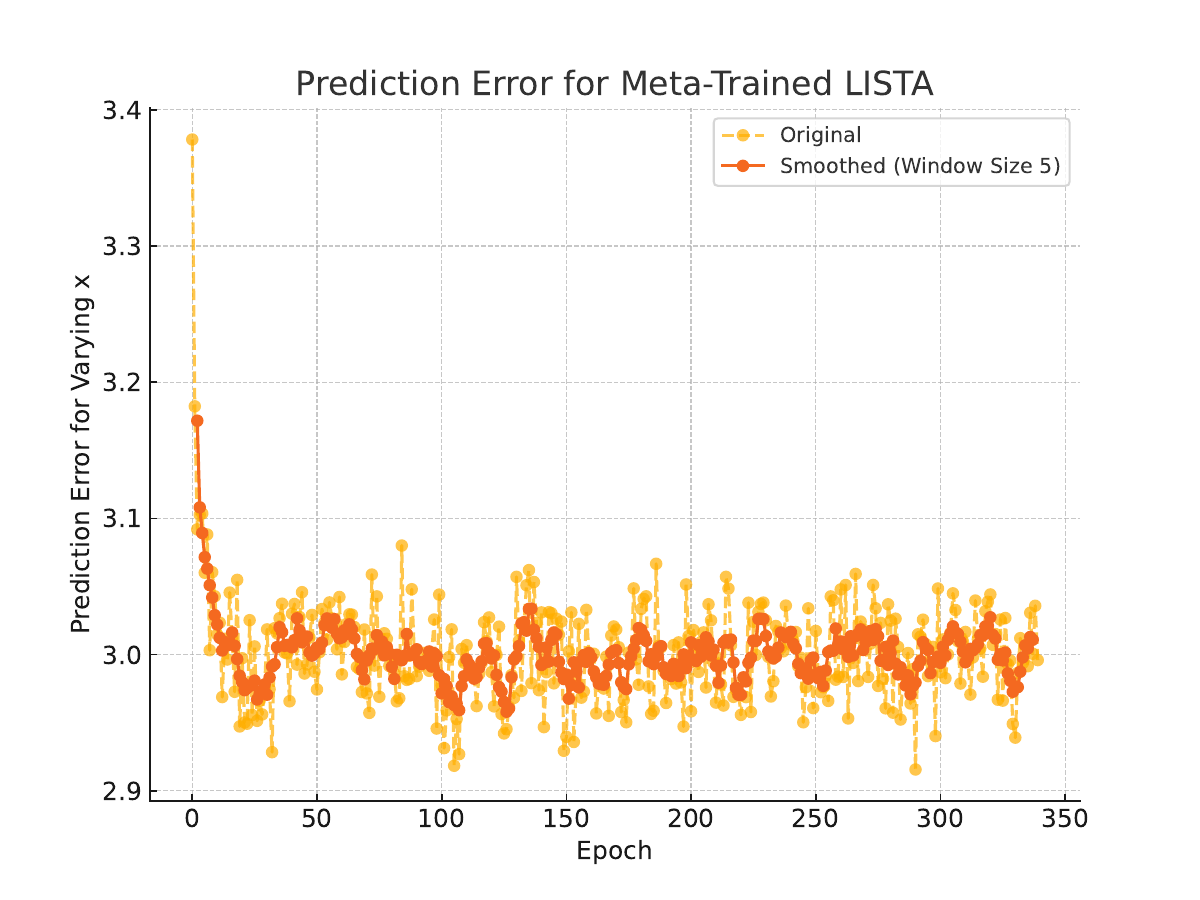}
     \caption{LISTA}
 \end{subfigure}
 \begin{subfigure}{0.5\linewidth}
 \centering
     \includegraphics[width=\textwidth]{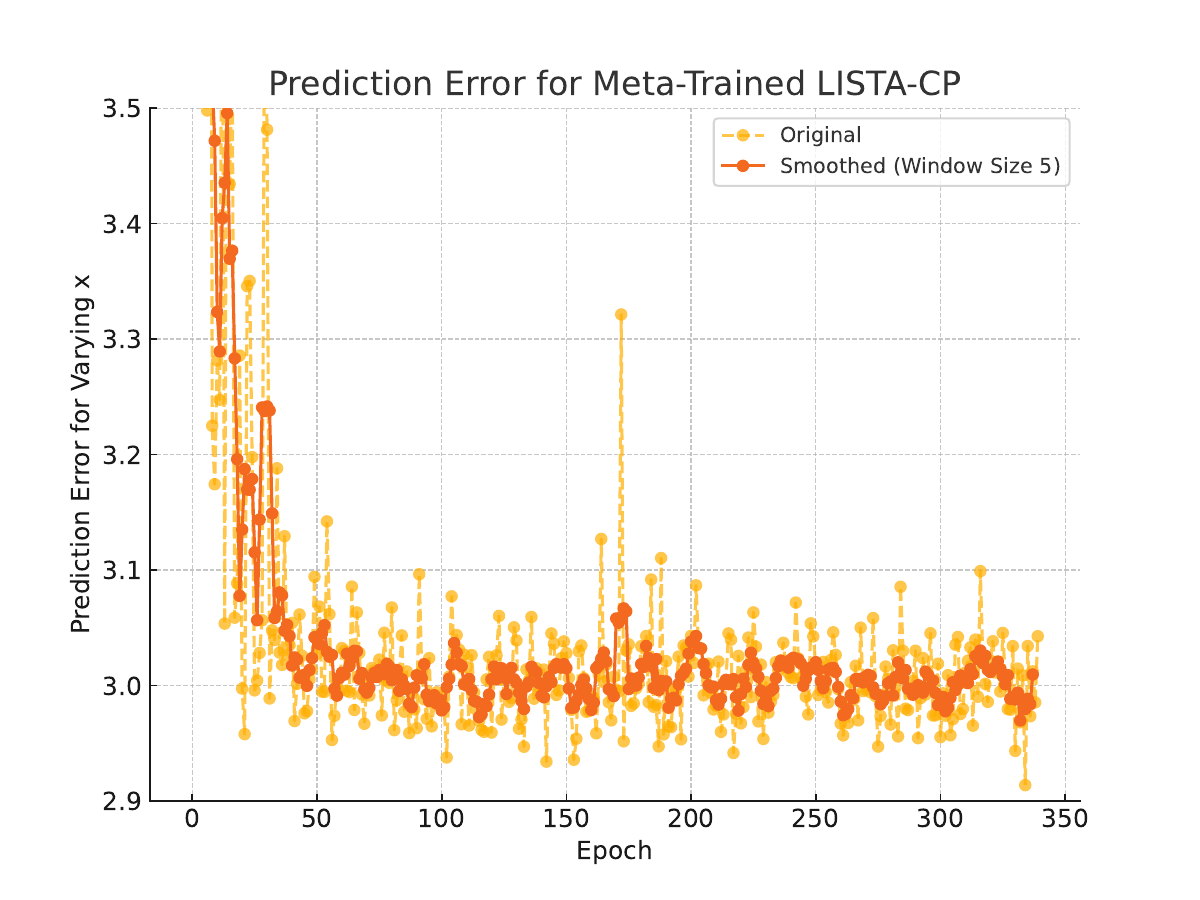}
     \caption{LISAT-CP}
 \end{subfigure}\label{fig:add}
 \scriptsize
 \caption{\small Experimental results for meta-trained classic LISTA-type algorithms. }
 \label{fig:meta-train}
 \vspace{-0.2cm}
    \scriptsize
\end{figure}
For comparison, in \Cref{fig:lista-vm}, we also provide the results of the prediction error on varying \(\Xv\) for meta-trained LISTA-VM. The final prediction error is around \(0.68\), which is significantly more promising compared to classic LISTA-type algorithms.

\begin{figure}
    \centering
    \includegraphics[width=0.5\linewidth]{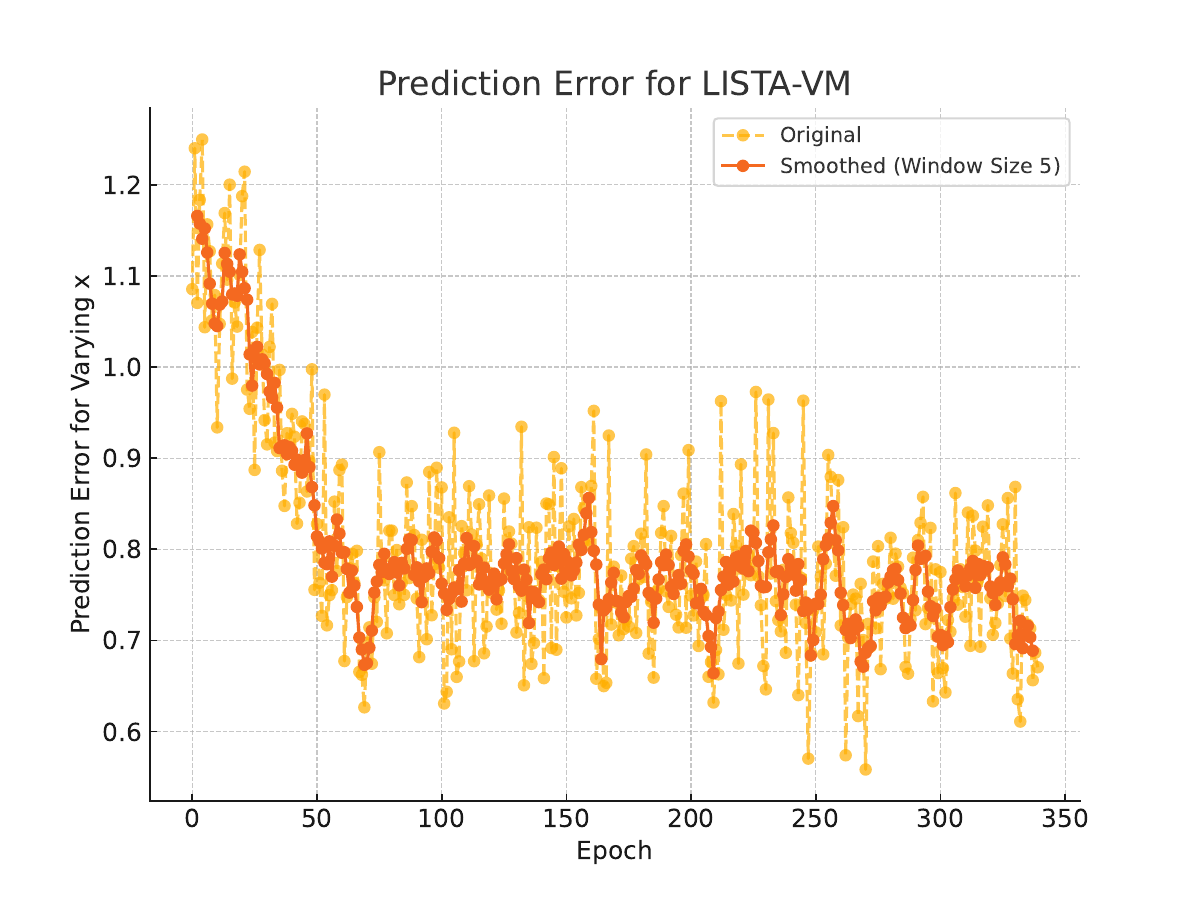}
    \caption{Experimental results for meta-trained LISTA-VM}
    \label{fig:lista-vm}
\end{figure}


\end{document}